\renewcommand{\O}{\mathbb{O}}
\newcommand{\R}{\mathbb{R}}
\def \real    { \mathbb{R} }
\newcommand{\e}{\begin{equation}}
\newcommand{\ee}{\end{equation}}
\newcommand{\en}{\begin{equation*}}
\newcommand{\een}{\end{equation*}}
\newcommand{\eqn}{\begin{eqnarray}}
\newcommand{\eeqn}{\end{eqnarray}}
\newcommand{\bmat}{\begin{bmatrix}}
\newcommand{\emat}{\end{bmatrix}}
\DeclareMathAlphabet\mathbfcal{OMS}{cmsy}{b}{n}
\renewcommand{\P}[1]{\operatorname{\mathbb{P}}\left(#1\right)}
\newcommand{\E}{\operatorname{\mathbb{E}}}
\newcommand{\vct}[1]{\boldsymbol{#1}}
\newcommand{\mtx}[1]{\boldsymbol{#1}}
\newcommand{\<}{\langle}
\renewcommand{\>}{\rangle}
\newcommand{\trace}{\operatorname{trace}}
\newcommand{\rank}{\operatorname{rank}}
\newcommand{\dist}{\operatorname{dist}}
\newcommand{\set}[1]{\mathbb{#1}}
\DeclareMathOperator*{\argmin}{\text{arg~min}}
\DeclareMathOperator*{\argmax}{\text{arg~max}}
\def \st {\operatorname*{s.t.\ }}
\newcommand{\wh}{\widehat}
\newcommand{\wt}{\widetilde}
\newcommand{\ol}{\overline}
\newcommand{\parans}[1]{\left(#1\right)}
\newcommand{\calA}{\mathcal{A}}
\newcommand{\calB}{\mathcal{B}}
\newcommand{\calH}{\mathcal{H}}
\newcommand{\calI}{\mathcal{I}}
\newcommand{\calN}{\mathcal{N}}
\newcommand{\calP}{\mathcal{P}}
\newcommand{\calX}{\mathcal{X}}
\newcommand{\calY}{\mathcal{Y}}
\newcommand{\calZ}{\mathcal{Z}}
\newcommand{\va}{\vct{a}}
\newcommand{\vh}{\vct{h}}
\newcommand{\vr}{\vct{r}}
\newcommand{\vu}{\vct{u}}
\newcommand{\vv}{\vct{v}}
\newcommand{\vx}{\vct{x}}
\newcommand{\vy}{\vct{y}}
\newcommand{\vepsilon}{\vct{\epsilon}}
\newcommand{\mA}{\mtx{A}}
\newcommand{\mB}{\mtx{B}}
\newcommand{\mC}{\mtx{C}}
\newcommand{\mD}{\mtx{D}}
\newcommand{\mE}{\mtx{E}}
\newcommand{\mH}{\mtx{H}}
\newcommand{\mP}{\mtx{P}}
\newcommand{\mR}{\mtx{R}}
\newcommand{\mU}{\mtx{U}}
\newcommand{\mV}{\mtx{V}}
\newcommand{\mX}{\mtx{X}}
\newcommand{\mY}{\mtx{Y}}
\newcommand{\mSigma}{\mtx{\Sigma}}
\newcommand{\mId}{{\bf I}}
\newcommand{\setX}{\set{X}}
\newlength{\imgwidth}
\newcommand{\twoCol}[2]{\ifthenelse{\boolean{twoColVersion}} {#1} {#2} }
\begin{document}

\title{Guaranteed Nonconvex Factorization Approach \\ for Tensor Train Recovery}

\author{\name Zhen~Qin \email qin.660@osu.edu \\
       \addr Department of Computer Science and Engineering\\
       Ohio State University\\
       Columbus, Ohio 43201, USA
       \AND
       \name Michael~B.~Wakin \email mwakin@mines.edu \\
       \addr Department of Electrical Engineering\\
       Colorado School of Mines\\
       Golden, Colorado 80401, USA
       \AND
       \name Zhihui~Zhu \email zhu.3440@osu.edu \\
       \addr Department of Computer Science and Engineering\\
       Ohio State University\\
       Columbus, Ohio 43201, USA}

\editor{Shiqian Ma}


\maketitle

\begin{abstract}
Tensor train (TT) decomposition represents an order-$N$ tensor using $O(N)$ order-$3$ tensors (i.e., factors of small dimension), achieved through products among these factors. Due to its compact representation, TT decomposition has been widely used in the fields of signal processing, machine learning, and quantum physics. It offers benefits such as reduced memory requirements, enhanced computational efficiency, and decreased sampling complexity. Nevertheless, existing optimization algorithms with  guaranteed performance concentrate exclusively on using the TT format for reducing the optimization space in recovery problems, while still operating on the entire tensor in each iteration. There is a lack of comprehensive theoretical analysis for optimization involving the factors directly, despite the proven efficacy of such factorization methods in practice. In this paper, we provide the first convergence guarantee for the factorization approach in a TT-based recovery problem. Specifically, to avoid the scaling ambiguity and to facilitate theoretical analysis, we optimize over the so-called left-orthogonal TT format which enforces orthonormality among most of the factors.
To ensure the orthonormal structure, we utilize the Riemannian gradient descent (RGD) for optimizing those factors over the Stiefel manifold.
We first delve into the TT factorization/decomposition problem and establish the local linear convergence of RGD. Notably, the rate of convergence only experiences a linear decline as the tensor order increases. We then study the sensing problem that aims to recover a TT format tensor from linear measurements. Assuming the sensing operator satisfies the restricted isometry property (RIP), we show that with a proper initialization, which could be obtained through spectral initialization, RGD also converges to the ground-truth tensor at a linear rate. Furthermore, we expand our analysis to encompass scenarios involving Gaussian noise in the measurements. We prove that RGD can reliably recover the ground truth at a linear rate, with the recovery error exhibiting only polynomial growth in relation to the tensor order $N$. We conduct various experiments to validate our theoretical findings.

\end{abstract}

\begin{keywords}
  Tensor-train decomposition, factorization approach, Riemannian gradient descent, linear convergence
\end{keywords}

\section{Introduction}
\label{introduction}

Tensor estimation is a crucial task in various scientific and engineering fields, including signal processing and machine learning \citep{CichockiMagTensor15,SidiropoulosTSPTENSOR17}, communication \citep{SidiropoulosBlind}, chemometrics \citep{Smilde04,AcarUnsup09}, genetic engineering \citep{HoreNature16}, and so on. When dealing with an order-$N$ tensor $\calX\in\R^{d_1\times\dots\times d_N}$, its exponentially increasing size with respect to $N$ poses significant challenges in both memory and computation. To address this issue, tensor decomposition, which provides a compact representation of a tensor, has gained popularity in practical applications.  The widely used tensor decompositions include the canonical polyadic (CP) \citep{Bro97}, Tucker \citep{Tucker66}, and tensor train (TT) \citep{Oseledets11} decompositions.  These three formats have their pros and cons.
The CP decomposition offers a storage advantage as it requires the least amount of storage, scaling linearly with $N$. 
However, determining the CP rank of a tensor is generally an NP-hard problem, as are tasks such as CP decomposition \citep{haastad1989tensor, de2008tensor, kolda2009tensor,cai2019nonconvex}. On the contrary, the Tucker decomposition can be approximately computed using the higher-order singular value decomposition. However, when representing a tensor using the Tucker decomposition, the size of the core tensor still grows exponentially in terms of $N$. This leads to significant memory consumption, making the Tucker decomposition more suitable for low-order tensors than for high-order ones.

In comparison, the \emph{TT format} provides a balanced representation: in many cases it requires $O(N)$ parameters, while its quasi-optimal decomposition can be obtained through a sequential singular value decomposition (SVD) algorithm, commonly referred to as the tensor train SVD (TT-SVD) \citep{Oseledets11}. Specifically, the $(s_1,\dots,s_N)$-th element of $\calX$ in the \emph{TT format} can be expressed as the following matrix product form \citep{Oseledets11}
\begin{eqnarray}
    \label{Definition of Tensor Train in intro}
    \calX(s_1,\dots,s_N)=\mX_1(:,s_1,:)\mX_2(:,s_2,:)\cdots \mX_N(:,s_N,:),
\end{eqnarray}
where the tensor factors ${\mX}_i \in\R^{r_{i-1}\times d_i \times r_i}, i=1,\dots,N$ with $r_0=r_N=1$. See \Cref{The TT fig} for an illustration.
Thus, the TT format can be represented by $N$ order-$3$ tensor factors $\{{\mX}_i\}_{i\geq 1}$, with a total of $O(N\ol d\ol r^2)$ parameters, where $\ol d = \max_i d_i$ and $\ol r = \max_i r_i$. The dimensions $\vr = (r_1,\dots, r_{N-1})$ of such a  decomposition are called the \emph{TT ranks} of $\calX$. Any tensor can be decomposed in the TT format \eqref{Definition of Tensor Train in intro} with sufficiently large TT ranks \citep[Theorem 2.1]{Oseledets11}. Indeed, there always exists a TT decomposition with $r_i \le\min\{\Pi_{j=1}^{i}d_j, \Pi_{j=i+1}^{N}d_j\}$ for any $i\ge 1$. We say a TT format tensor is low-rank if $r_i$ is much smaller compared to $\min\{\Pi_{j=1}^{i}d_j, \Pi_{j=i+1}^{N}d_j\}$ for most indices\footnote{When $i = 1$ or $N-1$, $r_1$ or $r_{N-1}$ may not be much smaller than $d_1$ or $d_N$.} $i$ so that the total number of parameters in the tensor factors $\{\mX_i\}$ is much smaller than the number of entries in $\calX$. We refer to any tensor for which such a low-rank TT decomposition exists as a {\it low-TT-rank} tensor.

\begin{figure}[t]
\centering
\includegraphics[width=12cm, keepaspectratio]%
{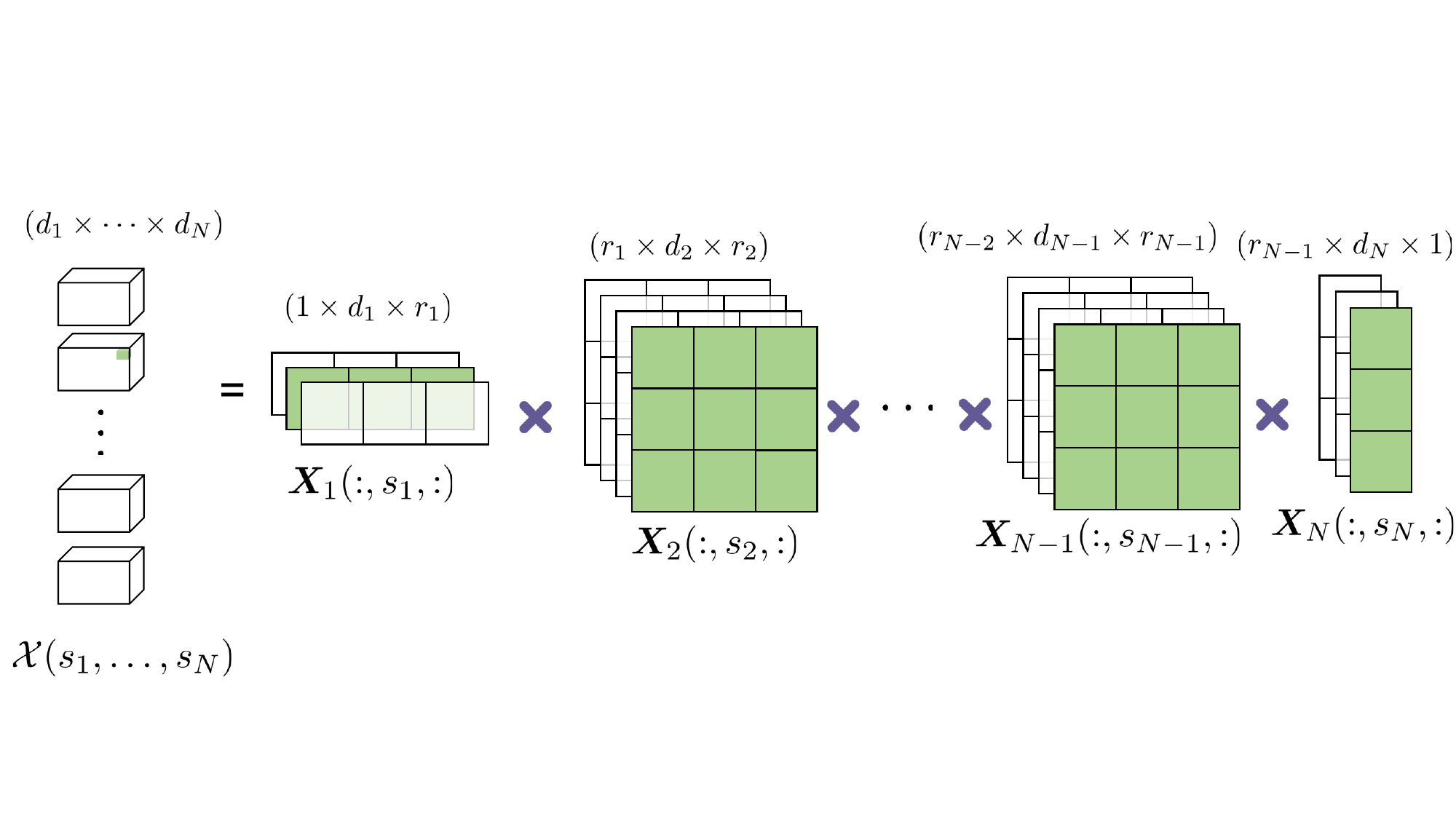}
\vspace{-1.2cm}
\caption{Illustration of the TT format \eqref{Definition of Tensor Train in intro}.}
\label{The TT fig}
\end{figure}

Due to its compact representation, the TT decomposition with small TT ranks has found extensive applications in various fields. For instance, it has been widely used for image compression \citep{latorre2005image, bengua2017efficient}, analyzing theoretical properties of deep networks \citep{khrulkov2017expressive}, network compression or tensor networks \citep{stoudenmire2016supervised, novikov2015tensorizing, yang2017tensor, tjandra2017compressing, yu2017long, ma2019tensorized}, recommendation systems \citep{frolov2017tensor}, probabilistic model estimation \citep{novikov2021tensor}, learning of Hidden Markov Models \citep{kuznetsov2019tensor}, and more.
Notably, as equivalents to the TT decomposition, the matrix product state (MPS) and matrix product operator (MPO) decompositions have been introduced in the quantum physics community for efficiently and concisely representing quantum states.  In this context, the parameter $N$ represents the number of qubits in the many-body system \citep{verstraete2006matrix, verstraete2008matrix, schollwock2011density}. The concise representation provided by MPS and MPO is particularly valuable in quantum state tomography, as it allows us to observe a quantum state using computational and experimental resources that grow polynomially rather than exponentially with the number of qubits $N$ \citep{ohliger2013efficient}.

A fundamental challenge in many of the aforementioned applications is to construct a  low-TT-rank tensor from highly incomplete measurements of that tensor. 
The work \citep{bengua2017efficient,wang2019latent} extends a nuclear-norm based convex relaxation approach from the matrix case to the TT case, but its high computational complexity makes it impractical for higher-order tensors. Alternating minimization \citep{wang2016tensor} and gradient descent \citep{yuan2019high} have been employed to efficiently estimate the factors in the TT format, but theoretical guarantees regarding recovery error or convergence properties are not provided. Besides these heuristic algorithms,  iterative hard thresholding (IHT) \citep{Rauhut17,rauhut2015tensor} and Riemannian gradient descent on the TT manifold \citep{budzinskiy2021tensor,wang2019tensor, Cai2022provable} have been proposed with local convergence guarantees. However, both methods necessitate the estimation of the entire tensor $\calX$ in each iteration, which poses a challenge due to its exponential size in terms of $N$. As a result, both methods demand an exponential amount of storage or memory. 
In addition, theoretical results of IHT hinge on an unverified perturbation bound for TT-SVD.
The Riemannian gradient descent method relies on the curvature information at the target tensor, which is often unknown a priori.

Instead of optimizing over the tensor $\calX$ directly, in this paper, we focus on optimizing over the factors $\{\mX_i\}_{i\geq 1}$ in the TT format. This factorization approach can significantly reduce the memory cost and has found widespread applications. For instance, gradient descent-based optimization on the TT factors has been successfully applied in various areas, including the TT deep computation model \citep{zhang2018tensor}, TT deep neural networks \citep{qi2022exploiting}, TT completion \citep{yuan2019tensor}, channel estimation \citep{zhang2021designing} and quantum tomography \citep{lidiak2022quantum}. However, to the best of our knowledge, there is a lack of rigorous convergence analysis for the TT factorization approaches.

\paragraph{Challenges:} One of the main challenges in studying the convergence analysis of iterative algorithms for the factorization approach lies in the form of products among multiple matrices in \eqref{Definition of Tensor Train in intro}. For instance, the TT factorization is not unique, and there exist infinitely many equivalent factorizations. In particular, for any factorization $\{{\mX}_1,\dots,{\mX}_N\}$,  $\{{\mX}_1 \mP_1, \mP_1^{-1}\mX_2 \mP_2,\ldots,\mP_{N-1}^{-1}\mX_N\}$ is also a TT factorization of $\calX$ for any invertible matrices $\mP_i\in\R^{r_i\times r_i}, i\in[N-1]$, where $[N-1] = \{1,\ldots,N-1\}$ and $\mP_{i-1}^{-1}\mX_i \mP_i$ refers to $\mP_{i-1}^{-1}\mX_i(:,s_i,:) \mP_i$ for all $s_i\in[d_i]$. This implies that the factors could be unbalanced (e.g., $\mP_i = t\mId$ with either very large or small $t$), which makes the convergence analysis difficult.  In matrix factorization, a regularizer is often used to address this scaling ambiguity (i.e., to reduce the search space of factors) and balance the energy of the two factors \citep{Tu16,park2017non,Zhu18TSP}. Motivated by these results, one may adopt the same trick by adding regularizers to balance any pair of consecutive factors $(\mX_i,\mX_{i+1})$. However,  this strategy could be intricate, given that modifying $\mX_{i}$ to achieve a balance between the pair $(\mX_i,\mX_{i+1})$ will similarly impact another pair, namely, $(\mX_{i-1},\mX_{i})$.

\paragraph{Our contributions:}
In this paper, we study the factorization approach for the TT sensing problem, where the goal is to recover the underlying  low-TT-rank tensor $\calX^\star$ through its linear measurements $\vy = \calA(\calX^\star)$, where the linear mapping $\calA:\R^{d_1\times\cdots\times d_N}\rightarrow \R^m$ denotes the sensing operator. To address the  ambiguity issue in the factorization approach,  we consider the so-called left-canonical TT format that restricts all of the factors except the last one to be orthonormal, i.e., $\sum_{s_i = 1}^{d_i}\mX_i^\top(:,s_i,:)\mX_i(:,s_i,:) = \mId_{r_i}, i \in [N-1]$. Further details on the left-canonical form are described in \Cref{GD algorithm in TF}.
For a collection of factors $\{\mX_i\}$, to simplify the notation, we will denote by $[\mX_1,\dots, \mX_N]$ the corresponding TT format tensor $\calX$ with entries expressed in \eqref{Definition of Tensor Train in intro}. We then attempt to recover the underlying  low-TT-rank tensor by solving the following TT factorized optimization problem
\begin{eqnarray}
    \label{Loss Function of general tensor sensing intro}
    \begin{split}
    \min_{\mbox{\tiny$\begin{array}{c}
     {\mX_i}\in\R^{r_{i-1}\times d_i \times r_i},\\
     i\in [N]\end{array}$}} &\frac{1}{2m}\|\calA([\mX_1,\dots, \mX_N]) - \vy\|_2^2,\\
     &\st \ \sum_{s_i = 1}^{d_i}\mX_i^\top(:,s_i,:)\mX_i(:,s_i,:) = \mId_{r_i}, i \in [N-1].
    \end{split}
\end{eqnarray}
Noting that each constraint defines a Stiefel manifold, to guarantee the exact preservation of the orthonormal structure in each iteration, we propose a (hybrid) Riemannian gradient descent (RGD) algorithm to solve the above TT factorization problem. Our main contribution focuses on the convergence analysis of RGD for solving this problem.
\begin{itemize}
\item We first study the TT factorization problem where $\calA$ is an identity operator. With an appropriate distance metric on the factors, we establish the local linear convergence of RGD. Notably, the accuracy requirement on the initialization only depends polynomially on the tensor order $N$ and the rate of convergence only experiences a linear decline
as $N$ increases. This demonstrates potential advantages over  introducing additional regularizers to enforce orthogonality for each factor, as used in \citep{Han20} for the Tucker factorization, which only ensures approximate orthogonality in each iteration of gradient descent and thus is likely to suffer from exponential dependence on the tensor order $N$.
\item  We then extend the convergence analysis to the more general TT sensing problem. Under the assumption that the sensing operator satisfies the restricted isometry property (RIP)---a condition that can be satisfied with $m\gtrsim N\ol d\ol r^2\log(N\ol r)$  generic subgaussian measurements \citep{Rauhut17,qin2024quantum} (where, again, $\ol r=\max_ir_i$ and $\ol d=\max_i d_i$)---we show that RGD, given an appropriate initialization, converges to the ground-truth tensor at a linear rate. Additionally, spectral initialization provides a valid starting point for ensuring the convergence of RGD. Furthermore, we expand our analysis to noisy measurements and prove that RGD can reliably recover the ground truth at a linear rate up to an error proportional to the noise level and exhibiting only polynomial growth in the tensor order $N$.
\end{itemize}

\paragraph{Paper organization} The rest of this paper is organized as follows.
In Section~\ref{Tensor Train Decomposition}, we introduce the basic definitions of the TT format.
Section~\ref{GD algorithm in TF} and Section~\ref{LR TT sensing all} analyze the local convergence of RGD for the TT factorization and sensing problems, respectively.
Section~\ref{Numerical experiments} presents numerical experiments. Lastly, we  conclude the paper in Section~\ref{conclusion}.

\paragraph{Notations} We use calligraphic letters (e.g., $\calY$) to denote tensors,  bold capital letters (e.g., $\mY$) to denote matrices, except for $\mX_i$ which denotes the $i$-th order-$3$ tensor factor in the TT format,  bold lowercase letters (e.g., $\vy$) to denote vectors, and italic letters (e.g., $y$) to denote scalar quantities. $\|\mX\|$ and $\|\mX\|_F$ respectively represent the spectral norm and Frobenius norm of the matrix $\mX$, while
$\sigma_{i}(\mX)$ is the $i$-th singular value of $\mX$.
$\|\vx\|_2$ denotes the $l_2$ norm of the vector $\vx$.  Elements of matrices and tensors are denoted in parentheses, as in Matlab notation. For example, $\calX(s_1, s_2, s_3)$ denotes the element in position
$(s_1, s_2, s_3)$ of the order-3 tensor $\calX$.
The inner product of $\calA, \calB\in\R^{d_1\times\dots\times d_N}$ can be denoted as $\<\calA, \calB \> = \sum_{s_1=1}^{d_1}\cdots \sum_{s_N=1}^{d_N} \calA(s_1,\dots,s_N)\calB(s_1,\dots,s_N) $.
The vectorization of  $\calX\in\R^{d_1\times\dots\times d_N}$, denoted as $\text{vec}(\calX)$, transforms the tensor $\calX$ into a vector. The $(s_1, \dots, s_N)$-th element of $\calX$ can be found in the vector $\text{vec}(\calX)$ at the position $s_1 + d_1(s_2-1) + \cdots + d_1d_2 \cdots d_{N-1}(s_N-1)$.
$\|\calX\|_F = \sqrt{\<\calX, \calX \>}$ is the Frobenius norm of $\calX$.
 $\ol \otimes$ denotes the Kronecker product between submatrices in two block matrices. Its detailed definition and properties are shown in {Appendix} \ref{Technical tools used in proofs}.
For a positive integer $K$, $[K]$ denotes the set $\{1,\dots, K \}$. For two positive quantities $a,b\in \real$, the inequality $b\lesssim a$ or $b = O(a)$ means $b\leq c a$ for some universal constant $c$; likewise, $b\gtrsim a$ or $b = \Omega(a)$ indicates that $b\ge ca$ for some universal constant $c$.

\section{Preliminaries of Tensor Train Decomposition and Stiefel Manifold}
\label{Tensor Train Decomposition}

\subsection{Tensor Train Decomposition}
\label{sec:TT}

Recall the TT format of $\calX$ in \eqref{Definition of Tensor Train in intro}. Since $\mX_{i}(:,s_i,:)$ will be extensively used, we will denote it by  ${\mX_i(s_i)}\in\R^{r_{i-1}\times r_{i}}$; this matrix comprises one ``slice'' of ${\mX}_i$ with the second index being fixed at $s_i$. The $(s_1,\dots,s_N)$-th element in $\calX$ can then be written as $\calX(s_1,\dots,s_N)=\prod_{i=1}^N{\mX_i(s_i)}$.

In addition, for any two TT format tensors $\wt\calX, \wh\calX\in\R^{d_1\times\dots\times d_N}$ with factors $\{\wt \mX_i(s_i) \in \R^{\wt r_{i-1}\times \wt r_i} \}$ and $\{ \wh \mX_i(s_i) \in \R^{\wh r_{i-1}\times \wh r_i} \}$, each element of the summation $\calX =  \wt\calX + \wh\calX$ can be represented by
\begin{align}
    \label{summation of TT format}
    &\hspace{-0.25cm}\calX(s_1,\dots,s_N) = \begin{bmatrix}\wt\mX_1(s_1) \!\!\!\! & \wh\mX_1(s_1) \end{bmatrix}
\begin{bmatrix}\wt\mX_2(s_2) \!\!\!\! & {\bm 0} \\ {\bm 0} \!\!\!\! & \wh\mX_2(s_2) \end{bmatrix}
 \cdots \begin{bmatrix}\wt\mX_{N-1}(s_{N-1}) \!\!\!\! & {\bm 0} \\ {\bm 0} \!\!\!\! & \wh\mX_{N-1}(s_{N-1}) \end{bmatrix} \begin{bmatrix}\wt\mX_N(s_N) \\ \wh\mX_N(s_N) \end{bmatrix},
\end{align}
which implies that $\calX$ can also be represented in the TT format with ranks  $r_i\leq \wt r_i + \wh r_i$ for $i = 1, \dots, N-1$.

\paragraph{Canonical form}

The decomposition of a tensor $\calX$ into the form \eqref{Definition of Tensor Train in intro} is generally not unique: not only are the factors ${\mX_i(s_i)}$ not unique, but also the dimensions of these factors can vary. To introduce the factorization with the smallest possible dimensions $\vr = (r_1,\ldots,r_{N-1})$, for convenience, for each $i$, we put $\{ {\mX_i(s_i)}\}_{s_i=1}^{d_i}$ together into the following two forms
$$L(\mX_i)=\begin{bmatrix}\mX_i(1) \\ \vdots\\  \mX_i(d_i) \end{bmatrix}\in\R^{(r_{i-1}d_i) \times r_i},$$ $$R(\mX_i)=\begin{bmatrix}\mX_i(1) &  \cdots &  \mX_i(d_i) \end{bmatrix}\in\R^{r_{i-1}\times (d_ir_i)},$$
where $L(\mX_i)$ and $R(\mX_i)$ are often called the left and right unfoldings of $\mX_i$, respectively, if we view $\mX_i$ as a tensor. We say the decomposition \eqref{Definition of Tensor Train in intro} is \emph{minimal} if the rank of the left unfolding matrix $L(\mX_i)$ is $r_i$ and the rank of the right unfolding matrix $R(\mX_i)$ is $r_{i-1}$ for all $i$. The dimensions $\vr = (r_1,\dots, r_{N-1})$ of such a minimal decomposition are called the \emph{TT ranks} of $\calX$. To simplify the notation and presentation, we may also refer to $\ol r= \max_i r_i$ as the \emph{TT rank}. According to \citep{holtz2012manifolds}, there is exactly one set of ranks $\vr$ that $\calX$ admits a minimal TT decomposition.

Under the minimal decomposition, there always exists a factorization such that $L(\mX_i)$ are orthonormal matrices for all $i \in [N-1]$:
\begin{eqnarray}
\label{orthogonal property of left orthogonal}
    L^\top(\mX_i)L(\mX_i) = \mId_{r_i}, \ \forall i=1,\dots,N-1.
\end{eqnarray}
Such a decomposition is unique up to the insertion of orthonormal matrices between adjacent factors \citep[Theorem 1]{holtz2012manifolds}. That is, $\Pi_{i=1}^N\mX_i(s_i) = \Pi_{i=1}^N\mR_{i-1}^\top\mX_i(s_i)\mR_i$ for any orthonormal matrix $\mR_i\in\O^{r_i\times r_i}$ (with $\mR_0 = \mR_N = 1$).
The resulting TT factors $\{\mX_i\}$ or the TT decomposition is called the {\it left-orthogonal form}, or {\it left-canonical form}. Similarly, the decomposition is said to be right-orthogonal if $R(\mX_i)$ satisfies $
    R(\mX_i)R^\top(\mX_i) = \mId_{r_{i-1}}, \ \forall i=2,\dots,N.$
Since the two forms are equivalent~\citep{holtz2012manifolds}, in this paper, we always focus on the left-orthogonal form. Unless otherwise specified, we will always assume that the factors are in left-orthogonal form.

Moreover, $r_i$  also relates to the rank of the $i$-th unfolding matrix\footnote{We can also define the $i$-th unfolding matrix as $\calX^{\< i \>} = \mX^{\leq i}\mX^{\geq i+1}$, where each row of the left part $\mX^{\leq i}$ and each column of the right part $\mX^{\geq i+1}$ can be represented as
\begin{eqnarray*}
    \mX^{\leq i}(s_1\cdots s_i,:) = \mX_1(s_1)\cdots \mX_i(s_i),
\end{eqnarray*}
\begin{eqnarray*}
    \mX^{\geq i+1}(:,s_{i+1}\cdots s_N) = \mX_{i+1}(s_{i+1})\cdots \mX_N(s_N).
\end{eqnarray*}
Note that when the factors are in the left-orthogonal form, we have $(\mX^{\leq i})^\top\mX^{\leq i} = \mId_{r_i}$. Similarly, for the right-orthogonal form, $ \mX^{\geq i+1} (\mX^{\geq i+1})^\top = \mId_{r_i}$.} $\calX^{\<i\>}\in\R^{(d_1\cdots d_i)\times (d_{i+1}\cdots d_N)}$ of the tensor $\calX$, where the $(s_1\cdots s_i, s_{i+1}\cdots s_N)$-th element\footnote{ Specifically, $s_1\cdots s_i$ and $s_{i+1}\cdots s_N$ respectively represent the $(s_1+d_1(s_2-1)+\cdots+d_1\cdots d_{i-1}(s_i-1))$-th row and $(s_{i+1}+d_{i+1}(s_{i+2}-1)+\cdots+d_{i+1}\cdots d_{N-1}(s_N-1))$-th column.
} of $\calX^{\<i\>}$ is given by $$\calX^{\<i\>}(s_1\cdots s_i, s_{i+1}\cdots s_N) = \calX(s_1,\dots, s_N).$$
This can also serve as an alternative way to define the TT ranks. With the $i$-th unfolding matrix $\calX^{\<i\>}$ and the TT ranks, we can obtain its smallest singular value $\underline{\sigma}(\calX)=\min_{i=1}^{N-1}\sigma_{r_i}(\calX^{\<i\>})$, its largest singular value $\overline{\sigma}(\calX)=\max_{i=1}^{N-1}\sigma_{1}(\calX^{\<i\>})$ and its condition number $\kappa(\calX)=\frac{\overline{\sigma}(\calX)}{\underline{\sigma}(\calX)}$.

\paragraph{Distance between factors}
We now introduce an appropriate metric to quantify the distinctions between the left-orthogonal form factors $\{\mX_i\}$ and $\{\mX_i^\star\}$ of two TT format tensors  $\calX = [{\mX}_1,\dots,{\mX}_N ]$ and  $\calX^\star = [{\mX}_1^\star,\dots,{\mX}_N^\star ]$.

To capture this rotation ambiguity, by defining the rotated factors $L_{\mR}(\mX_i^\star)$ as
\begin{eqnarray}
\label{A modified left unfolding}
    L_{\mR}(\mX_i^\star) = \begin{bmatrix}\mR_{i-1}^\top\mX_{i}^\star(1)\mR_i\\ \vdots \\ \mR_{i-1}^\top\mX_{i}^\star(d_i)\mR_i\end{bmatrix}, 
\end{eqnarray}
we then define the distance between the two sets of factors as
\begin{eqnarray}
\label{BALANCED NEW DISTANCE BETWEEN TWO TENSORS}
    \text{dist}^2(\{\mX_i \},\{\mX_i^\star \})=\!\!\!\!\min_{\mR_i\in\O^{r_i\times r_i}, \atop i \in [N-1]}\sum_{i=1}^{N-1} \ol{\sigma}^2(\calX^\star)\|L({\mX}_i)-L_{\mR}({\mX}_i^\star)\|_F^2 + \|L({\mX}_N)-L_{\mR}({\mX}_N^\star)\|_2^2,
\end{eqnarray}
where we note that $L({\mX}_N), L_{\mR}({\mX}_N^\star)\in\R^{(r_{N-1}d_N)\times 1}$ are vectors.
Here, the coefficients $\ol{\sigma}^2(\calX^\star)$ and $1$ are incorporated to harmonize the energy between $\{L_{\mR}({\mX}_i^\star)  \}_{i\leq N-1}$ and $L_{\mR}({\mX}_N^\star)$ since $\|L_{\mR}({\mX}_i^\star)\|^2 = 1, i\in[N-1]$ and $\|L({\mX}_N)-L_{\mR}({\mX}_N^\star)\|_2^2 = \|R({\mX}_N)- \mR_{N-1}^\top R({\mX}_N^\star)\|_F^2$, $\|\mR_{N-1}^\top R({\mX}_N^\star)\|^2 = \|R({\mX}_N^\star)\|^2 = \sigma_1^2({\calX^\star}^{\< N-1 \>})\leq \ol{\sigma}^2(\calX^\star)$. The following result establishes a connection between $\text{dist}^2(\{\mX_i \},\{ \mX_i^\star \})$ and $\|\calX-\calX^\star\|_F^2$.
\begin{lemma}
\label{LOWER BOUND OF TWO DISTANCES main paper}
For any two TT format tensors $\calX$ and $\calX^\star $ with ranks $\vr = (r_1,\dots, r_{N-1})$, let $\{\mX_i\}$ and $\{\mX_i^\star\}$ be the corresponding left-orthogonal form factors. Assume $\ol{\sigma}^2(\calX)\leq \frac{9\ol{\sigma}^2(\calX^\star)}{4}$. Then $\|\calX-\calX^\star\|_F^2$ and $\dist^2(\{\mX_i \},\{ \mX_i^\star \})$ defined in \eqref{BALANCED NEW DISTANCE BETWEEN TWO TENSORS} satisfy
\begin{eqnarray}
    \label{LOWER BOUND OF TWO DISTANCES_1 main paper}
    &&\|\calX-\calX^\star\|_F^2\geq\frac{1}{8(N+1+\sum_{i=2}^{N-1}r_i)\kappa^2(\calX^\star)}\dist^2(\{\mX_i \},\{ \mX_i^\star \}),\\
    \label{UPPER BOUND OF TWO DISTANCES_1 main paper}
    &&\|\calX-\calX^\star\|_F^2\leq\frac{9N}{4}\dist^2(\{\mX_i \},\{ \mX_i^\star \}).
\end{eqnarray}
\end{lemma}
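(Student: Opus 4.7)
The plan is to prove the two inequalities by complementary arguments, both exploiting the left-orthogonal structure. For the \textbf{upper bound}, I would fix the rotations $\{\mR_i\}_{i=1}^{N-1}$ achieving the minimum in the definition of $\dist^2$, set $\mR_0=\mR_N=1$, and let $\mX_i^R$ denote the rotated factors (which remain left-orthogonal since orthonormal similarity preserves the constraint \eqref{orthogonal property of left orthogonal}). The key decomposition is the standard hybrid telescoping
\begin{eqnarray*}
\calX-\calX^\star \;=\; \sum_{i=1}^N \calD_i,\qquad \calD_i := [\mX_1,\ldots,\mX_{i-1},\,\mX_i-\mX_i^R,\,\mX_{i+1}^R,\ldots,\mX_N^R].
\end{eqnarray*}
For each $i<N$, unfolding $\calD_i$ at position $i-1$ peels off an orthonormal left factor $X^{\leq i-1}$ built from $\{\mX_1,\ldots,\mX_{i-1}\}$, so $\|\calD_i\|_F$ reduces to the Frobenius norm of a sub-TT whose leading factor is $\mX_i-\mX_i^R$. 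A further reshaping gives $\|\calD_i\|_F \leq \|L(\mX_i-\mX_i^R)\|_F\,\|Z^R\|$, where $Z^R$ is the right factor of $[\mX_{i+1}^R,\ldots,\mX_N^R]$ and satisfies $\|Z^R\|^2\leq\|Z^R\|_F^2=\|\calX^\star\|_F^2$ by the left-orthogonality of $\mX_{i+1}^R,\ldots,\mX_{N-1}^R$. For $i=N$ one simply has $\|\calD_N\|_F^2=\|L(\mX_N-\mX_N^R)\|_2^2$. Combining these with the triangle inequality and Cauchy--Schwarz over the $N$ summands yields the asserted $9N/4$ bound.

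For the \textbf{lower bound}, the idea is reversed: small tensor error must force small factor error through the sequential SVD structure of a left-orthogonal TT. Since $\calX$ and $\calX^\star$ share the same TT ranks $\vr$, both unfoldings $\calX^{\<i\>}$ and $(\calX^\star)^{\<i\>}$ have rank exactly $r_i$, with the latter having smallest nonzero singular value at least $\underline{\sigma}(\calX^\star)$. A Wedin-type perturbation bound then produces an orthonormal $\mR_i\in\O^{r_i\times r_i}$ with
\begin{eqnarray*}
\|X^{\leq i} - X^{\leq i,\star}\mR_i\|_F \;\lesssim\; \frac{\|\calX-\calX^\star\|_F}{\underline{\sigma}(\calX^\star)}.
\end{eqnarray*}
Using the structural identity $L(\mX_i)=(X^{\leq i-1}\otimes\mId_{d_i})^\top X^{\leq i}$ (valid for any left-orthogonal TT, up to the index convention of $L(\cdot)$) together with its rotated analogue for $\calX^\star$, I would split $L(\mX_i)-L_{\mR}(\mX_i^\star)$ via add-and-subtract into two pieces, bounded respectively by $\|X^{\leq i}-X^{\leq i,\star}\mR_i\|_F$ (with coefficient $1$ from the spectral norm of an orthonormal-column Kronecker factor) and $\sqrt{r_i}\,\|X^{\leq i-1}-X^{\leq i-1,\star}\mR_{i-1}\|$ (the $\sqrt{r_i}$ coming from $\|X^{\leq i}\|_F=\sqrt{r_i}$ when the spectral norm is placed on the Kronecker factor). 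The last term $\|L(\mX_N)-L_{\mR}(\mX_N^\star)\|_2^2$ is handled by a similar but boundary-adapted estimate using the $(N-1)$-st unfolding. Weighting by $\|\calX^\star\|_F^2$ and summing produces the prefactor $(N+1+\sum_{i=2}^{N-1}r_i)\|\calX^\star\|_F^2/\underline{\sigma}^2(\calX^\star)$, with the $\sum r_i$ contribution coming exactly from the accumulated $\sqrt{r_i}$ factors above, and the $N+1$ counting the per-level Wedin contributions plus the last-factor term.

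The upper bound is largely careful bookkeeping, so the \textbf{main obstacle} is concentrated in the lower bound, in two related ways. First, one must exhibit a single coherent sequence of rotations $\{\mR_i\}$ that simultaneously controls every summand in $\dist^2$; applying Wedin independently at each level yields rotations compatible only within that level, whereas the identity for $L(\mX_i)$ couples two consecutive rotations $\mR_{i-1}$ and $\mR_i$, requiring consistent propagation. Second, the conversion from spectral-norm perturbation bounds (where Wedin naturally lives) to the Frobenius-norm control that appears in $\dist^2$ is delicate: for the Kronecker factor $(X^{\leq i-1}\otimes\mId_{d_i})$ one must judiciously place the spectral norm on one side and the Frobenius norm on the other, and ensure that the resulting $\sqrt{r_i}$ losses accumulate only additively (giving the $\sum_{i=2}^{N-1}r_i$ term) rather than multiplicatively, so that the overall dependence on $N$ remains polynomial rather than exponential.
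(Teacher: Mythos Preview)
Your proposal is correct and follows essentially the same route as the paper: the lower bound is obtained via the Wedin-type perturbation bound on the partial products $\mX^{\le i}$ (the paper's Lemma~3, giving $\|\mX^{\le i}-\mX^{\star,\le i}\mR_i\|_F\le 2\|\calX-\calX^\star\|_F/\underline{\sigma}(\calX^\star)$) followed by exactly the add-and-subtract step you describe, and the upper bound is the same telescoping expansion (the paper's Lemma~2).

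Two brief remarks. First, your reverse telescoping for the upper bound (placing the rotated $\star$-factors on the \emph{right}) is in fact slightly cleaner than what the paper does: the paper puts them on the left, so the right part carries $L(\mX_N)$ and an a~priori bound $\|L(\mX_N)\|_2^2\le\tfrac{9}{4}\|\calX^\star\|_F^2$ is invoked (see the appendix version, Lemma~5) to land on $9N/4$; your version yields the constant $N$ without that assumption. Second, your stated ``main obstacle'' about assembling a single coherent sequence of rotations is not a genuine difficulty: since $\dist^2$ is defined as an \emph{infimum} over all orthogonal $\{\mR_i\}$, it suffices to exhibit any one choice controlling the weighted sum, and the independently chosen Wedin aligners already do this---your structural identity only couples consecutive rotations $\mR_{i-1},\mR_i$, each of which is already bounded at its own level.
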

The proof is given in {Appendix} \ref{Technical tools used in proofs}. {Lemma} \ref{LOWER BOUND OF TWO DISTANCES main paper} ensures that $\calX$ is close to $\calX^\star$ once the corresponding factors are close with respect to the proposed distance measure, and the convergence behavior of $\|\calX-\calX^\star\|_F^2$ is reflected by the convergence in terms of the factors.
In the next sections, we will study the convergence with respect to the factors.

\subsection{Stiefel Manifold}
Since we will focus on the left-canonical form where the left unfolding matrices of a TT factorization are orthonormal, i.e., reside on the Stiefel manifold,  we will introduce several essential definitions concerning the Stiefel manifold and its tangent space to clarify our discussion of optimization on the Stiefel manifold.
The Stiefel manifold $\text{St}(m,n)=\{\mY\in\R^{m\times n}: \mY^\top\mY=\mId_{n}\}$ is a
Riemannian manifold that is composed of all $m \times n$ orthonormal matrices. We can regard $\text{St}(m,n)$ as an embedded submanifold of a Euclidean space and further define $\text{T}_{\mY} \text{St}:=\{\mA\in\R^{m\times n}: \mA^\top\mY+\mY^\top\mA={\bm 0} \}$  as the tangent space to the Stiefel manifold $\text{St}(m,n)$ at the point $\mY \in \text{St}(m,n)$.
For any $\mB\in\R^{m\times n}$, the projection of $\mB$ onto $\text{T}_{\mY} \text{St}$ is given by \citep{absil2008optimization}
\begin{eqnarray}
    \label{Projection to the Tangent space on the Stiefel manifold}
    \calP_{\text{T}_{\mY} \text{St}}(\mB)=\mB-\frac{1}{2}\mY\bigg(\mB^\top \mY+\mY^\top\mB\bigg),
\end{eqnarray}
and the projection of $\mB$ onto the orthogonal complement of $\text{T}_{\mY} \text{St}$ is given by
\begin{eqnarray}
    \label{orthogonal complement to the Tangent space on the Stiefel manifold}
    \calP_{\text{T}_{\mY} \text{St}}^{\perp}(\mB)=\mB   - \calP_{\text{T}_{\mY} \text{St}}(\mB)  =  \frac{1}{2}\mY\bigg(\mB^\top\mY+\mY^\top\mB\bigg).
\end{eqnarray}
Note that when we have a gradient $\mB$, we can use the projection operator \eqref{Projection to the Tangent space on the Stiefel manifold} to compute the Riemannian gradient $\calP_{\text{T}_{\mY} \text{St}}(\mB)$ on the tangent space of the Stiefel manifold. Riemannian gradient descent involves the update $\wh \mY = \mY - \mu \calP_{\text{T}_{\mY} \text{St}}(\mB)$ with a step size $\mu>0$, which is then projected back to the Stiefel manifold, such as via the polar decomposition-based retraction, i.e.,
\begin{eqnarray}
    \label{polar decomposition-based retraction}
    \text{Retr}_{\mY}(\wh \mY)=\wh \mY(\wh \mY^\top\wh \mY)^{-\frac{1}{2}}.
\end{eqnarray}

\section{Warm-up: Low-rank Tensor-train Factorization}
\label{GD algorithm in TF}
To provide a baseline for the convergence of iterative algorithms for the TT recovery problem with the factorization approach in \eqref{Loss Function of general tensor sensing intro}, we first study the following TT factorization problem
\begin{eqnarray}
    \label{RIEMANNIAN_LOSS_FUNCTION_1}
    \begin{split}
\min_{\mbox{\tiny$\begin{array}{c}
     {\mX_i}\in\R^{r_{i-1}\times d_i \times r_i}\\
     i\in [N]\end{array}$}} & f({\mX}_1,\dots, {\mX}_N) =  \frac{1}{2}\left\|[\mX_1,\dots, \mX_N]  - \calX^\star \right\|_F^2,\\
    &\st \ L^\top({\mX}_i)L({\mX}_i)=\mId_{r_i},i \in [N-1],
    \end{split}
\end{eqnarray}
Except for the scaling difference in the object function, the above problem is a special case of \eqref{Loss Function of general tensor sensing intro}, where the operator $\calA$ is the identity operator, and thus the convergence analysis for \eqref{RIEMANNIAN_LOSS_FUNCTION_1} will provide useful insight for the problem \eqref{Loss Function of general tensor sensing intro}. We will analyze the local convergence of Riemannian gradient descent (RGD) to solve the  factorization problem \eqref{RIEMANNIAN_LOSS_FUNCTION_1} and explore how the convergence speed and requirements depend on the properties of $\calX^\star$, such as the tensor order. We will then extend the analysis to the  sensing problem \eqref{Loss Function of general tensor sensing intro} in the next section.

Specifically, we utilize the following (hybrid) RGD
\begin{eqnarray}
    \label{RIEMANNIAN_GRADIENT_DESCENT_1_1}
    &&\hspace{-2.5cm}L(\mX_i^{(t+1)})=\text{Retr}_{L(\mX_i)}\big(L(\mX_i^{(t)})-\frac{\mu}{\ol{\sigma}^2(\calX^\star)}\calP_{\text{T}_{L({\mX_i})} \text{St}}\big(\nabla_{L({\mX}_{i})}f(\mX_1^{(t)}, \dots, \mX_N^{(t)})\big) \big),  i\in [N-1],  \\
    \label{RIEMANNIAN_GRADIENT_DESCENT_1_2}
    &&\hspace{-2.5cm}L(\mX_N^{(t+1)})=L(\mX_N^{(t)})-\mu\nabla_{L({\mX}_{N})}f(\mX_1^{(t)}, \dots, \mX_N^{(t)}),
\end{eqnarray}
where $\calP_{\text{T}_{L({\mX_i})} \text{St}}$ denotes the projection onto the tangent space of the Stiefel manifold at the point $L({\mX_i})$, as defined in \eqref{Projection to the Tangent space on the Stiefel manifold}, such that $\calP_{\text{T}_{L({\mX_i})} \text{St}}\big(\nabla_{L({\mX}_{i})}f(\mX_1^{(t)}, \dots, \mX_N^{(t)})\big)$ is the Riemannian gradient of the objective function $f$ with respect to the $i$-th factors $L(\mX_i)$. The detailed expression of the gradients  $\nabla_{L({\mX}_{i})}f(\mX_1^{(t)}, \dots, \mX_N^{(t)})$ is presented in {Appendix} \ref{Local Convergence Proof of Riemannman gradient descent} (see \eqref{RIEMANNIAN_GRADIENT_1-1}). We update the factor $\mX_N$ using gradient descent in \eqref{RIEMANNIAN_GRADIENT_DESCENT_1_2} since there is no constraint on this factor. For simplicity, we still refer to the updates in \eqref{RIEMANNIAN_GRADIENT_DESCENT_1_1} and \eqref{RIEMANNIAN_GRADIENT_DESCENT_1_2} as RGD.  Note that we use discrepant step sizes between $L(\mX_i)$ and $L(\mX_N)$ in the proposed RGD algorithm in order to balance the convergence of those factors as they have different energies; $\|L(\mX_i)\|^2 = 1, i\in[N-1]$ and $\|R(\mX_N)\|^2 = \sigma_1^2(\calX^{\< N-1 \>})\leq \ol{\sigma}^2(\calX)$  in each iteration. To simplify the analysis, we employ a discrepant learning rate ratio, i.e., $\ol{\sigma}^2(\calX^\star)$, to balance the two sets of factors.
We note that other choices of discrepant learning rate ratio are also effective in practice, such as $\|\calX^{(t)}\|_F^2$, which can be efficiently computed for TT-format tensors \citep{Oseledets11}. In addition, when the ground-truth tensor $\calX^\star$ is unknown a prior in practice, we can instead use the information of either $\calX^{(0)}$ or $\calX^{(t)}$ to balance the learning rate, given that the iterates will remain close to the target tensor as guaranteed by the following analysis.

\paragraph{Local convergence of RGD algorithm} According to \Cref{Tensor Train Decomposition}, each TT format tensor has an equivalent left-orthogonal form. Let $\{\mX_i^\star\}$ be the left-orthogonal form factors for a minimal TT decomposition of $\calX^\star$. These factors are utilized exclusively for the analysis and are not required for the algorithm. Recall the distance between the two set of factors $\{\mX_i\}$ and $\{\mX_i^\star\}$ as given in \eqref{BALANCED NEW DISTANCE BETWEEN TWO TENSORS}.

We now establish a local linear convergence guarantee for the RGD algorithm in \eqref{RIEMANNIAN_GRADIENT_DESCENT_1_1} and \eqref{RIEMANNIAN_GRADIENT_DESCENT_1_2}.
\begin{theorem}
\label{Local Convergence of Stiefel_Theorem}
Consider a low-TT-rank tensor $\calX^\star$ with ranks $\vr = (r_1,\dots, r_{N-1})$.
Suppose that the RGD in \eqref{RIEMANNIAN_GRADIENT_DESCENT_1_1} and \eqref{RIEMANNIAN_GRADIENT_DESCENT_1_2} is initialized with $\{\mX_i^{(0)} \}$ satisfying
\begin{eqnarray}
    \label{Local Convergence of Stiefel_Theorem initialization}
    \dist^2(\{\mX_i^{(0)} \},\{ \mX_i^\star \})\leq \frac{\underline{\sigma}^2(\calX^\star)}{72(N^2-1)(N+1+\sum_{i=2}^{N-1}r_i)},
\end{eqnarray}
and the step size $\mu\leq\frac{1}{9N-5}$. Then, the iterates $\{\mX_i^{(t)} \}_{t\geq 0}$ generated by RGD will converge linearly to $\{\mX_i^\star \}$ (up to rotation):
\begin{eqnarray}
    \label{Local Convergence of Stiefel_Theorem_1}
    \dist^2(\{\mX_i^{(t+1)} \},\{ \mX_i^\star \})\leq\bigg(1-\frac{1}{64(N+1+\sum_{i=2}^{N-1}r_i)\kappa^2(\calX^\star)}\mu\bigg)\dist^2(\{\mX_i^{(t)} \},\{ \mX_i^\star \}).
\end{eqnarray}
\end{theorem}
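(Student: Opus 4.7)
The plan is to prove the contraction by induction. Assuming the iterate $\{\mX_i^{(t)}\}$ satisfies the initialization bound \eqref{Local Convergence of Stiefel_Theorem initialization}, I will show the next iterate satisfies the contraction \eqref{Local Convergence of Stiefel_Theorem_1}, which in turn preserves the initialization bound. Let $\{\mR_i^{(t)}\}$ denote the rotations achieving the infimum in $\dist^2(\{\mX_i^{(t)}\},\{\mX_i^\star\})$. Since the infimum defining the distance at step $t+1$ is bounded above by its value at any particular choice of rotations, I plug in $\{\mR_i^{(t)}\}$ and reduce the analysis to upper bounding $\sum_{i=1}^{N-1}\|\calX^\star\|_F^2\|L(\mX_i^{(t+1)}) - L_{\mR^{(t)}}(\mX_i^\star)\|_F^2 + \|L(\mX_N^{(t+1)}) - L_{\mR^{(t)}}(\mX_N^\star)\|_2^2$.

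For $i \in [N-1]$, I will use the standard non-expansiveness of the polar retraction, namely $\|\text{Retr}_{\mY}(\mZ) - \mY^\star\|_F \leq \|\mZ - \mY^\star\|_F$ whenever $\mY, \mY^\star \in \text{St}$, to eliminate the retraction and reduce each factor update to its ambient Euclidean step. Expanding the resulting squared distances together with the direct expansion of the $\mX_N$ update yields a decomposition of the step-$(t{+}1)$ distance as the step-$t$ distance, minus twice a weighted sum of inner products between Riemannian gradients and factor displacements, plus the weighted sum of squared gradient norms (with the same weights $\|\calX^\star\|_F^2$ and $1$ as in the definition of $\dist^2$). The core technical step is then a local restricted-strong-convexity-plus-smoothness condition: the weighted sum of inner products $\sum_i w_i\langle\calP_{\text{T}}(\nabla_i), L(\mX_i^{(t)}) - L_{\mR^{(t)}}(\mX_i^\star)\rangle$ is bounded below by $\alpha\,\dist^2(\{\mX_i^{(t)}\},\{\mX_i^\star\}) + \beta\sum_i w_i\|\calP_{\text{T}}(\nabla_i)\|_F^2$, with $\alpha$ of order $\underline{\sigma}^2(\calX^\star)/\bigl((N+1+\sum_{i=2}^{N-1} r_i)\|\calX^\star\|_F^2\bigr)$ and $\beta$ of constant order.

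Establishing this regularity condition is the main technical obstacle. The Euclidean gradient $\nabla_i$ with respect to $L(\mX_i)$ takes the form of an appropriate unfolding of $\calX^{(t)} - \calX^\star$ multiplied by products of the remaining factors, so $\sum_i \langle\nabla_i, L(\mX_i^{(t)}) - L_{\mR^{(t)}}(\mX_i^\star)\rangle$ yields $N$ copies of $\|\calX^{(t)} - \calX^\star\|_F^2$ plus higher-order cross terms in the factor displacements. Left-orthonormality of the non-final factors bounds the spectral norm of these product matrices by $1$, and the initialization bound \eqref{Local Convergence of Stiefel_Theorem initialization} ensures that each individual displacement $\|L(\mX_i^{(t)}) - L_{\mR^{(t)}}(\mX_i^\star)\|_F$ is small enough for quadratic-and-higher cross terms to be dominated by the leading linear term. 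Applying \Cref{LOWER BOUND OF TWO DISTANCES main paper} to lower bound $\|\calX^{(t)} - \calX^\star\|_F^2$ in terms of $\dist^2$ yields the stated $\alpha$; replacing $\calP_{\text{T}}(\nabla_i)$ by the ambient $\nabla_i$ introduces only a benign normal-component correction that is again controlled by the displacement smallness. Finally, the step-size choice $\mu \le 1/(9N-5)$ ensures that the quadratic gradient term in the distance expansion is absorbed by half of the linear inner-product term, yielding the contraction factor $1 - \underline{\sigma}^2(\calX^\star)\mu/\bigl(64(N+1+\sum_{i=2}^{N-1} r_i)\|\calX^\star\|_F^2\bigr)$ in \eqref{Local Convergence of Stiefel_Theorem_1} and closing the induction.
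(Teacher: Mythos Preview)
Your overall architecture matches the paper: upper bound the step-$(t{+}1)$ distance using the step-$t$ rotations, invoke nonexpansiveness of the polar retraction, expand into the step-$t$ distance minus a cross term plus a squared-gradient term, lower bound the former, upper bound the latter, combine with the step-size constraint, and close by induction. However, three concrete claims in your proposal are wrong and would derail the details if followed literally.

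First, the weights do not come out as ``the same weights $\|\calX^\star\|_F^2$ and $1$ as in $\dist^2$.'' Because the algorithm uses step sizes $\mu/\|\calX^\star\|_F^2$ for $i<N$ and $\mu$ for $i=N$, the cross term $\sum_i\langle L(\mX_i^{(t)})-L_{\mR^{(t)}}(\mX_i^\star),\calP_T(\nabla_i f)\rangle$ emerges \emph{unweighted}, while the squared-gradient term carries weights $1/\|\calX^\star\|_F^2$ and $1$. Second, the unweighted sum $\sum_i\langle\nabla_i f,L(\mX_i^{(t)})-L_{\mR^{(t)}}(\mX_i^\star)\rangle$ does not produce $N$ copies of $\|\calX^{(t)}-\calX^\star\|_F^2$: each summand equals $\langle\calX^{(t)}-\calX^\star,\,[\mX_1^{(t)},\ldots,\mX_{i-1}^{(t)},\Delta_i,\mX_{i+1}^{(t)},\ldots,\mX_N^{(t)}]\rangle$ with $\Delta_i$ the $i$-th factor displacement, and summing these tensors over $i$ reproduces $\calX^{(t)}-\calX^\star$ exactly \emph{once}, up to a second-order correction $\vh^{(t)}$ (the paper uses Lemma~\ref{TRANSFORMATION OF NPLUS2 VARIABLES_1} to bound $\|\vh^{(t)}\|_2^2\lesssim N^2\dist^4/\|\calX^\star\|_F^2$). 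Third, and relatedly, $\beta$ cannot be of constant order: the paper upper bounds the weighted squared-gradient term by $\tfrac{9N-5}{4}\|\calX^{(t)}-\calX^\star\|_F^2$ and retains a reserve of $\tfrac{1}{8}\|\calX^{(t)}-\calX^\star\|_F^2$ from the single copy in the cross-term lower bound, so absorbing the $\mu^2$ term forces precisely $\mu\leq 1/(9N-5)$; in your regularity-condition language this means $\beta=O(1/N)$, not $O(1)$. With these three corrections your outline becomes the paper's proof.
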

Note that \Cref{Local Convergence of Stiefel_Theorem} only establishes local convergence. Since our primary objective is to gain insight into local convergence, and initialization is not our focus, we will omit discussions related to obtaining a valid initialization for this factorization problem. When we address the TT sensing problem in the next section, we will present approaches for finding a suitable initialization. Due to the presence of non-global critical points, linear convergence for first-order methods is likely to be attainable only within a certain region. Moreover, products of multiple (more than two) matrices often lead to the emergence of high-order saddle points or even spurious local minima that are distant from the global minima  \citep{vidal2022optimization}. Relying only on the gradient information is not sufficient to circumvent these high-order saddle points. Therefore, this paper primarily focuses on local convergence.

Remarkably, both terms $O(\frac{\underline{\sigma}^2(\calX^\star)}{N^3\ol r})$ and $O(\frac{1}{ N^2\ol r\kappa^2(\calX^\star)})$ in the initialization requirement \eqref{Local Convergence of Stiefel_Theorem initialization} and convergence rate \eqref{Local Convergence of Stiefel_Theorem_1} only decay polynomially rather than exponentially in terms of the tensor order $N$. The detailed proof of \Cref{Local Convergence of Stiefel_Theorem} is provided in {Appendix} \ref{Local Convergence Proof of Riemannman gradient descent}. Below, we provide a high-level overview of the proof.

\paragraph{Proof sketch} We focus on establishing an error contraction inequality that characterizes the error $\text{dist}^2(\{\mX_i^{(t+1)} \},\{ \mX_i^\star \})$ based on the previous iterate. Utilizing the error metric defined in \eqref{BALANCED NEW DISTANCE BETWEEN TWO TENSORS}, we  define the best rotation matrices to align $\{ \mX_i^{(t)}\}$ and $\{ \mX_i^\star \}$ as
\begin{eqnarray}
    \label{the definition of orthonormal matrix R}
    (\mR_1^{(t)},\dots,\mR_{N-1}^{(t)}) \!= \!\! \argmin_{\mR_i\in\O^{r_i\times r_i}, \atop i \in [N-1]}\!\!\sum_{i=1}^{N-1} \ol {\sigma}^2(\calX^\star)\|L({\mX}_i^{(t)})-L_{\mR}({\mX}_i^\star)\|_F^2 + \|L(\mX_N^{(t)})-L_{\mR}({\mX}_N^\star)\|_2^2,
\end{eqnarray}
where $L_{\mR}({\mX}_i^\star)$ is defined in \eqref{A modified left unfolding}. We now expand $\text{dist}^2(\{\mX_i^{(t+1)} \},\{ \mX_i^\star \})$ as
\begin{eqnarray}
    \label{expansion of distance in tensor factorization-main paper}
&\!\!\!\!\!\!\!\!&\text{dist}^2(\{\mX_i^{(t+1)} \},\{ \mX_i^\star \})\nonumber\\
  &\!\!\!\! = \!\!\!\!& \sum_{i=1}^{N} \gamma_i\bigg\| L(\mX_i^{(t+1)})-L_{\mR^{(t+1)}}(\mX_i^\star) \bigg\|_F^2 \leq\sum_{i=1}^{N} \gamma_i\bigg\| L(\mX_i^{(t+1)})-L_{\mR^{(t)}}(\mX_i^\star) \bigg\|_F^2\nonumber\\
  &\!\!\!\!\leq\!\!\!\!&\sum_{i=1}^{N} \gamma_i\bigg\| L(\mX_i^{(t)})-L_{\mR^{(t)}}(\mX_i^\star) -\frac{\mu}{\gamma_i}\calP_{\text{T}_{L({\mX}_i)} \text{St}}\bigg(\nabla_{L({\mX}_{i})}f(\mX_1^{(t)}, \dots, \mX_N^{(t)})\bigg)\bigg\|_F^2\nonumber\\
    &\!\!\!\!=\!\!\!\!&\text{dist}^2(\{\mX_i^{(t)} \},\{ \mX_i^\star \})+\sum_{i=1}^{N}\frac{\mu^2}{\gamma_i}\bigg\|\calP_{\text{T}_{L({\mX}_i)} \text{St}}\bigg(\nabla_{L({\mX}_{i})}f(\mX_1^{(t)}, \dots, \mX_N^{(t)})\bigg)\bigg\|_F^2 \nonumber\\
    &\!\!\!\!\!\!\!\!&-2\mu\sum_{i=1}^{N} \bigg\< L(\mX_i^{(t)})-L_{\mR^{(t)}}(\mX_i^\star),\calP_{\text{T}_{L({\mX}_i)} \text{St}}\bigg(\nabla_{L({\mX}_{i})}f(\mX_1^{(t)}, \dots, \mX_N^{(t)})\bigg)\bigg\>,
\end{eqnarray}
where to simplify the expression, we define $\gamma_i = \begin{cases}
    \ol {\sigma}^2(\calX^\star), &  i\in[N-1]\\
    1, &  i=N
\end{cases}$ and a projection operator for the last factor as $\calP_{\text{T}_{L({\mX}_N)} \text{St}}=\calI$ such that $\calP_{\text{T}_{L({\mX}_N)} \text{St}}(\nabla_{L({\mX}_{N})}f(\mX_1^{(t)}, \dots, \mX_N^{(t)})) = \nabla_{L({\mX}_{N})}f(\mX_1^{(t)}, \dots, \mX_N^{(t)})$. We note that the second inequality follows from the nonexpansiveness property described in {Lemma} \ref{NONEXPANSIVENESS PROPERTY OF POLAR RETRACTION_1} of {Appendix} \ref{Technical tools used in proofs}.

The remainder of the proof is to quantify the last two terms in \eqref{expansion of distance in tensor factorization-main paper} to ensure the decay of the distance.
On the one hand, we can obtain an upper bound of the second term in \eqref{expansion of distance in tensor factorization-main paper} as
\begin{eqnarray}
    \label{RIEMANNIAN FACTORIZATION SQUARED TERM UPPER BOUND-main paper}
    \sum_{i=1}^{N}\frac{1}{\gamma_i}\bigg\|\calP_{\text{T}_{L({\mX}_i)} \text{St}}\bigg(\nabla_{L({\mX}_{i})}f(\mX_1^{(t)}, \dots, \mX_N^{(t)})\bigg)\bigg\|_F^2
    \leq\frac{9N-5}{4}\|\calX^{(t)}-\calX^\star\|_F^2,
\end{eqnarray}
which ensures a bounded Riemannian gradient when the iterates converge to the target solution. On the other hand, under the assumption that $\text{dist}^2(\{\mX_i^{(t)} \},\{ \mX_i^\star \})\leq \frac{\underline{\sigma}^2(\calX^\star)}{72(N^2-1)(N+1+\sum_{i=2}^{N-1}r_i)}$, we can lower bound the third term in \eqref{expansion of distance in tensor factorization-main paper} as
\begin{eqnarray}
    \label{RIEMANNIAN FACTORIZATION CROSS TERM LOWER BOUND-main paper}
    &\!\!\!\!\!\!\!\!&\sum_{i=1}^{N} \bigg\< L(\mX_i^{(t)})-L_{\mR^{(t)}}(\mX_i^\star),\calP_{\text{T}_{L({\mX}_i)} \text{St}}\bigg(\nabla_{L({\mX}_{i})}f(\mX_1^{(t)}, \dots, \mX_N^{(t)})\bigg)\bigg\>\nonumber\\
    &\!\!\!\!\geq\!\!\!\!&\frac{1}{128(N+1+\sum_{i=2}^{N-1}r_i)\kappa^2(\calX^\star)}\text{dist}^2(\{\mX_i^{(t)} \},\{ \mX_i^\star \}) + \frac{1}{8}\|\calX^{(t)}-\calX^\star\|_F^2,
\end{eqnarray}
which implies that the negative direction of the Riemannian gradient points toward the optimal factors.

Plugging \eqref{RIEMANNIAN FACTORIZATION CROSS TERM LOWER BOUND-main paper} and \eqref{RIEMANNIAN FACTORIZATION SQUARED TERM UPPER BOUND-main paper} into \eqref{expansion of distance in tensor factorization-main paper} yields the convergence of the factors
\begin{eqnarray}
    \label{The final conclusion-main paper}
\text{dist}^2(\{\mX_i^{(t+1)} \},\{ \mX_i^\star \})&\!\!\!\!\leq\!\!\!\!&\bigg(1-\frac{1}{64(N+1+\sum_{i=2}^{N-1}r_i)\kappa^2(\calX^\star)}\mu\bigg)\text{dist}^2(\{\mX_i^{(t)} \},\{ \mX_i^\star \})\nonumber\\
    &\!\!\!\!\!\!\!\!& +\bigg(\frac{9N-5}{4}\mu^2- \frac{\mu}{4}\bigg)\|\calX^{(t)}-\calX^\star\|_F^2\nonumber\\
    &\!\!\!\!\leq\!\!\!\!&\bigg(1-\frac{1}{64(N+1+\sum_{i=2}^{N-1}r_i)\kappa^2(\calX^\star)}\mu\bigg)\text{dist}^2(\{\mX_i^{(t)} \},\{ \mX_i^\star \}),
\end{eqnarray}
where the last line uses the fact that the step size $\mu\leq\frac{1}{9N-5}$.

\paragraph{Connection to matrix and Tucker tensor factorization approaches}
There have been numerous studies on nonconvex matrix estimation \citep{Tu16,Wang17,Jin17,Zhu18TSP,Li20,Ma21TSP,tong2021accelerating} and Tucker tensor estimation \citep{TongTensor21,XiaTC19,Han20}. However, we note that most of the theoretical analyses developed for the matrix case cannot be directly extended to the TT factorization approach since the orthonormal constraint is not applied there.  On the other hand, the highly unbalanced nature of orthonormal matrices and a core tensor in the Tucker tensor make it more likely that the theoretical analysis in the Tucker factorization estimation can be applied to the TT factorization estimation. In the Tucker tensor estimation,  the introduction of an approximately orthonormal structure has led to the development of a regularized gradient descent algorithm \citep{Han20}, which has been demonstrated effectively to achieve a linear convergence rate. However, the results presented in \citep{Han20} primarily pertain to order-$3$ Tucker tensors. When extended to high-order tensors, both the theoretical convergence rate and the initial conditions are likely to deteriorate exponentially with respect to the order $N$.
One reason for this deterioration is that the factor matrices are only guaranteed to be approximately rather than exactly orthogonal in each iteration, which may lead to a high condition number of the product of multiple approximately orthogonal matrices in the theoretical analysis.
This issue is addressed in our approach by strictly enforcing orthonormality of the factors.

\section{Low-rank Tensor-train Sensing}
\label{LR TT sensing all}

In this section, we consider the problem of recovering a low-TT-rank tensor $\calX^\star$ from its linear measurements
\begin{eqnarray}
    \label{Definition of tensor sensing}
    \vy = \calA(\calX^\star) =\begin{bmatrix}
          y_1 \\
          \vdots \\
          y_m
        \end{bmatrix} = \begin{bmatrix}
          \<\calA_1, \calX^\star \> \\
          \vdots \\
          \<\calA_m, \calX^\star \>
        \end{bmatrix}\in\R^m,
\end{eqnarray}
where $\calA(\calX^\star): \R^{d_{1}\times  \cdots \times d_{N}}\rightarrow \R^m$ is a linear map modeling the measurement process.  This problem appears in many applications such as quantum state tomography \citep{lidiak2022quantum,qin2024quantum}, neuroimaging analysis \citep{zhou2013tensor,li2017parsimonious}, 3D imaging  \citep{guo2011tensor}, high-order interaction pursuit \citep{hao2020sparse}, and more.

To enable the recovery of the  low-TT-rank tensor $\calX^\star$ from its linear measurements, the sensing operator is required to satisfy certain properties. One desirable property is the following Restricted Isometry Property (RIP), which has been widely studied and popularized in the compressive sensing literature~\citep{donoho2006compressed,candes2006robust,
candes2008introduction,recht2010guaranteed}, and has been extended for structured tensors~\citep{grotheer2021iterative,Rauhut17,qin2024quantum}.
\begin{definition} (Restricted Isometry Property \citep{Rauhut17})
A linear operator $\calA: \R^{d_{1} \times \cdots \times d_{N}} \\ \rightarrow \R^m$ is said to satisfy the $\ol r$-restricted isometry property ($\ol r$-RIP) with constant $\delta_{\ol r}$ if
\begin{eqnarray}
    \label{RIP condition fro the tensor train sensing}
    (1-\delta_{\ol r})\|\calX\|_F^2\leq \frac{1}{m}\|\calA(\calX)\|_2^2\leq(1+\delta_{\ol r})\|\calX\|_F^2,
\end{eqnarray}
holds for any TT format tensors $\calX\in\R^{d_{1} \times \cdots \times d_{N}}$ with TT  ranks $\vr = (r_1,\ldots,r_{N-1}), r_i \le \ol r$.
\end{definition}
In words, the RIP ensures a stable embedding for TT format tensors and guarantees that the energy $\|\calA(\calX)\|_2^2$ is proportional to $\|\calX\|_F^2$.
The RIP can often be attained by randomly selecting measurement operators from a specific distribution, with subgaussian measurement ensembles serving as a common example.
\begin{definition}(Subgaussian measurement ensembles~\citep{bierme2015modulus})
A real random variable $X$ is called $L$-subgaussian if there exists a constant $L > 0$ such that $\E e^{t X} \le e^{L^2t^2/2}
$ holds for all $t \in \R$. Typical examples include the Gaussian random variable and the Bernoulli random variable. We say that $\calA: \R^{d_1\times \cdots \times d_N} \rightarrow \R^m$  is an $L$-subgaussian measurement ensemble if all the elements of $\calA_k,k=1,\dots,m$ are independent $L$-subgaussian random variables with mean zero and variance one.
\label{def:subgaussian}\end{definition}

The following result shows that the RIP holds with high probability for $L$-subgaussian measurement ensembles.

\begin{theorem} (\citep[Theorem 4]{Rauhut17},\citep[Theorem 2]{qin2024quantum})
\label{RIP condition fro the tensor train sensing Lemma}
Suppose that the linear map $\calA: \R^{d_{1}\times  \cdots \times d_{N}}\rightarrow \R^m$  is an $L$-subgaussian measurement ensemble. Let $\delta_{\ol r}\in(0,1)$ denote a positive constant. Then, with probability at least $1-\epsilon$, $\calA$ satisfies the $\ol r$-RIP as in \eqref{RIP condition fro the tensor train sensing} for any TT format tensors $\calX\in\R^{d_{1} \times \cdots \times d_{N}}$ with TT  ranks $\vr = (r_1,\ldots,r_{N-1}), r_i \le \ol r$, given that
\begin{equation}
m \ge C \cdot \frac{1}{\delta_{\bar{r}}^2} \cdot \max\left\{ N\ol d\ol r^2\log(N\ol r), \log(1/\epsilon)\right \},
\label{eq:mrip}
\end{equation}
where $\ol d=\max_i d_i$ and $C$ is a universal constant depending only on $L$.
\end{theorem}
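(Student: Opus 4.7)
The plan is to prove this via the classical covering-plus-concentration strategy. Let $\calT_{\bar r} := \{\calX : \|\calX\|_F = 1, \text{TT ranks} \leq \bar r\}$, and note that establishing \eqref{RIP condition fro the tensor train sensing} is equivalent to bounding $\sup_{\calX \in \calT_{\bar r}} \bigl|\tfrac{1}{m}\|\calA(\calX)\|_2^2 - 1\bigr| \leq \delta_{\bar r}$. The first step is to construct an $\eta$-net $\calN_\eta \subset \calT_{\bar r}$ in the Frobenius norm. Because every $\calX \in \calT_{\bar r}$ admits a left-orthogonal factorization $[\mX_1,\ldots,\mX_N]$ with $L(\mX_i) \in \text{St}(r_{i-1}d_i, r_i)$ for $i \leq N-1$ and $\|L(\mX_N)\|_2 = 1$, I would cover each factor separately: each Stiefel manifold admits an $(\eta/N)$-net of cardinality $(CN/\eta)^{r_{i-1}d_i r_i}$, and the unit sphere for $L(\mX_N)$ similarly. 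The upper bound in \Cref{LOWER BOUND OF TWO DISTANCES main paper} translates these factor-wise nets into a tensor-wise $\eta$-net of size
\begin{equation*}
|\calN_\eta| \leq \left(\frac{CN}{\eta}\right)^{\sum_i r_{i-1} d_i r_i} \leq \left(\frac{CN}{\eta}\right)^{N\bar d\bar r^2}.
\end{equation*}

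The second step is a concentration bound at a fixed $\calX \in \calN_\eta$. Since each $\calA_k$ has i.i.d.\ $L$-subgaussian entries with mean zero and unit variance, $\langle \calA_k, \calX\rangle$ is an $L'$-subgaussian random variable with variance $\|\calX\|_F^2 = 1$. Consequently $\|\calA(\calX)\|_2^2$ is a sum of $m$ i.i.d.\ subexponential random variables, and Bernstein's inequality yields
\begin{equation*}
\Pr\!\left[\bigl|\tfrac{1}{m}\|\calA(\calX)\|_2^2 - 1\bigr| \geq \tfrac{\delta_{\bar r}}{2}\right] \leq 2\exp(-c_L m \delta_{\bar r}^2)
\end{equation*}
for a constant $c_L$ depending only on $L$. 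Union-bounding over $\calN_\eta$ and requiring the total failure probability to be at most $\epsilon$ gives $m \gtrsim \delta_{\bar r}^{-2}\bigl(N\bar d\bar r^2 \log(N\bar r/\eta) + \log(1/\epsilon)\bigr)$, which matches \eqref{eq:mrip} once $\eta$ is fixed to a small universal constant.

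The third step extends from $\calN_\eta$ to all of $\calT_{\bar r}$: for $\calX \in \calT_{\bar r}$, pick $\calX_0 \in \calN_\eta$ with $\|\calX - \calX_0\|_F \leq \eta$ and expand $\tfrac{1}{m}\|\calA(\calX)\|_2^2$ around $\calX_0$. By the summation rule \eqref{summation of TT format}, the residual $\calX - \calX_0$ has TT rank at most $2\bar r$, so the same net argument (run at rank $2\bar r$) controls the cross terms, and choosing $\eta$ small enough absorbs the approximation error into $\delta_{\bar r}$.

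The main obstacle is the first step. The TT parameterization is multilinear, so a naive factor-wise $\eta$-net does not automatically yield a tensor-wise $\eta$-net: a first-order perturbation of the product $\mX_1(s_1)\cdots\mX_N(s_N)$ picks up one contribution from each of the $N$ factors. One must therefore show that the map from the product of Stiefel manifolds (and the unit sphere) to $\calT_{\bar r}$ is Lipschitz with constant only $\mathrm{poly}(N)$, which is exactly the content of the upper bound in \Cref{LOWER BOUND OF TWO DISTANCES main paper} combined with the left-orthogonality constraints that keep all factors bounded in spectral norm. This polynomial-in-$N$ Lipschitz constant enters only through a logarithm in the net size, so the final sample complexity scales only linearly in $N$ as claimed.
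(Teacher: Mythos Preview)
The paper does not prove this theorem itself; it is quoted verbatim from \cite[Theorem~4]{Rauhut17} and \cite[Theorem~2]{qin2023stable}, so there is no ``paper's own proof'' to compare against. Your covering-plus-concentration sketch is precisely the strategy those references use: a factor-wise $\epsilon$-net on the left-orthogonal parameterization (yielding log-cardinality $O(N\bar d\bar r^2\log(N/\eta))$), Bernstein concentration for the subexponential variables $\langle\calA_k,\calX\rangle^2$, and a union bound.

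One point deserves more care than you give it. In your third step you write that $\calX-\calX_0$ has TT rank at most $2\bar r$ and that ``the same net argument (run at rank $2\bar r$) controls the cross terms.'' Running the net at rank $2\bar r$ does not close the loop, because $\calX-\calX_0$ for $\calX,\calX_0$ of rank $2\bar r$ has rank $4\bar r$, and so on. The actual fix is the one you allude to in your final paragraph: since the net is built factor-wise, the difference $\calX-\calX_0$ telescopes (via \Cref{EXPANSION_A1TOAN-B1TOBN_1}) into a sum of $N$ terms, each of which is a TT tensor of rank $\bar r$ with one factor perturbed by at most $\eta/N$. This gives the self-referencing inequality
\[
M \;\le\; \sup_{\calX_0\in\calN_\eta}\Bigl|\tfrac{1}{m}\|\calA(\calX_0)\|_2^2 - 1\Bigr| \;+\; C\,N\eta\,(1+M),
\]
which closes once $\eta \le c/N$. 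This is exactly the computation the paper carries out in a different context in \eqref{ProofOf<H,X>forSubGaussian_proof6}. With that correction your outline is correct and matches the cited proofs.
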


\Cref{RIP condition fro the tensor train sensing Lemma} ensures the RIP for $L$-subgaussian measurement ensembles with a number of measurements $m$ only scaling linearly, rather than exponentially, with respect to the tensor order $N$. When the RIP holds, then for any two distinct TT format tensors $\calX_1,\calX_2$ with TT ranks smaller than $\ol r$, noting that $\calX_1 - \calX_2$ is also a TT format tensor according to \eqref{summation of TT format}, we have distinct measurements since
\[
\frac{1}{m}\|\calA(\calX_1)- \calA(\calX_2)\|_2^2 = \frac{1}{m}\|\calA(\calX_1- \calX_2)\|_2^2 \ge (1- \delta_{2\ol r}) \|\calX_1- \calX_2\|_F^2,
\]
which guarantees the possibility of exact recovery. We will now examine the convergence of RGD for solving the factorized optimization problem given by \eqref{Loss Function of general tensor sensing intro}.

\subsection{Exact Recovery with Linear Convergence}
\label{noiseless TT sensing}

We start by restating the factorized approach in \eqref{Loss Function of general tensor sensing intro} that minimizes the discrepancy between the measurements $\vy$ and the linear mapping of the estimated  low-TT-rank tensor $\calX$ as
\begin{eqnarray}
    \label{Loss Function of general tensor sensing}
    \begin{split}
    \min_{\mbox{\tiny$\begin{array}{c}
     {\mX_i}\in\R^{r_{i-1}\times d_i \times r_i},\\
     i\in [N]\end{array}$}} & g({\mX}_1,\dots, {\mX}_N) =\frac{1}{2m}\|\calA([\mX_1,\dots, \mX_N]) - \vy\|_2^2,\\
     &\st \ L^\top({\mX}_i)L({\mX}_i)=\mId_{r_i},i \in [N-1].
    \end{split}
\end{eqnarray}
As for \eqref{Loss Function of general tensor sensing}, we solve the above optimization problem over the Stiefel manifold by the following RGD:
\begin{eqnarray}
    \label{updating equation of Riemannian gradient descent algorithm in the tensor sensing 1}
    &&\hspace{-1.8cm}L(\mX_i^{(t+1)}) \! =\! \text{Retr}_{L(\mX_i)}\big(L(\mX_i^{(t)})-\frac{\mu}{\ol {\sigma}^2(\calX^\star)}\calP_{\text{T}_{L({\mX_i})} \text{St}}\big(\nabla_{L({\mX}_{i})} g(\mX_1^{(t)}, \dots, \mX_N^{(t)})\big) \big),  i\in [N-1],\\
    \label{updating equation of Riemannian gradient descent algorithm in the tensor sensing 2}
    &&\hspace{-1.8cm}L(\mX_N^{(t+1)}) \! =\! L(\mX_N^{(t)})-\mu\nabla_{L({\mX}_{N})} g(\mX_1^{(t)}, \dots, \mX_N^{(t)}),
\end{eqnarray}
where expressions for the gradients are given in {Appendix} \ref{Local Convergence Proof of Riemannman gradient descent Tensor Sensing} (see \eqref{The definition of gradient in the tensor sensing noiseless}). As discussed in Section~\ref{GD algorithm in TF}, our primary focus lies in examining the local convergence of the RGD algorithm. Before presenting the local convergence, we first discuss and analyze a spectral initialization approach, which serves as an initial value for the optimization algorithm.

\paragraph{Spectral Initialization}
\label{Spectral Initialization in sensing noiseless}

To provide a good initialization for the RGD algorithm, we apply the following spectral initialization method:
\begin{eqnarray}
    \label{initialization eqn of spetral}
    \calX^{(0)}=\text{SVD}_{\vr}^{tt}\bigg(\frac{1}{m}\sum_{k=1}^m y_k\calA_k\bigg),
\end{eqnarray}
where $\text{SVD}_{\vr}^{tt}(\cdot )$ is the TT-SVD algorithm \citep{Oseledets11} that can efficiently find an approximately optimal TT approximation to any tensor. This spectral initialization approach has been  widely employed for various inverse problems \citep{lu2020phase}, such as phase retrieval \citep{candes2015phase,luo2019optimal}, low-rank matrix recovery \citep{Ma21TSP,tong2021accelerating}, and structured tensor recovery \citep{cai2019nonconvex,TongTensor21,Han20}.
Here, when $\calA$ satisfies the RIP condition, we can ensure that the initialization  $\calX^{(0)}$ is close to $\calX^\star$.

\begin{theorem}
\label{TENSOR SENSING SPECTRAL INITIALIZATION}
When $\calA$ satisfies the $3\ol r$-RIP for TT format tensors with a constant $\delta_{3\ol r}$, the spectral initialization generated by \eqref{initialization eqn of spetral} satisfies
\begin{eqnarray}
    \label{TENSOR SENSING SPECTRAL INITIALIZATION1}
    \|\calX^{(0)}-\calX^\star\|_F\leq \delta_{3\ol r}(1+\sqrt{N-1})\|\calX^\star\|_F.
\end{eqnarray}
\end{theorem}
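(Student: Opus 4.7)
The plan is to introduce the adjoint back-projection $\wh\calX := \tfrac{1}{m}\calA^*(\vy) = \tfrac{1}{m}\calA^*\calA(\calX^\star)$, so that $\calX^{(0)} = \text{SVD}_{\vr}^{tt}(\wh\calX)$ by construction. The target bound $(1+\sqrt{N-1})\delta_{3\ol r}\|\calX^\star\|_F$ splits naturally into a unit contribution (controlling the low-rank direction of $\wh\calX - \calX^\star$) and a $\sqrt{N-1}$ contribution (from the $N-1$ sequential truncations of TT-SVD); both are handled via a polarized form of RIP. Specifically, for any TT tensor $\calH$ with TT rank at most $2\ol r$, the sum $\calH \pm \calX^\star$ has TT rank at most $3\ol r$ by \eqref{summation of TT format}, so polarizing \eqref{RIP condition fro the tensor train sensing} yields
\[\bigl|\langle \calH, \wh\calX - \calX^\star\rangle\bigr| = \bigl|\tfrac{1}{m}\langle \calA(\calH), \calA(\calX^\star)\rangle - \langle \calH, \calX^\star\rangle\bigr| \leq \delta_{3\ol r}\|\calH\|_F\|\calX^\star\|_F.\]

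For the unit contribution, I would use an inner-product expansion. Since $\calX^{(0)} - \calX^\star$ has TT rank at most $2\ol r$ as a sum of two rank-$\vr$ tensors, applying the polarized RIP bound with $\calH = \calX^{(0)} - \calX^\star$ gives $|\langle \calX^{(0)} - \calX^\star, \wh\calX - \calX^\star\rangle| \leq \delta_{3\ol r}\|\calX^{(0)} - \calX^\star\|_F\|\calX^\star\|_F$. Expanding
\[\|\calX^{(0)} - \calX^\star\|_F^2 = \langle \calX^{(0)} - \calX^\star,\, \calX^{(0)} - \wh\calX\rangle + \langle \calX^{(0)} - \calX^\star,\, \wh\calX - \calX^\star\rangle,\]
applying Cauchy--Schwarz to the first inner product, and dividing through by $\|\calX^{(0)} - \calX^\star\|_F$, yields $\|\calX^{(0)} - \calX^\star\|_F \leq \|\calX^{(0)} - \wh\calX\|_F + \delta_{3\ol r}\|\calX^\star\|_F$.

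It then remains to show $\|\calX^{(0)} - \wh\calX\|_F \leq \sqrt{N-1}\,\delta_{3\ol r}\|\calX^\star\|_F$. I would invoke the TT-SVD quasi-optimality decomposition from \cite{Oseledets11}, namely $\|\calX^{(0)} - \wh\calX\|_F^2 \leq \sum_{k=1}^{N-1}\sigma^2_{\text{tail},k}$, where $\sigma^2_{\text{tail},k}$ is the squared truncation error at the $k$-th TT-SVD step. Using the best-approximation property with the rank-$r_k$ projector $P_k^\star$ onto the column span of $\calX^{\star,\<k\>}$ (readily constructed from $\calX^\star$'s left-canonical factors), one has $\sigma^2_{\text{tail},k} \leq \|(\mId - P_k^\star)\wh\calX^{(k-1),\<k\>}\|_F^2 = \|(\mId - P_k^\star)(\wh\calX^{(k-1)} - \calX^\star)^{\<k\>}\|_F^2$ (exploiting $(\mId - P_k^\star)\calX^{\star,\<k\>} = 0$), where $\wh\calX^{(k-1)}$ denotes the intermediate tensor after the first $k-1$ truncations. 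The dual characterization of this Frobenius norm then lets me write $\sigma_{\text{tail},k}^2 = \langle \wh\calX - \calX^\star, \calW_k\rangle$ for a witness tensor $\calW_k$ of Frobenius norm $\sigma_{\text{tail},k}$; provided $\calW_k$ can be arranged to have TT rank at most $2\ol r$, the polarized RIP bound gives $\sigma_{\text{tail},k} \leq \delta_{3\ol r}\|\calX^\star\|_F$, and summing across $k=1,\ldots,N-1$ yields the required bound.

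The hard part will be this final step: constructing each witness $\calW_k$ as a TT tensor of rank at most $2\ol r$ so the polarized RIP is applicable. The plan is to exploit (a) the partial left-orthogonal structure of the intermediate $\wh\calX^{(k-1)}$ inherited from the previous $k-1$ truncations, which constrains the $k$-th matrix unfolding of $\wh\calX^{(k-1)}$ to have matrix rank at most $r_{k-1}d_k$, and (b) the left-canonical factorization of $\calX^\star$, so that the projector $(\mId - P_k^\star)$ factorizes cleanly through the TT structures of both tensors; assembling $\calW_k$ from the SVD of the projected matrix $(\mId - P_k^\star)\wh\calX^{(k-1),\<k\>}$ together with the fixed orthonormal factors of $\wh\calX^{(k-1)}$ and $\calX^\star$ should then furnish the TT rank bound.
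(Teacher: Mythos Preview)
Your first step---arriving at $\|\calX^{(0)}-\calX^\star\|_F \le \|\calX^{(0)}-\wh\calX\|_F + \delta_{3\ol r}\|\calX^\star\|_F$ via the inner-product expansion and polarized RIP---is correct and parallels the paper. The gap is in the second step, where you try to bound each truncation error $\sigma_{\text{tail},k}$ by $\delta_{3\ol r}\|\calX^\star\|_F$ through a witness tensor $\calW_k$ of TT rank at most $2\ol r$. This construction cannot be carried out. First, there is a mismatch: your bound uses the \emph{intermediate} tensor $\wh\calX^{(k-1)}$, so duality yields $\langle \wh\calX^{(k-1)} - \calX^\star, \calW_k\rangle$ rather than $\langle \wh\calX - \calX^\star, \calW_k\rangle$, and since $\wh\calX^{(k-1)}\neq \tfrac1m\calA^*\calA(\calX^\star)$ the polarized RIP does not apply. (If instead you invoke the Oseledets bound with the \emph{original} unfoldings $\wh\calX^{\langle k\rangle}$, the mismatch goes away, but the witness then loses all structure coming from previous truncations.) Second, and more fundamentally, a TT-rank bound on $\calW_k$ requires controlling \emph{all} $N-1$ unfolding ranks, but nothing bounds the unfoldings at levels $k+1,\ldots,N-1$: those inherit the ranks of the untruncated right part of $\wh\calX^{(k-1)}$ (or of $\wh\calX$ itself), which are generically full. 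The partial left-orthogonal structure you invoke constrains only the first $k-1$ unfoldings, and even at level $k$ the matrix-rank bound $r_{k-1}d_k$ you cite is nowhere near $2r_k$.

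The paper sidesteps this entirely by never committing to the full Frobenius norm $\|\calX^{(0)} - \wh\calX\|_F$ (which would ultimately force a bound on $\|\wh\calX-\calX^\star\|_F$, something RIP alone cannot give). Instead it introduces a \emph{restricted} Frobenius norm $\|\calH\|_{F,2\ol r} := \max\{\langle \calH,\calZ\rangle : \|\calZ\|_F\le 1,\ \text{TT rank}(\calZ)\le 2\vr\}$ and runs the whole triangle-inequality and TT-SVD quasi-optimality chain inside this norm, arriving at $\|\calX^{(0)}-\calX^\star\|_F = \|\calX^{(0)}-\calX^\star\|_{F,2\ol r} \le (1+\sqrt{N-1})\,\|\wh\calX-\calX^\star\|_{F,2\ol r}$. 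The right-hand side is \emph{exactly} $\max_{\calZ}\bigl\langle(\tfrac1m\calA^*\calA-\calI)(\calX^\star),\calZ\bigr\rangle$ over low-TT-rank unit-norm $\calZ$, which polarized RIP bounds directly by $\delta_{3\ol r}\|\calX^\star\|_F$---no witness construction at all. The conceptual point you are missing is that only the low-TT-rank ``shadow'' of $\wh\calX-\calX^\star$ ever matters for this argument, and working in the restricted norm captures precisely that.
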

The proof is provided in {Appendix} \ref{Proof of in the spectral initialization}. Referring to \Cref{TENSOR SENSING SPECTRAL INITIALIZATION}, it can be observed that by ensuring a sufficiently small $\delta_{3\ol r}$, we can always find a suitable initialization within a desired distance to the ground truth.

\paragraph{Exact recovery with linear convergence of RGD}
Again, our analysis utilizes the left-orthogonal form factors $\{\mX_i^\star\}$ of a minimal TT decomposition of $\calX^\star$, although the factors are not required for implementing the RGD algorithm in \eqref{updating equation of Riemannian gradient descent algorithm in the tensor sensing 1} and \eqref{updating equation of Riemannian gradient descent algorithm in the tensor sensing 2}. We now establish a local linear convergence guarantee for RGD.
\begin{theorem}
\label{Local Convergence of Riemannian in the sensing_Theorem}
Consider a low-TT-rank tensor  $\calX^\star$ with ranks $\vr = (r_1,\dots, r_{N-1})$. Assume $\calA$ obeys the $(N+3)\ol r$-RIP with a constant $\delta_{(N+3)\ol r}\leq\frac{4}{15}$ and $\ol r=\max_i r_i$. Given $\vy = \calA(\calX^\star)$,  let  $\{\mX_i^{(t)} \}_{t\geq 0}$ be the iterates generated by the RGD algorithm in \eqref{updating equation of Riemannian gradient descent algorithm in the tensor sensing 1} and \eqref{updating equation of Riemannian gradient descent algorithm in the tensor sensing 2}.
Suppose the algorithm is initialized with $\{\mX_i^{(0)} \}$ satisfying
\begin{eqnarray}
    \label{Local Convergence of Riemannian in the sensing_Theorem initialization}
    \dist^2(\{\mX_i^{(0)} \},\{ \mX_i^\star \})\leq \frac{(4-15\delta_{(N+3)\ol r})\underline{\sigma}^2(\calX^\star)}{8(N+1+\sum_{i=2}^{N-1}r_i)(57N^2+393N-450)}
\end{eqnarray}
and uses step size $\mu\leq \frac{4 - 15\delta_{(N+3)\ol r}}{10(9N -5)(1 + \delta_{(N+3)\ol r})^2}$.  Then, the iterates $\{\mX_i^{(t)} \}_{t\geq 0}$  converge linearly to $\{\mX_i^\star  \}$ (up to rotation):
\begin{eqnarray}
    \label{Local Convergence of Riemannian in the sensing_Theorem_1}
    \dist^2(\{\mX_i^{(t+1)} \},\{ \mX_i^\star \})\leq\bigg(1-\frac{4-15\delta_{(N+3)\ol r}}{320(N+1+\sum_{i=2}^{N-1}r_i)\kappa^2(\calX^\star)}\mu\bigg)\dist^2(\{\mX_i^{(t)} \},\{ \mX_i^\star \}).
\end{eqnarray}
\end{theorem}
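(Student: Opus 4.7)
The plan is to mirror the three-step proof of \Cref{Local Convergence of Stiefel_Theorem} (the noiseless factorization case), replacing exact identities with RIP-based approximations. First I would start from the same error-metric expansion as in \eqref{expansion of distance in tensor factorization-main paper}, using the non-expansiveness of the polar retraction and the optimality of $\mR^{(t+1)}$, so that $\dist^2(\{\mX_i^{(t+1)}\}, \{\mX_i^\star\})$ is bounded by $\dist^2(\{\mX_i^{(t)}\}, \{\mX_i^\star\})$ minus $2\mu$ times a cross term plus $\mu^2$ times a squared-gradient term. The Euclidean gradient with respect to $L(\mX_i)$ now involves $\frac{1}{m}\calA^*\calA(\calX^{(t)}-\calX^\star)$ in place of $\calX^{(t)}-\calX^\star$, so the entire argument reduces to controlling the discrepancy between $\frac{1}{m}\<\calA(\calZ_1),\calA(\calZ_2)\>$ and $\<\calZ_1,\calZ_2\>$ for the various auxiliary TT tensors $\calZ_1,\calZ_2$ appearing in the factorization proof.

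Next, I would lower-bound the cross term using RIP. In the factorization proof the cross-term bound \eqref{RIEMANNIAN FACTORIZATION CROSS TERM LOWER BOUND-main paper} is obtained by writing $\sum_i\<L(\mX_i^{(t)})-L_{\mR^{(t)}}(\mX_i^\star),\calP_T(\nabla_{L(\mX_i)}f)\>$ as an inner product of $\calX^{(t)}-\calX^\star$ with a telescoping sum $\calZ=\sum_{i=1}^N\calH_i$, where each $\calH_i$ is the TT tensor obtained by replacing the $i$-th factor in $[\mX_1^{(t)},\dots,\mX_N^{(t)}]$ by the corresponding difference. Each $\calH_i$ has TT rank at most $2\ol r$, and the full sum $\calZ$ has TT rank at most $(N+1)\ol r$; combined with the residual $\calX^{(t)}-\calX^\star$ of rank at most $2\ol r$, the RIP polarization identity applies at level $(N+3)\ol r$, giving an additive slack of $\delta_{(N+3)\ol r}\|\calZ\|_F\|\calX^{(t)}-\calX^\star\|_F$. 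Bounding $\|\calZ\|_F$ by $\|\calX^{(t)}-\calX^\star\|_F$ via \Cref{LOWER BOUND OF TWO DISTANCES main paper} and the initialization assumption \eqref{Local Convergence of Riemannian in the sensing_Theorem initialization}, I can absorb the RIP slack into the $\frac18\|\calX^{(t)}-\calX^\star\|_F^2$ term in \eqref{RIEMANNIAN FACTORIZATION CROSS TERM LOWER BOUND-main paper} to obtain a new constant $\frac{4-15\delta_{(N+3)\ol r}}{160}$ in place of $\frac{1}{128}$ in front of the distance term.

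Third, I would upper-bound the squared-gradient term. Each $\calP_{\text{T}_{L(\mX_i)}\text{St}}(\nabla_{L(\mX_i)}g)$ is essentially an inner product of $\frac{1}{m}\calA^*\calA(\calX^{(t)}-\calX^\star)$ against a rank-$\ol r$ TT-like slab built from the current iterates; the RIP upper bound gives $\|\frac{1}{m}\calA^*\calA(\calX^{(t)}-\calX^\star)\|_F\le(1+\delta_{(N+3)\ol r})\|\calX^{(t)}-\calX^\star\|_F$, so \eqref{RIEMANNIAN FACTORIZATION SQUARED TERM UPPER BOUND-main paper} carries over with an extra factor of $(1+\delta_{(N+3)\ol r})^2$. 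Plugging both bounds into the expansion and requiring $\mu\le\frac{4-15\delta_{(N+3)\ol r}}{10(9N-5)(1+\delta_{(N+3)\ol r})^2}$ makes the $\mu^2$ term dominated by half of the $\mu$-linear term, yielding the claimed contraction factor $1-\frac{(4-15\delta_{(N+3)\ol r})\underline{\sigma}^2(\calX^\star)}{320(N+1+\sum_{i=2}^{N-1}r_i)\|\calX^\star\|_F^2}\mu$. An induction on $t$ then preserves the initialization neighborhood \eqref{Local Convergence of Riemannian in the sensing_Theorem initialization} for all iterates.

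The main obstacle is the careful bookkeeping of TT ranks of the auxiliary tensors $\calH_i$ and their telescoping sum $\calZ$: I must verify that every pairing $\frac{1}{m}\<\calA(\cdot),\calA(\cdot)\>$ appearing in the cross-term and squared-term bounds involves TT tensors of total rank at most $(N+3)\ol r$, and that the constant $\delta_{(N+3)\ol r}\le\frac{4}{15}$ is precisely what is needed to keep the linear-term coefficient positive after absorbing the RIP slack. A secondary difficulty is coordinating the initialization bound with \Cref{LOWER BOUND OF TWO DISTANCES main paper} so that $\|\calX^{(t)}-\calX^\star\|_F$ remains small enough throughout the iteration for the RIP perturbations to be dominated by the signal terms from the factorization analysis.
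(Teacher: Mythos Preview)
Your proposal is essentially the same approach as the paper's proof, and the three-step outline (expansion via non-expansiveness, RIP-modified cross-term lower bound, RIP-modified squared-gradient upper bound, then combine under the step-size constraint) matches the paper exactly. Two small points of execution worth flagging: first, the paper does not bound $\|\calZ\|_F$ by $\|\calX^{(t)}-\calX^\star\|_F$ directly but instead splits the cross term as $\frac{1}{m}\|\calA(\calX^{(t)}-\calX^\star)\|_2^2+\frac{1}{m}\<\calA(\calX^{(t)}-\calX^\star),\calA(\vh^{(t)})\>$, so the RIP slack on the main term is only $\delta_{2\ol r}\|\calX^{(t)}-\calX^\star\|_F^2$ while the $(N+3)\ol r$-RIP is reserved for the second-order remainder $\vh^{(t)}=\calZ-(\calX^{(t)}-\calX^\star)$, whose norm is $O(\dist^2)$ by \Cref{TRANSFORMATION OF NPLUS2 VARIABLES_1}; your version works too via $\|\calZ\|_F\le\|\calX^{(t)}-\calX^\star\|_F+\|\vh^{(t)}\|_F$, just with marginally looser constants. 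Second, for the squared-gradient term the paper does not bound $\|\frac{1}{m}\calA^*\calA(\calX^{(t)}-\calX^\star)\|_F$ (which is not low-rank) but rather bounds $\|\nabla_{L(\mX_i)}g-\nabla_{L(\mX_i)}f\|_F$ via the dual characterization $\max_{\|\mH_i\|_F\le1}\<(\frac{1}{m}\calA^*\calA-\calI)(\calX^{(t)}-\calX^\star),[\mX_1^{(t)},\dots,\mH_i,\dots,\mX_N^{(t)}]\>$, which needs only $3\ol r$-RIP and gives the $(1+\delta_{3\ol r})$ factor you anticipated.
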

The proof is given in {Appendix} \ref{Local Convergence Proof of Riemannman gradient descent Tensor Sensing}. We omit the overview of the proof as it shares a similar structure to the one in Section~\ref{GD algorithm in TF} for the TT factorization problem, with the key difference being the involvement of the sensing operator and the utilization of the RIP.
Our results demonstrate that, similar to the findings presented in \Cref{Local Convergence of Stiefel_Theorem}, when the initial condition $\dist^2(\{\mX_i^{(0)} \},\{ \mX_i^\star \})\leq O(\frac{\underline{\sigma}^2(\calX^\star)}{N^3\ol r})$ is satisfied during the initial stage, RGD exhibits a linear convergence rate of $1 - O(\frac{1}{ N^2\ol r\kappa^2(\calX^\star)})$. Notably, both the convergence rate and the initial requirement depend on the tensor order $N$ only polynomially rather than exponentially. Note that the RIP is required to hold for TT format tensors with rank $(N+3)\ol r$; this is because the analysis involves a summation of $N+3$ TT format tensors of rank $\ol r$ as in \eqref{expansion of distance in tensor factorization-main paper} and \eqref{RIEMANNIAN FACTORIZATION CROSS TERM LOWER BOUND-main paper}.
However, the requirement of $(N+3)\ol r$-RIP may perhaps be improved to $2\ol r$-RIP (or higher, such as $4\ol r$-RIP, but  independent of $N$) by using an alternative approach to bound the term \eqref{Lower BOUND OF cross TERM OF Riemannian GD IN SENSING Conclusion original} in {Appendix} \ref{Local Convergence Proof of Riemannman gradient descent Tensor Sensing}. A thorough investigation of this improvement is left for future work.  To utilize the spectral initialization, we can invoke {Lemma} \ref{LOWER BOUND OF TWO DISTANCES main paper} to rewrite \eqref{Local Convergence of Riemannian in the sensing_Theorem initialization} in terms of the tensors, i.e., $\|\calX^{(0)}-\calX^\star\|_F^2\leq \frac{(4-15\delta_{(N+3)\ol r})\underline{\sigma}^2(\calX^\star)}{64(57N^2+393N-450)(N+1+\sum_{i=2}^{N-1}r_i)^2\kappa^2(\calX^\star)}\lesssim O(\frac{\underline{\sigma}^2(\calX^\star)}{N^4 {\ol r}^2\kappa^2(\calX^\star)})$; see {Lemma} \ref{LOWER BOUND OF TWO DISTANCES} in {Appendix} \ref{Technical tools used in proofs} for the details.  Thus, \Cref{TENSOR SENSING SPECTRAL INITIALIZATION} ensures that spectral initialization satisfies the requirement \eqref{Local Convergence of Riemannian in the sensing_Theorem initialization}, provided that $\delta_{3\ol r} \lesssim \frac{\underline{\sigma}^2(\calX^\star)}{N^{\frac{5}{2}} \ol r \|\calX^\star\|_F^2}$.
\begin{corollary}
\label{summary of noiseless tensor sensing}
Consider a low-TT-rank tensor  $\calX^\star$ with ranks $\vr = (r_1,\dots, r_{N-1})$. Assume that $\calA$ obeys the $(N+3)\ol r$-RIP with constants satisfying $\delta_{3\ol r} \lesssim \frac{\underline{\sigma}^2(\calX^\star)}{N^{\frac{5}{2}} \ol r \|\calX^\star\|_F^2}$ and $ \delta_{(N+3)\ol r}\leq\frac{4}{15}$.
When utilizing the spectral initialization and the step size $\mu\leq \frac{4 - 15\delta_{(N+3)\ol r}}{10(9N -5)(1 + \delta_{(N+3)\ol r})^2}$, RGD converges linearly to a global minimum as in \eqref{Local Convergence of Riemannian in the sensing_Theorem_1}.
\end{corollary}
When the linear map $\calA$ is an  $L$-subgaussian measurement ensemble, using \eqref{eq:mrip}, the RIP requirement is satisfied when
$m\gtrsim \frac{N^6 \ol d \ol r^3\|\calX^\star\|_F^2\kappa^2(\calX^\star)\log (N \ol r)}{\underline{\sigma}^2(\calX^\star)}$.
This requires the number of measurements $m$ grows only polynomially in $N$; however, the order is suboptimal due to the following reasons: $(i)$ TT-SVD, used in the spectral initialization, only provides a quasi-optimal low-rank TT approximation---since computing the optimal low-rank TT approximation is NP-hard \citep{hillar2013most},
$(ii)$ the requirement of $(N+3)\ol r$-RIP in the convergence analysis, and $(iii)$ the relationship between the distances of the factors and of the entire tensors may not be tight in Lemma~\ref{LOWER BOUND OF TWO DISTANCES main paper}.

\paragraph{Special case: Matrix sensing} When $N = 2$, the tensor becomes a matrix and the TT decomposition simplifies to matrix factorization. In this case, the recovery problem reduces to the matrix sensing problem for the rank-$r$ matrix $\mX^\star$. Our \Cref{Local Convergence of Riemannian in the sensing_Theorem} ensures a local linear convergence for RGD with rate of convergence $1 -  \frac{(4-15\delta_{5r})\sigma_r^2(\mX^\star)}{960\sigma_1^2(\mX^\star)}\mu$ when $\calA$ satisfies the $5r$-RIP \citep{CandsTIT11}  with constant $\delta_{5r}\leq \frac{4}{15}$.
Note that the result can be improved to only requiring $3r$-RIP by using {Lemma} \ref{TRANSFORMATION OF NPLUS2 VARIABLES_1} in the analysis of cross terms. In comparison, the work \citep{Tu16} establishes local linear convergence for gradient descent (GD) solving a regularized problem with rate of convergence $1 - \frac{4\sigma_r^2(\mX^\star)}{25\sigma_1^2(\mX^\star)}\mu$, given that the sensing operator $\calA$ satisfies the $6r$-RIP with  constant $\delta_{6r}\leq \frac{1}{6}$. Without relying on any regularizer to balance the two factors, the work  \citep{Ma21TSP} also establishes local linear convergence for GD with rate of convergence $(1-\frac{\sigma_r(\mX^\star)}{50\sigma_1(\mX^\star)}\mu)^2$ when the sensing operator $\calA$ satisfies the $2r$-RIP with constant $\delta_{2r}\leq c_1$ (where $c_1$ is a constant). We can observe that the convergence guarantee for RGD is similar to that of GD in the context of matrix sensing.

We conduct a numerical experiment to compare the performance of the RGD algorithm with GD \citep{Ma21TSP} for matrix sensing. A ground truth matrix $\mX^\star\in\R^{30\times 20}$ of rank $r$ is generated by first creating a random Gaussian matrix with i.i.d.\ entries from a normal distribution, followed by computing a best rank-$r$ approximation. The step size for both the GD and RGD algorithms is set to $\mu = 0.6$. For each experimental setting, we set $m = 150 r$, conduct 20 Monte Carlo trials, and then take the average over the 20 trials to report the results. Figure~\ref{convergence analysis of GD and RGD matrix sensing} shows that both GD and RGD achieve a similar linear convergence rate.

\begin{figure}[!ht]
\centering
\includegraphics[width=5.5cm, keepaspectratio]%
{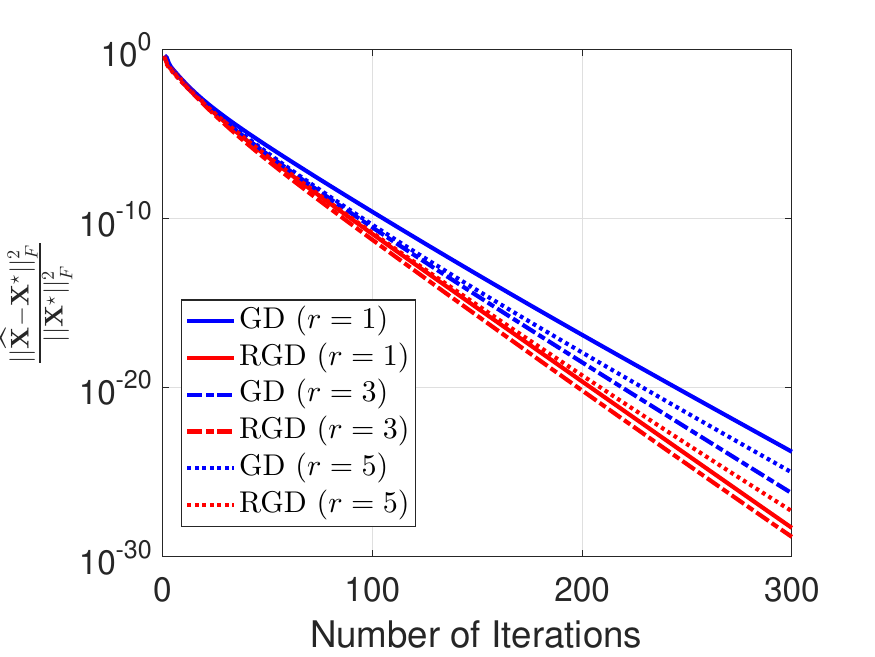}
\caption{Convergence analysis of GD and RGD with $m = 150 r$.}
\label{convergence analysis of GD and RGD matrix sensing}
\end{figure}

\paragraph{Comparison with IHT \citep{Rauhut17} and Riemannian gradient descent on the fixed-rank manifold \citep{budzinskiy2021tensor}}
To the best of our knowledge, our work is the first to offer a convergence guarantee for directly solving the factorization approach for  low-TT-rank  in recovery problems. Two iterative algorithms, iterative hard thresholding (IHT) \citep{Rauhut17} and RGD on the TT manifold \citep{budzinskiy2021tensor}, have been studied with local convergence guarantees. Specifically, the IHT algorithm (GD with TT-SVD truncation) has been proven to converge linearly with a rate of convergence $\frac{a}{4}$ ($a\in(0,1)$), but relies on an unverified perturbation bound of TT-SVD, expressed as
$\|\text{SVD}_{\vr}^{tt}({{\calX}}^{(t)}) - {{\calX}}^{(t)} \|_F\leq (1 + \frac{a^2}{17(1 + \sqrt{1 + \delta_{3r}}\|\calA\| )^2})\|{{\calX}}^{(t)} - \calX^\star \|_F$ where $\calX^{(t)}$ represents the iterate after the gradient update in the $t$-th iteration. As mentioned in \citep{Rauhut17}, $\|\calA\|$ may increase exponentially in terms of $N$, imposing a very strong requirement on the optimality of TT-SVD. Viewing all the TT format tensors of fixed rank as an embedded manifold in $\R^{d_1\times \cdots \times d_N}$ \citep{holtz2012manifolds}, the work \citep{budzinskiy2021tensor} introduces an RGD algorithm on the embedded manifold with TT-SVD retraction to approximate the projection of the estimated tensor onto the embedded manifold. It establishes local linear convergence with the rate $(1+\sqrt{N-1})(\frac{2\delta_{3\ol r} }{1 - \delta_{3\ol r}} + \frac{2}{1 -\delta_{3\ol r}} \frac{\|\calX^{(0)} - \calX^\star \|_F}{\underline{\sigma}(\calX^\star)}  )$. This holds under the conditions  $\|\calX^{(0)}-\calX^\star\|_F^2\lesssim O(\frac{\underline{\sigma}^2(\calX^\star)}{N^2})$, $3\ol r$-RIP with RIP constant $\delta_{3\ol r}\leq\frac{1}{3+2\sqrt{N-1}}$, and an unverified condition $\|\calA^*\calA\|\leq C $ (where $C$ is a constant). Additionly, as mentioned in \citep{Rauhut17}, the Riemannian gradient in the RGD depends on the curvature information at $\calX^\star$ of the embedded manifold, which is often unknown a priori. 

Note that both algorithms require the estimation of the entire tensor $\calX$ in each iteration  and rely on performing the TT-SVD to project the iterates back to the TT format, which demands a substantial amount of storage memory and could be computationally expensive for high-order tensors. In contrast, the factorization approach avoids the need to compute the entire tensor in each iteration.
Specifically, we can employ a tensor contraction operation \citep{cichocki2014tensor} to efficiently compute the gradient without explicitly computing the tensor $\calX$, such as for the term\footnote{Specifically, this term $\<\calA_k, [\mX_1,\dots, \mX_N] \>$ can be efficiently evaluated as $\calA_k\times_1^1 \mX_1\times_{N,1}^{1,2}\mX_2\times_{N-1,1}^{1,2}\cdots \times_{2,1}^{1,2} \mX_{N}$, where we reshape $\mX_1$ and $\mX_N$ as matrices of size ${d_1\times r_1}$ and ${r_{N-1}\times d_N}$, respectively. Here, the tensor contraction operation $\calA_k\times_1^1 \mX_1$ results in a new tensor $\calB$ of size $d_2\times d_3\times \cdots \times d_N\times r_1$, with the  $(s_2,s_3,\ldots,s_{N+1})$-th entry being $\sum_{s_1}\calA_k(s_1,\ldots,s_N)\mX_1(s_1,s_{N+1})$. Likewise, $\calB \times_{N,1}^{1,2}\mX_2$ results in a new tensor of size $d_3\times d_4\times \cdots \times d_N\times r_2$, with the $(s_3,\ldots,s_N,s_{N+2})$-th entry being $\sum_{s_2,s_{N+1}}\calB(s_2,s_3,\ldots,s_{N+1})\mX_2(s_{N+1},s_2,s_{N+2})$.} $\<\calA_k, [\mX_1,\dots, \mX_N] \>$. Intuitively, this is the same as in the matrix case where we can efficiently compute $\<\mA,\vu\vv^\top\>$ as $\<\mA^\top\vu,\vv\>$ without the need of computing $\vu\vv^\top$ for any $ \mA\in\R^{d_1\times d_2}, \vu\in\R^{d_1}, \vv\in \R^{d_2}$.

\subsection{Recovery Guarantee for Noisy TT Format Tensor Sensing}
\label{noisy TT sensing}
In practice, measurements are often noisy, either due to additive noise or the probabilistic nature of the measurements, as seen in quantum state tomography \citep{qin2024quantum}, which causes statistical error in the measurements and can also be modeled as additive noise. In this subsection, we will extend our analysis to TT sensing with noisy measurements of the form
\begin{eqnarray}
    \label{Noisy Definition of tensor sensing}
    \wh \vy = \calA(\calX^\star) + \vepsilon \in\R^m,
\end{eqnarray}
where the noise vector $\vepsilon$ is assumed to be a Gaussian random vector with a mean of zero and a covariance of $\gamma^2\mId_{m}$. Similar to \eqref{Loss Function of general tensor sensing}, we attempt to estimate the target low-TT-rank tensor $\calX^\star$ by solving the following constrained factorized optimization problem:
\begin{eqnarray}
    \label{Noisy Loss Function of general tensor sensing}
    \begin{split}\hspace{-0.5cm}\min_{\mbox{\tiny$\begin{array}{c}
     {\mX_i}\in\R^{r_{i-1}\times d_i \times r_i}\\
     i\in[N]\end{array}$}} & G({\mX}_1,\dots, {\mX}_N) = \frac{1}{2m}\|\calA([\mX_1,\dots, \mX_N]) - \wh \vy\|_2^2,\\
     &\st \ L^\top({\mX}_i)L({\mX}_i)=\mId_{r_i},i \in [N-1].
    \end{split}
\end{eqnarray}

\paragraph{Spectral Initialization}
\label{Spectral Initialization in sensing}
In the midst of a noisy environment, we can still employ the spectral initialization method to obtain an appropriate initialization $\calX^{(0)}$; that is
\begin{eqnarray}
    \label{initialization eqn of spectral noise}
    \calX^{(0)}=\text{SVD}_{\vr}^{tt}\bigg(\frac{1}{m}\sum_{k=1}^m \wh y_k \calA_k\bigg),
\end{eqnarray}
which is guaranteed to be close to the ground-truth $\calX^\star$
when the operator $\calA$ satisfies the RIP.

\begin{theorem}
\label{TENSOR SENSING noisy SPECTRAL INITIALIZATION}
Assume that the operator $\calA$ satisfies the $3\ol r$-RIP for TT format tensors with constant $\delta_{3\ol r}$ and that the additive noise vector $\vepsilon$ is randomly generated from the distribution $\calN({\bm 0},\gamma^2\mId_{m})$. Then with probability at least $1-2e^{-c_1N\ol d\ol r^2 \log N}$, the spectral initialization in \eqref{initialization eqn of spectral noise} satisfies
\begin{eqnarray}
    \label{Noisy TENSOR SENSING SPECTRAL INITIALIZATION1}
    \|\calX^{(0)}-\calX^\star\|_F\leq (1+\sqrt{N-1})\bigg(\delta_{3\ol r}\|\calX^\star\|_F + \frac{c_2\ol r\sqrt{(1+\delta_{3\ol r})N\ol d\log N} }{\sqrt{m}}\gamma\bigg),
\end{eqnarray}
where $c_1,c_2$ are positive constants, $\ol r=\max_{i=1}^{N-1} r_i$ and $\ol d=\max_{i=1}^N d_i$.
\end{theorem}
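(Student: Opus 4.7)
I would mirror the proof of \Cref{TENSOR SENSING SPECTRAL INITIALIZATION} and graft on a concentration argument for the Gaussian noise. Decompose
\begin{equation*}
\mathcal{Z}:=\frac{1}{m}\sum_{k=1}^m \wh y_k \calA_k \;=\; \underbrace{\frac{1}{m}\calA^*\calA(\calX^\star)}_{=:\mathcal{Z}_s} \;+\; \underbrace{\frac{1}{m}\calA^*(\vepsilon)}_{=:\mathcal{Z}_n},
\end{equation*}
so that $\calX^{(0)}=\text{SVD}_{\vr}^{tt}(\mathcal{Z})$. Reusing the sequential TT-SVD quasi-optimality plus triangle-inequality backbone from the proof of \Cref{TENSOR SENSING SPECTRAL INITIALIZATION}, I obtain a bound of the form
\begin{equation*}
\|\calX^{(0)}-\calX^\star\|_F \;\le\; (1+\sqrt{N-1})\,\sup_{\calB\in\calT_{2\ol r}}\<\calB,\mathcal{Z}-\calX^\star\>,
\end{equation*}
where $\calT_{2\ol r}$ is the unit-Frobenius set of TT tensors of TT rank at most $2\ol r$. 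Working with this TT-rank-restricted sup rather than the exponentially-large full Frobenius norm $\|\mathcal{Z}-\calX^\star\|_F$ is essential, and is available because $\calX^{(0)}-\calX^\star$ itself has TT rank at most $2\ol r$ by \eqref{summation of TT format}, analogously to the matrix fact $\|P_r(\mZ)-\mX^\star\|_F\lesssim\sqrt{r}\,\|\mZ-\mX^\star\|_{\mathrm{op}}$.

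Splitting the sup by triangle inequality into a signal contribution $\sup_{\calB}\<\calB,\mathcal{Z}_s-\calX^\star\>$ and a noise contribution $\sup_{\calB}\<\calB,\mathcal{Z}_n\>$, the signal piece is exactly what \Cref{TENSOR SENSING SPECTRAL INITIALIZATION} controls: polarization plus $3\ol r$-RIP (the ``3'' accommodates the rank-$\ol r$ ground truth together with the rank-$2\ol r$ test tensor) yields $\delta_{3\ol r}\|\calX^\star\|_F$. For the noise piece, condition on $\calA$ so that RIP holds deterministically. For any fixed $\calB\in\calT_{2\ol r}$, the scalar $\<\calB,\mathcal{Z}_n\>=\frac{1}{m}\<\calA(\calB),\vepsilon\>$ is centered Gaussian with variance at most $\frac{\gamma^2}{m^2}\|\calA(\calB)\|_2^2\le\frac{(1+\delta_{3\ol r})\gamma^2}{m}$. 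I would then build an $\epsilon$-net $\calN_\epsilon$ of $\calT_{2\ol r}$ by covering each of the $N$ factors on its parameter space (the first $N-1$ on Stiefel manifolds of dimension $O(\ol d\ol r^2)$, the last in a Euclidean ball of the same order) and piecing the pieces together via the factor-wise Lipschitz property of the TT contraction, yielding $\log|\calN_\epsilon|\le CN\ol d\ol r^2\log(1/\epsilon)$. A Gaussian-maximum union bound over $\calN_\epsilon$ with $\epsilon$ a small constant then gives
\begin{equation*}
\sup_{\calB\in\calT_{2\ol r}}|\<\calB,\mathcal{Z}_n\>| \;\lesssim\; \gamma\sqrt{\frac{(1+\delta_{3\ol r})N\ol d\ol r^2\log N}{m}}\;=\;\ol r\gamma\sqrt{\frac{(1+\delta_{3\ol r})N\ol d\log N}{m}}
\end{equation*}
on an event of probability at least $1-2e^{-c_1N\ol d\ol r^2\log N}$, matching the stated noise term. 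Combining with the signal bound and multiplying by $(1+\sqrt{N-1})$ completes the proof.

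The main obstacle will be the $\epsilon$-net construction with the sharp metric entropy $O(N\ol d\ol r^2\log(1/\epsilon))$: I must show that covering each left-orthogonal factor at an appropriate scale and pushing the perturbations through the TT contraction yields only an additive — rather than multiplicative — dependence on $N$, which hinges on the Lipschitz estimate $\|[\mX_1,\dots,\mX_N]-[\mX_1',\dots,\mX_N']\|_F\lesssim\sum_i\|\mX_i-\mX_i'\|_F$ afforded by left-orthogonality of the unperturbed factors. A secondary technical point is carefully extracting the TT-SVD quasi-optimality in the TT-rank-restricted form used above from the internals of the proof of \Cref{TENSOR SENSING SPECTRAL INITIALIZATION}; this is the TT analogue of the matrix operator-norm-based spectral bound and is what keeps the final error polynomial, rather than exponential, in $N$.
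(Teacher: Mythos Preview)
The proposal is correct and follows essentially the same route as the paper's proof: both reduce to $(1+\sqrt{N-1})\sup_{\calB\in\calT_{2\ol r}}\<\calB,\mathcal{Z}-\calX^\star\>$ via TT-SVD quasi-optimality, split into signal and noise, handle the signal by $3\ol r$-RIP, and control the noise by a factor-wise $\epsilon$-net of metric entropy $O(N\ol d\ol r^2\log(1/\epsilon))$ combined with Gaussian concentration conditioned on the RIP event. The only slip is the phrase ``with $\epsilon$ a small constant'': the net-to-sup transfer uses exactly the Lipschitz estimate you wrote and incurs an $N\epsilon$ loss, so the paper (and you, implicitly, since your final bound carries the $\log N$) must take $\epsilon\asymp 1/N$, which is what produces the $\log N$ in both the covering number and the probability exponent.
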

The proof is provided in {Appendix} \ref{Proof of in the noisy spectral initialization}. Compared to the noiseless case, \eqref{Noisy TENSOR SENSING SPECTRAL INITIALIZATION1} for the noisy scenario includes an additional term caused by the additive noise in the measurement, which only scales polynomially with $N$ thanks to the concise structure of the TT factorization.

\paragraph{Local convergence of RGD algorithm}

The following result ensures that, given an appropriate initialization, RGD will converge to the target low-TT-rank tensor up to a certain distance that is proportional to the noise level.
\begin{theorem}
\label{Local Convergence of Riemannian in the noisy sensing_Theorem}
Consider a low-TT-rank tensor  $\calX^\star$ with ranks $\vr = (r_1,\dots, r_{N-1})$. Assume that $\calA$ obeys the $(N+3)\ol r$-RIP with a constant $\delta_{(N+3)\ol r}\leq\frac{7}{30}$, where $\ol r=\max_i r_i$. Suppose that the RGD algorithm in \eqref{updating equation of Riemannian gradient descent algorithm in the tensor sensing 1} and \eqref{updating equation of Riemannian gradient descent algorithm in the tensor sensing 2} is initialized with $\{\mX_i^{(0)} \}$ satisfying
\begin{eqnarray}
    \label{Local Convergence of Riemannian in the noisy sensing_Theorem initialization}
    \dist^2(\{\mX_i^{(0)} \},\{ \mX_i^\star \})\leq \frac{(7 - 30\delta_{(N+3)\ol r})\underline{\sigma}^2(\calX^\star)}{8(N+1+\sum_{i=2}^{N-1}r_i)(129N^2+7231N-7360)}
\end{eqnarray}
and uses the step size $\mu\leq\frac{7 - 30\delta_{(N+3)\ol r}}{20(9N-5)(1+\delta_{(N+3)\ol r})^2}$.  Then, with probability at least $1 \! - 2Ne^{-\Omega(N\ol d\ol r^2 \log N)} \\  - 2e^{-\Omega(N^3\ol d\ol r^2 \log N)}$, the iterates $\{\mX_i^{(t)} \}_{t\geq 0}$ generated by RGD satisfy
\begin{eqnarray}
    \label{Local Convergence of Riemannian in the noisy sensing_Theorem_1}
    \dist^2(\{\mX_i^{(t+1)} \},\{ \mX_i^\star \})&\!\!\!\!\leq\!\!\!\!&\bigg(1-\frac{7 - 30\delta_{(N+3)\ol r}}{1280(N+1+\sum_{i=2}^{N-1}r_i)\kappa^2(\calX^\star)}\mu\bigg)^{t+1}
    \dist^2(\{\mX_i^{(0)} \},\{ \mX_i^\star \})\nonumber\\
&\!\!\!\!\!\!\!\!&\hspace{-1cm} +  O\bigg(\frac{(N + \mu)(N+1+\sum_{i=2}^{N-1}r_i)(1+\delta_{(N+3)\ol r}) N^2\ol d\ol r^2(\log N) \kappa^2(\calX^\star)\gamma^2 }{m(7 - 30\delta_{(N+3)\ol r})}\bigg)\nonumber\\
\end{eqnarray}
as long as $m\geq C \frac{N^5\ol d \ol r^3 (\log N)\kappa^2(\calX^\star) \gamma^2}{\underline{\sigma}^2(\calX^\star)} $ with a universal constant $C$ and $\ol d=\max_i d_i$.
\end{theorem}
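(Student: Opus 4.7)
The proof parallels that of Theorem \ref{Local Convergence of Riemannian in the sensing_Theorem} for the noiseless case; the core new ingredient is uniform control of the noise perturbation in the gradient. The plan is to begin from the same distance expansion used in (\ref{expansion of distance in tensor factorization-main paper}) via the nonexpansiveness of the polar retraction and the best rotations $\{\mR_i^{(t)}\}$ defined in (\ref{the definition of orthonormal matrix R}), yielding
\[
\dist^2(\{\mX_i^{(t+1)}\},\{\mX_i^\star\}) \leq \dist^2(\{\mX_i^{(t)}\},\{\mX_i^\star\}) - 2\mu\sum_{i=1}^N\bigl\langle L(\mX_i^{(t)})-L_{\mR^{(t)}}(\mX_i^\star),\ \wt{\mG}_i\bigr\rangle + \mu^2\sum_{i=1}^N \|\wt{\mG}_i\|^2,
\]
where $\wt{\mG}_i$ denotes the suitably scaled and projected Riemannian gradient of $G$ with respect to $L(\mX_i)$. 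The central observation is the decomposition $\nabla G=\nabla g+\nabla_{\vepsilon}$, where the signal part $\nabla g$ is the noiseless gradient already analyzed and the noise part $\nabla_{\vepsilon}$ comes from pairing $\tfrac{1}{m}\calA^{*}(\vepsilon)$ with the partial derivative of $[\mX_1,\ldots,\mX_N]$ in each factor coordinate.

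For the signal contribution, I would reuse verbatim the cross-term lower bound and the squared-gradient upper bound established in the proof of Theorem \ref{Local Convergence of Riemannian in the sensing_Theorem}. For the noise contribution, the quantity that must be controlled is $\tfrac{1}{m}\langle \calA^{*}(\vepsilon),\calZ\rangle$ for various TT-format ``probe'' tensors $\calZ$ constructed from the iterates; by (\ref{summation of TT format}) these probes have TT rank at most $(N+3)\ol r$, which is precisely why the RIP is required at this rank. Conditioned on $\calA$, each such inner product is Gaussian with variance at most $\tfrac{\gamma^2(1+\delta_{(N+3)\ol r})\|\calZ\|_F^2}{m}$ by the RIP.

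The main obstacle is making this noise bound uniform in $t$, because the iterates depend on $\vepsilon$. I would address this with an $\epsilon$-net over the set of TT-format tensors of TT rank at most $(N+3)\ol r$ and unit Frobenius norm, whose metric entropy is of order $N\ol d\ol r^2\log(N\ol r/\epsilon)$. Combining Gaussian tail bounds with the net argument, and taking a union bound across the $N$ factor updates, yields
\[
\sup_{\calZ}\ \Bigl|\tfrac{1}{m}\langle \calA^{*}(\vepsilon),\calZ\rangle\Bigr| \;\lesssim\; \gamma\sqrt{\tfrac{(1+\delta_{(N+3)\ol r})N\ol d\ol r^2\log N}{m}}\,\|\calZ\|_F,
\]
with probability at least $1 - 2Ne^{-\Omega(N\ol d\ol r^2\log N)} - 2e^{-\Omega(N^3\ol d\ol r^2\log N)}$; the two exponentials correspond respectively to the factor-wise union bound and to a finer net used to control the squared-gradient term uniformly for every $t$. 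The sample-size condition $m\gtrsim N^4\ol d\ol r^3(\log N)\gamma^2/\underline{\sigma}^2(\calX^\star)$ is calibrated so that this noise magnitude is small relative to $\underline{\sigma}(\calX^\star)$, ensuring that both the contraction and the invariance of the local basin used in Theorem \ref{Local Convergence of Riemannian in the sensing_Theorem} are preserved along the entire trajectory.

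Finally, substituting these bounds into the expansion and applying Cauchy--Schwarz with AM--GM, I would absorb the noise component of the cross term into at most half of the signal's contraction coefficient at the cost of an additive $O(\gamma^2)$ residual, and similarly bound the noisy squared term by twice the signal's bound plus a $\gamma^2$ residual. This produces a one-step inequality of the form $\dist^2(\{\mX_i^{(t+1)}\},\{\mX_i^\star\}) \leq (1-\rho\mu)\,\dist^2(\{\mX_i^{(t)}\},\{\mX_i^\star\}) + C\,\mu(N+\mu)\cdot\text{(noise floor)}$ with contraction rate $\rho=\tfrac{(7-30\delta_{(N+3)\ol r})\underline{\sigma}^2(\calX^\star)}{1280(N+1+\sum_{i=2}^{N-1}r_i)\|\calX^\star\|_F^2}$; telescoping this recursion and summing the geometric series in the noise floor then yields (\ref{Local Convergence of Riemannian in the noisy sensing_Theorem_1}), with the $1/\rho$ scaling producing the $(N+1+\sum_{i=2}^{N-1}r_i)\|\calX^\star\|_F^2/\underline{\sigma}^2(\calX^\star)$ prefactor in the stated noise floor.
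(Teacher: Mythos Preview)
Your proposal is correct and follows essentially the same route as the paper: the same distance expansion via nonexpansiveness, the gradient decomposition $\nabla G=\nabla g+\nabla_{\vepsilon}$, reuse of the noiseless cross-term and squared-term bounds, a uniform $\epsilon$-net control of $\tfrac{1}{m}\langle\calA^*(\vepsilon),\calZ\rangle$ over TT tensors, AM--GM absorption of the noise into half of the contraction plus an additive $\gamma^2$ residual, and a telescoped one-step recursion whose induction also verifies the local-basin condition using the assumed lower bound on $m$. One small correction: the $e^{-\Omega(N^3\ol d\ol r^2\log N)}$ probability arises not from a finer net for the squared-gradient term but from the cross-term, where the probe tensor $\calX^{(t)}-\calX^\star+\vh^{(t)}$ has TT rank $(N+3)\ol r$ and hence requires a covering of cardinality $\exp(O(N\ol d\,(N\ol r)^2\log N))$.
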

The proof is given in {Appendix} \ref{Local Convergence Proof of Riemannman gradient descent Noisy Tensor Sensing}.
\Cref{Local Convergence of Riemannian in the noisy sensing_Theorem} provides a similar guarantee to that in \Cref{Local Convergence of Riemannian in the sensing_Theorem} for noisy measurements and shows that once the initial condition is satisfied, RGD converges at a linear rate to the target solution, up to a statistical error due to the additive noise. Notably, the second term in \eqref{Local Convergence of Riemannian in the noisy sensing_Theorem_1} scales linearly in terms of the variance $\gamma^2$, and polynomially in terms of the tensor order $N$. This is due to the presence of a polynomial number of degrees of freedom in the TT format, denoted as $O(N\ol d\ol r^2)$, which effectively mitigates the impact of noise. Similar to the discussion following \Cref{Local Convergence of Riemannian in the sensing_Theorem}, the requirement on the initialization \eqref{Local Convergence of Riemannian in the noisy sensing_Theorem initialization} can be rewritten as $\|\calX^{(0)}-\calX^\star\|_F^2\lesssim O(\frac{\underline{\sigma}^2(\calX^\star)}{N^4 {\ol r}^2\kappa^2(\calX^\star)})$. Using \Cref{TENSOR SENSING noisy SPECTRAL INITIALIZATION} and \Cref{RIP condition fro the tensor train sensing Lemma}, for an $L$-subgaussian measurement ensemble, the requirement for initialization and local convergence can be met by using the number of measurements $m\gtrsim\frac{N^6 \ol d \ol r^3 \kappa^2(\calX^\star)(\|\calX^\star\|_F^2\log (N \ol r) + N\ol r \log N  \gamma^2 ) }{\underline{\sigma}^2(\calX^\star)}$, which compared with the noiseless case has an additional factor that scales with noise level $\gamma^2$. 
The recovery guarantee \eqref{Local Convergence of Riemannian in the noisy sensing_Theorem_1} scales optimally with the noise level $\gamma^2$, but is suboptimal with respect to the tensor order $N$ when compared to the following minimax lower bound from \citep[Theorem 3]{qin2024computational}.

\begin{corollary}
\label{minimax bound of error difference}
Consider the tensor sensing problem in \eqref{Noisy Definition of tensor sensing} for TT format tensors $\calX \in \setX_{\vr} = \{\calX \in\R^{d_1\times\cdots\times d_{N}}: \vr = (r_{1},\dots,r_{N-1})\}$. Suppose $\min_{i=1}^{N-1} r_i\geq C$ for some absolute constant $C$, and each element of $\{\calA_i \}_i$ and $\vepsilon$ in \eqref{Noisy Definition of tensor sensing} respectively are drawn independently from the distributions $\calN(0,1)$ and $\calN(0,\gamma^2)$. Then, it holds that
\begin{eqnarray}
    \label{minimax bound of error final_ conclusion}
    \inf_{\wh \calX}\sup_{\calX\in\setX_{\vr}}\E{\|\wh{\calX} - \calX\|_F} \gtrsim \sqrt{\frac{\sum_{j=1}^{N}d_jr_{j-1}r_j}{m}}\gamma.
\end{eqnarray}
\end{corollary}
The above minimax lower bound demonstrates the potential for improving the recovery guarantee in \Cref{Local Convergence of Riemannian in the noisy sensing_Theorem}, which together with {Lemma} \ref{LOWER BOUND OF TWO DISTANCES main paper} states that $\| \calX^{(t)} - \calX^\star\|_F\leq O\bigg( \frac{N^5\ol d \ol r^3 (\log N)\kappa^2(\calX^\star) \gamma^2}{m}\bigg)$ for sufficiently large $t$.
Similar to the discussion right after Corollary~\ref{summary of noiseless tensor sensing}, the suboptimal dependence with tensor order $N$ arise from the following factors:
$(i)$ the requirement of $(N+3)\ol r$-RIP in the convergence analysis, and $(ii)$ the exchange between the distances of the factors and the entire tensors. We leave the improvement as the subject of future work.

\section{Numerical Experiments}
\label{Numerical experiments}

In this section, we conduct numerical experiments to evaluate the performance of the RGD algorithm for tensor train sensing and completion.
In all the experiments, we generate an order-$N$ ground truth tensor $\calX^\star\in\R^{d_1\times\cdots \times d_N}$ in TT format with ranks $\vr = (r_1,\dots, r_{N-1})$ by first generating a random Gaussian tensor with i.i.d. entries from the normal distribution, and then using the sequential SVD algorithm to obtain a TT format tensor, which is finally normalized to unit Frobenius norm, i.e., $\|\calX^\star\|_F=1$. To simplify the selection of parameters, we set $d=d_1=\cdots=d_N$ and $r=r_1=\cdots = r_{N-1}$. For the RGD algorithm in \eqref{updating equation of Riemannian gradient descent algorithm in the tensor sensing 1} and \eqref{updating equation of Riemannian gradient descent algorithm in the tensor sensing 2}, we set $\mu = 0.5$ to compute factors. To avoid the high computational complexity associated with $\ol {\sigma}^2(\calX^\star)$, we replace it with $\|\calX^\star\|_F^2$ in the RGD.  For each experimental setting, we conduct 20 Monte Carlo trials and then take the average over the 20 trials to report the results.

\subsection{TT Format Tensor Sensing}
We first consider the tensor sensing problem by generating each  measurement operator $\calA_i, i=1,\dots,m$ as a random tensor with i.i.d.\ entries drawn from the standard normal distribution. We then obtain noisy measurements $\wh y_i = \langle \calA_i, \calX^\star \rangle + \epsilon_i$, where the noise $\epsilon$ is drawn from a Gaussian distribution with a mean of zero and a variance of $\gamma^2$.

\begin{figure}[!ht]
\centering
\subfigure[]{
\begin{minipage}[t]{0.31\textwidth}
\centering
\includegraphics[width=5.2cm]{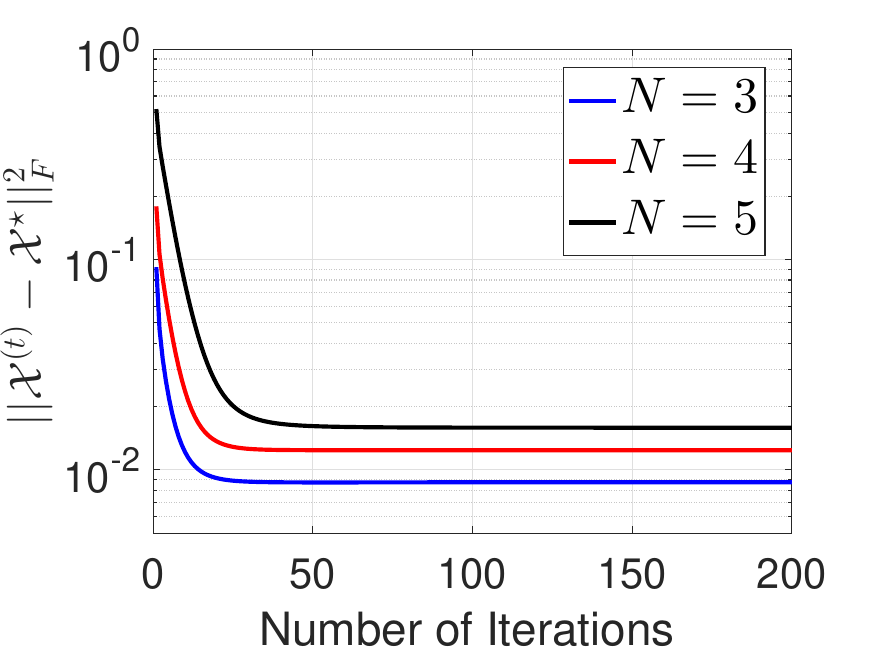}
\end{minipage}
\label{TT_sensing_N}
}
\subfigure[]{
\begin{minipage}[t]{0.31\textwidth}
\centering
\includegraphics[width=5.2cm]{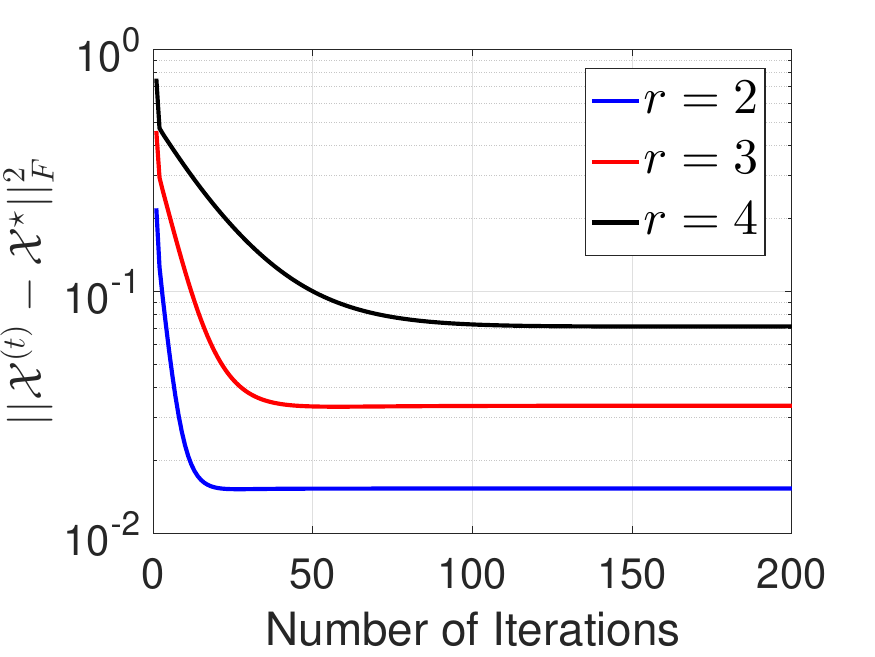}
\end{minipage}
\label{TT_sensing_r}
}
\subfigure[]{
\begin{minipage}[t]{0.31\textwidth}
\centering
\includegraphics[width=5.2cm]{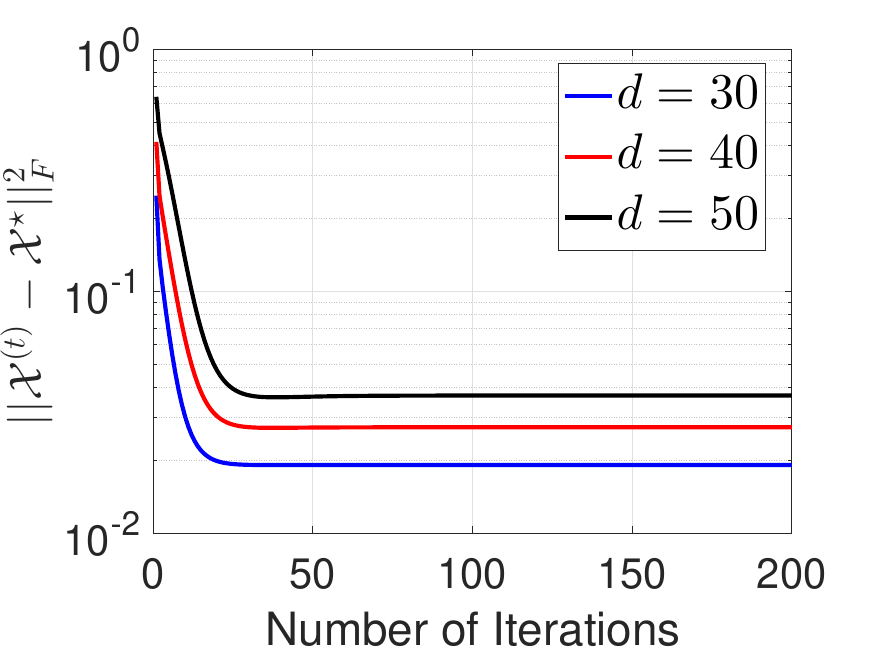}
\end{minipage}
\label{TT_sensing_d}
}
\caption{Convergence of RGD for TT format tensor sensing (a) for different $N$ with $d = 10$, $r = 2$, $m = 1000$, and $\gamma^2 = 0.1$, (b) for different $r$ with $d = 50$, $N = 3$, $m = 3000$, and $\gamma^2 = 0.1$, (c) for different $d$ with $N = 3$, $r = 2$, $m = 1500$, and $\gamma^2 = 0.1$.}
\label{fig:TT_sensing_convergence}
\end{figure}

\begin{figure}[!ht]
\centering
\subfigure[]{
\begin{minipage}[t]{0.31\textwidth}
\centering
\includegraphics[width=5.2cm]{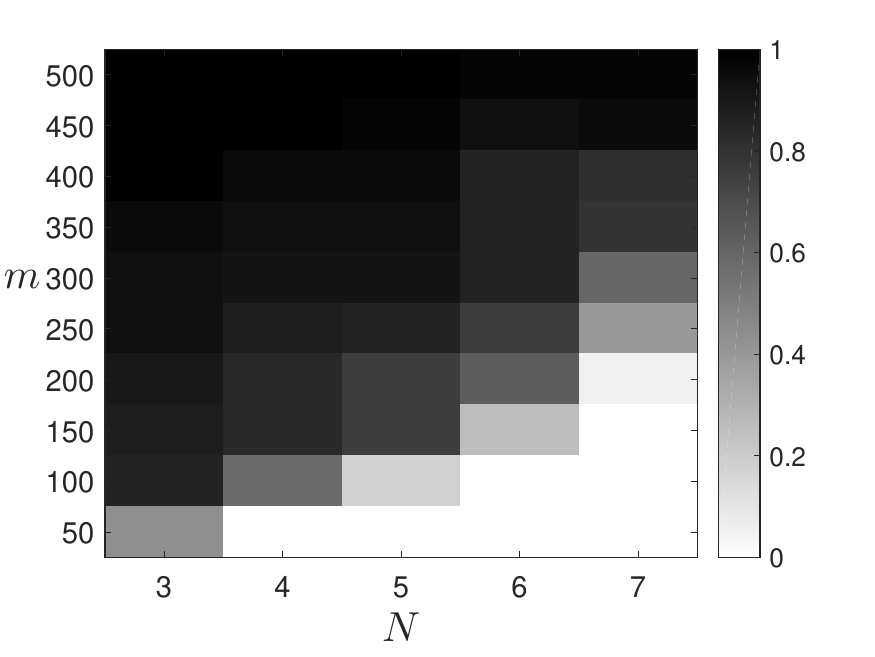}
\end{minipage}
\label{TT_sensing_noiseless different N and m}
}
\subfigure[]{
\begin{minipage}[t]{0.31\textwidth}
\centering
\includegraphics[width=5.2cm]{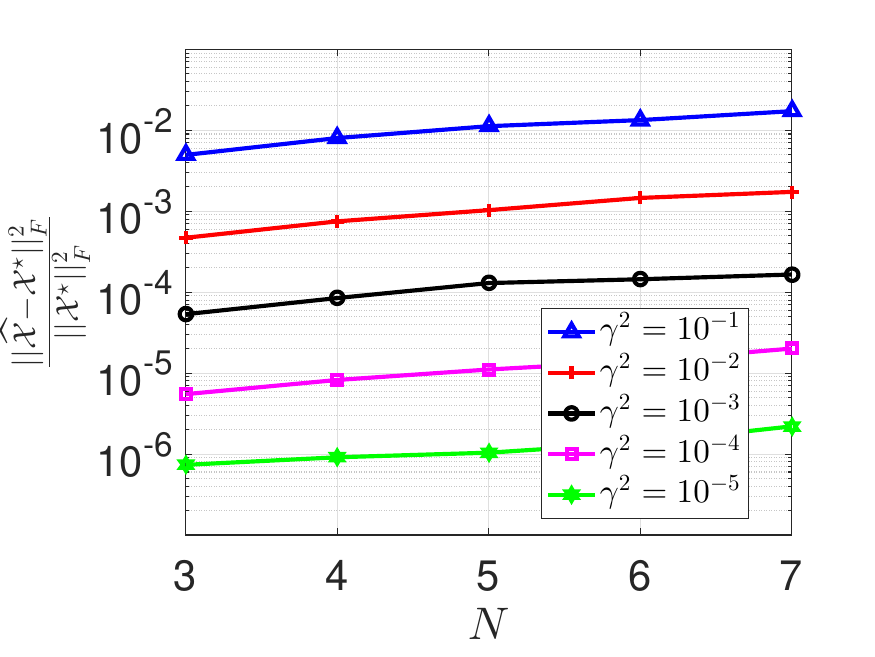}
\end{minipage}
\label{TT_sensing_noisy different N and gamma}
}
\subfigure[]{
\begin{minipage}[t]{0.31\textwidth}
\centering
\includegraphics[width=5.2cm]{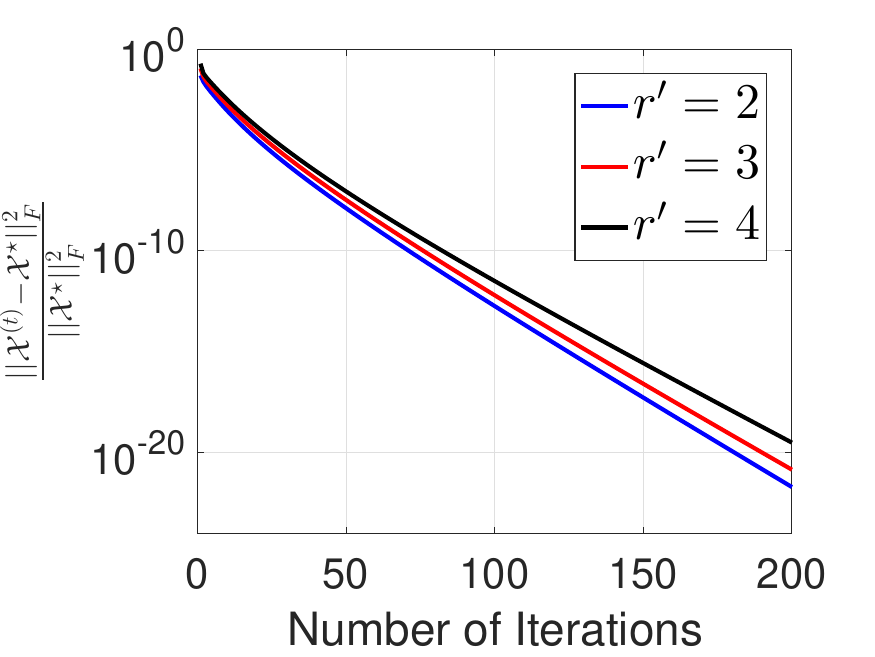}
\end{minipage}
\label{TT_sensing_overpara}
}
\caption{Performance comparison of RGD for TT format tensor sensing (a) for different $N$ and $m$ with $d = 4$, $r = 2$, and $\gamma^2 = 0$, (b) for different $N$ and $\gamma^2$ with $d = 4$, $r = 2$, and $m=500$, (c) for overparameterized tensor sensing where $N=3$, $d = 30$, $r = 2$, $\gamma^2=0$, and $m = 5000$. }
\end{figure}
\paragraph{Convergence of RGD} We first display the convergence of RGD in terms of the tensor, denoted as $\|\calX^{(t)} - \calX^\star\|_F^2$, for different settings in Figure \ref{fig:TT_sensing_convergence}. We observe rapid convergence of RGD across all cases shown in Figure \ref{fig:TT_sensing_convergence}. Furthermore, the plots reveal the following trends when a fixed number of measurements $m$ is maintained, while the values of $N$, $r$, and $d$ increase: $(i)$ the recovery error at the initialization using spectral methods increases,
$(ii)$ RGD converges more slowly,
and $(iii)$ RGD converges to a solution with larger recovery error. These observations align with our theoretical findings as presented in Theorem \ref{TENSOR SENSING noisy SPECTRAL INITIALIZATION} for spectral initialization and Theorem \ref{Local Convergence of Riemannian in the noisy sensing_Theorem} for the convergence guarantee of RGD. Since RGD converges relatively fast, as demonstrated in \Cref{fig:TT_sensing_convergence}, in the following experiments we run RGD for $T = 500$ iterations to obtain the estimated tensor $\wh \calX$.

\paragraph{Exact recovery with clean measurements} In the following two sets of experiments, we fix $d =4$ and $r = 2$, and evaluate the performance for varying tensor order $N$. In the case of clean measurements, we conduct experiments for different $N$ and number of measurements $m$, and we say a recovery is successful if
$\|\wh\calX - \calX^\star \|_F\leq10^{-5}$. We conduct $100$ independent trials to evaluate the success rate for each pair of $N$ and $m$. The result is displayed in \Cref{TT_sensing_noiseless different N and m}. We can observe from \Cref{TT_sensing_noiseless different N and m} that the number of measurements $m$ needed to ensure exact recovery only scales linearly rather than exponentially in terms of the order $N$, consistent with the findings in \Cref{RIP condition fro the tensor train sensing Lemma}. This relationship is indeed an improvement over the polynomial dependence of $m$ on $N$ required to guarantee \Cref{TENSOR SENSING SPECTRAL INITIALIZATION} and \Cref{Local Convergence of Riemannian in the sensing_Theorem}. This also suggests the potential for refining our analysis in future work.

\paragraph{Stable recovery with noisy measurements} In the case of noisy measurements, we fix the number of measurements $m = 500$ and vary the tensor order $N$ and noise level $\gamma^2$. \Cref{TT_sensing_noisy different N and gamma} shows that the performance of RGD remains stable as $N$ increases, with recovery error in the curves increasing polynomially.  This behavior aligns with the findings outlined in \Cref{Local Convergence of Riemannian in the noisy sensing_Theorem}. In addition, the recovery error scales roughly linearly with respect to the noise level $\gamma^2$, consistent with the second term in \eqref{Local Convergence of Riemannian in the noisy sensing_Theorem_1}.

\paragraph{Recovery with over-parameterization} In the fourth experiment, we consider the case where the true rank $r$ is unknown a priori, and we use an estimated rank $r'$ in RGD. Figure~\ref{TT_sensing_overpara} shows the convergence rates of RGD across various values of $r'$ for the setting with clean measurements. While our current theory is only established when the rank is exactly specified (i.e., $r' = r$), we also observe linear convergence when the rank is over-specified (i.e., $r' > r$), albeit with a slightly slower rate of convergence as $r'$ increases. A theoretical study for this overparameterized case is a topic for future research.

\subsection{TT Format Tensor Completion}
We now consider the problem of tensor completion, with the goal of reconstructing the entire tensor $\calX^\star$ based on a subset of its entries. 
Specifically, let $\Omega$ denote the indices of $m$ observed entries and let $\calP_{\Omega}$ denote the corresponding measurement operator that produces the observed measurements. Then, our measurements $\wh \vy$ are obtained by
\[
\wh\vy = \calP_{\Omega}(\calX^\star) + \vepsilon,
\]
where $\vepsilon$ denotes the possible additive noise with each entry being independently drawn from a Gaussian distribution with a mean of zero and a variance of $\gamma^2$. Throughout the experiments, we assume the $m$ observed entries are uniformly sampled. As the measurement operator $\calP_{\Omega}$ can be viewed as a special instance of the linear map $\calA$ in \eqref{Definition of tensor sensing}, tensor completion can be regarded as a special case of tensor sensing. Thus, as in \eqref{Noisy Loss Function of general tensor sensing}, we attempt to recover the underlying tensor by solving the following constrained factorized optimization problem
\begin{align*}
\hspace{-0.5cm}\min_{\mbox{\tiny$\begin{array}{c}
     {\mX_i}\in\R^{r_{i-1}\times d_i \times r_i}\\
     i\in[N]\end{array}$}} & G({\mX}_1,\dots, {\mX}_N) = \frac{1}{2}\|\calP_{\Omega}([\mX_1,\dots, \mX_N]) - \wh \vy\|_2^2,\\
     &\st \ L^\top({\mX}_i)L({\mX}_i)=\mId_{r_i},i \in [N-1].
\end{align*}
We solve this problem using the RGD algorithm in \eqref{updating equation of Riemannian gradient descent algorithm in the tensor sensing 1} and \eqref{updating equation of Riemannian gradient descent algorithm in the tensor sensing 2}. In addition, we employ a modified spectral initialization approach proposed in \citep{Cai2022provable} with incoherent factors and guaranteed performance. We note that, in general, the measurement operator $\calP_{\Omega}$ in tensor completion does not satisfy the RIP condition \citep{rauhut2015tensor}. Therefore, our theory may not be directly applicable in this context. Here, we only present numerical results of RGD for tensor completion, deferring the theoretical analysis to future work.

\begin{figure}[!ht]
\centering
\subfigure[]{
\begin{minipage}[t]{0.31\textwidth}
\centering
\includegraphics[width=5.2cm]{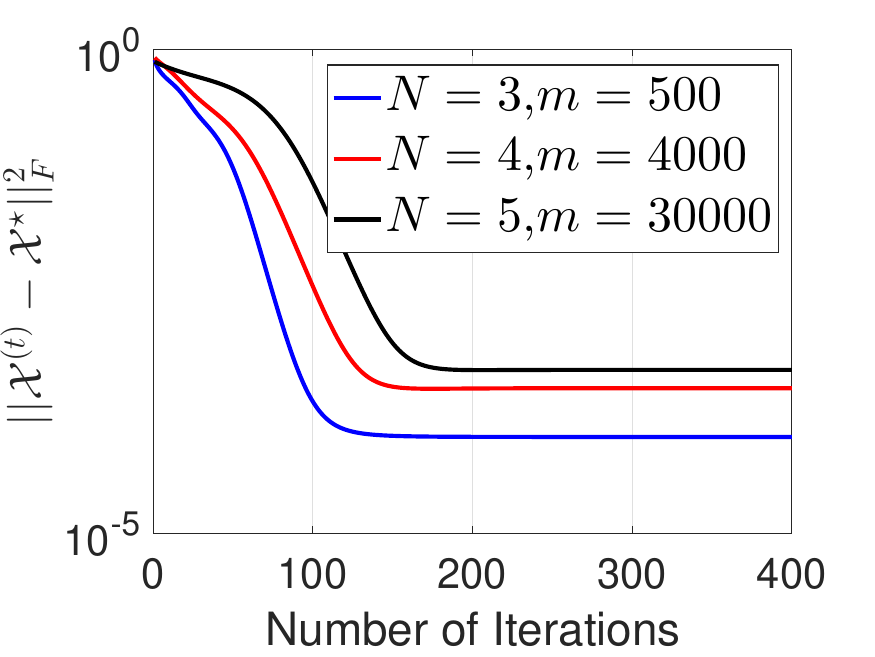}
\end{minipage}
\label{TT_completion_N}
}
\subfigure[]{
\begin{minipage}[t]{0.31\textwidth}
\centering
\includegraphics[width=5.2cm]{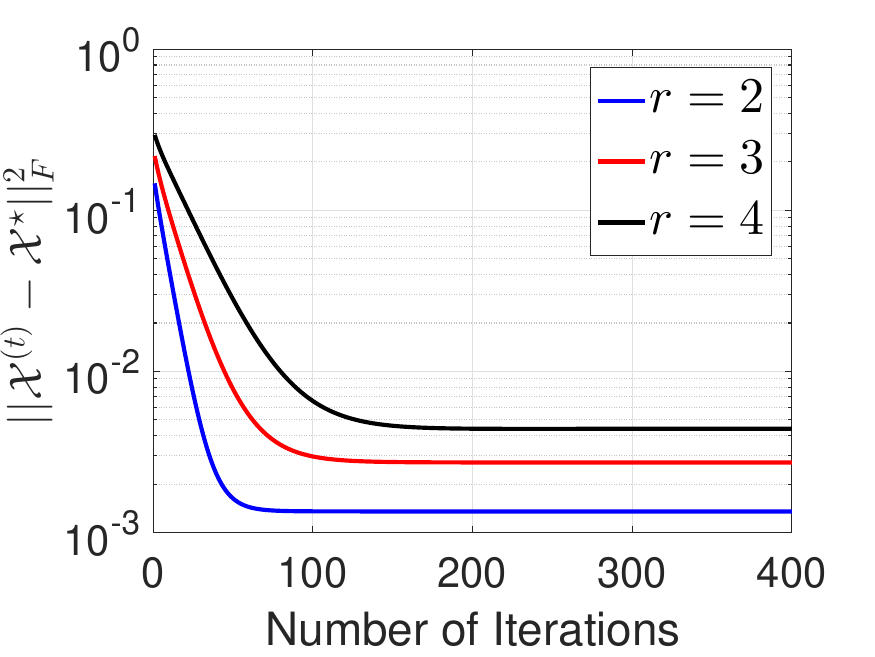}
\end{minipage}
\label{TT_completion_r}
}
\subfigure[]{
\begin{minipage}[t]{0.31\textwidth}
\centering
\includegraphics[width=5.2cm]{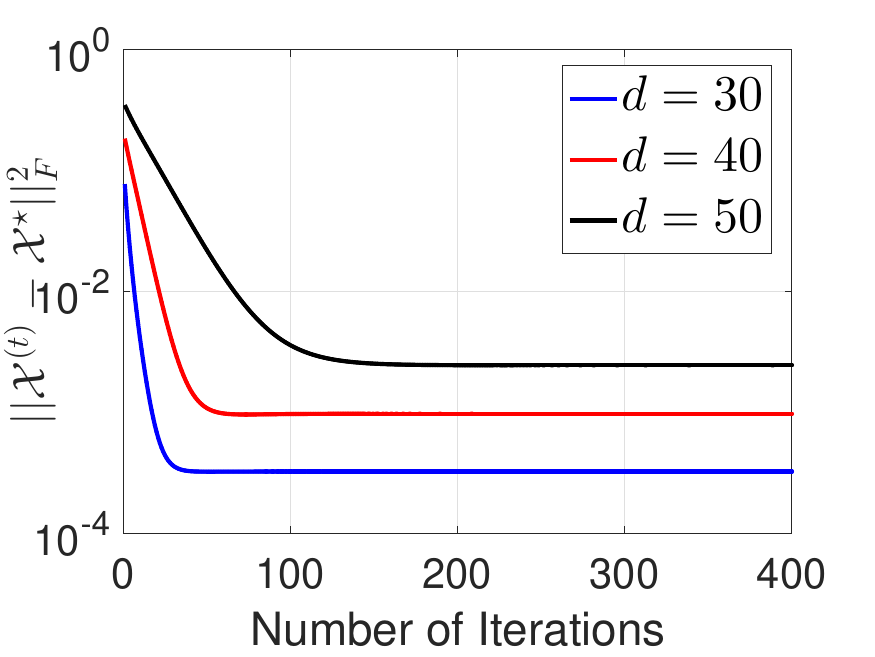}
\end{minipage}
\label{TT_completion_d}
}
\caption{Convergence of RGD for TT format tensor completion (a) for different $N$ and $m$ with $d = 10$, $r = 2$, and $\gamma^2 = 10^{-6}$, (b) for different $r$ with $d = 50$, $N = 3$, $m = 35000$, and $\gamma^2 = 10^{-6}$, (c) for different $d$ with $N = 3$, $r = 2$, $m = 20000$, and $\gamma^2 = 10^{-6}$. }
\end{figure}

\begin{figure}[!ht]
\centering
\subfigure[]{
\begin{minipage}[t]{0.31\textwidth}
\centering
\includegraphics[width=5.2cm]{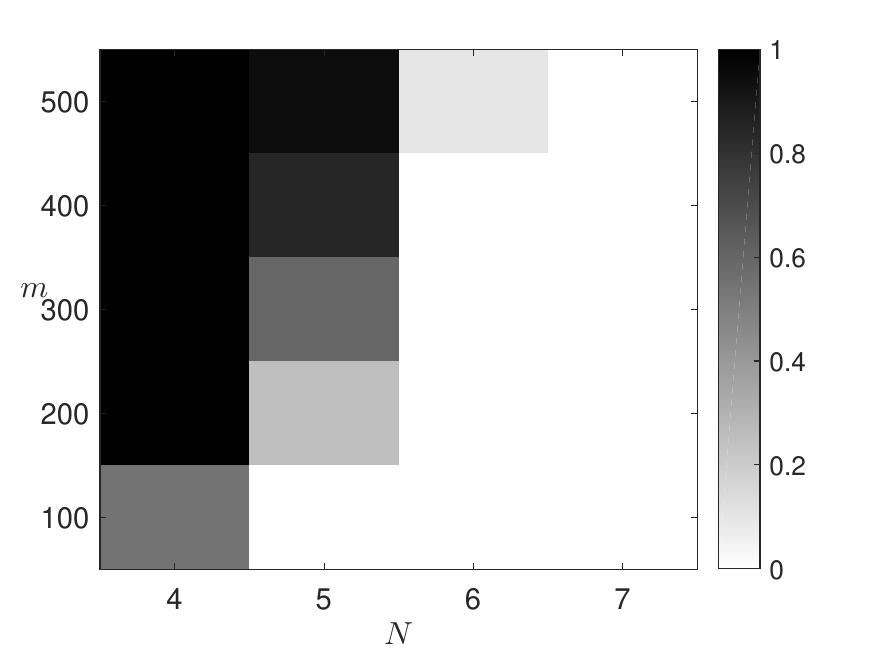}
\end{minipage}
\label{TT_completion_m and d fixed}
}
\subfigure[]{
\begin{minipage}[t]{0.31\textwidth}
\centering
\includegraphics[width=5.2cm]{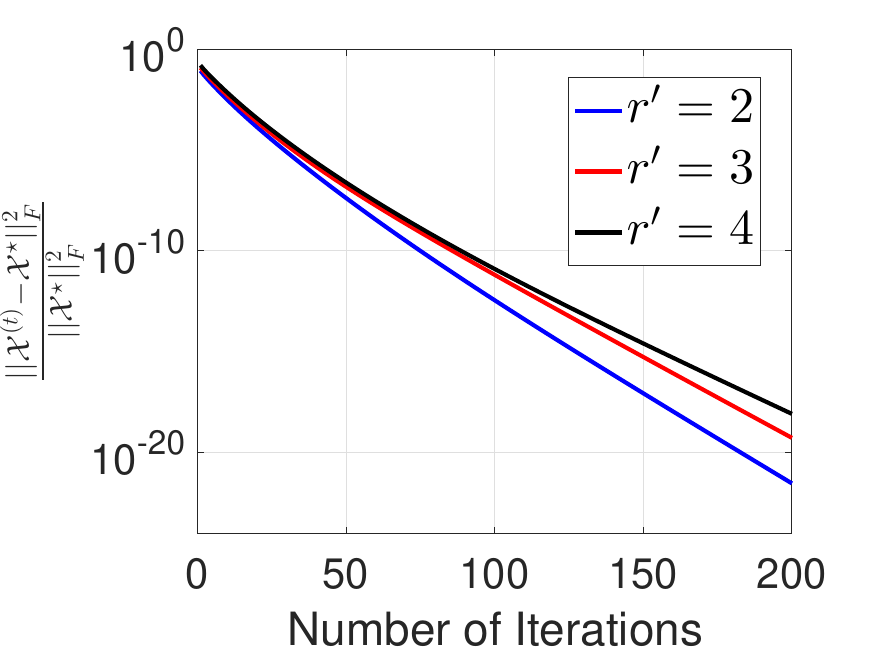}
\end{minipage}
\label{TT_completion_overpara}
}
\caption{Performance comparison of RGD for TT format tensor completion, (a) for different $N$ and $m$ with $d = 4$, $r = 2$, and $\gamma^2 = 0$, (note that for $N=4$ in (a), $m=256$ has been chosen when $m\geq 300$), (b) for overparameterized tensor completion where $N=3$, $d = 30$, $r = 2$, and $m = 20000$. }
\end{figure}

\paragraph{Convergence of RGD }

Continuing with the same experiment conducted in tensor sensing, we begin by demonstrating the convergence rate of RGD in tensor completion under various settings. The results presented in Figures~\ref{TT_completion_N}-\ref{TT_completion_d} reveal a noticeable trend: as the values of $N$, $r$, and $d$ increase, similar to the observations made in tensor sensing, we witness a degradation in the convergence rate, recovery error, and the estimated initialization. Additionally, it is important to emphasize that this consistency in degradation across different parameters reinforces the similarities between tensor completion and tensor sensing when employing RGD.

\paragraph{Exact recovery with clean measurements}

In the second experiment, we set $d = 4$ and $r = 2$, and then assess the performance across various tensor orders $N$. When dealing with clean measurements, we perform experiments using different combinations of $N$ and the number of samples $m$. A successful recovery by RGD is defined as $\|\wh\calX - \calX^\star \|_F\leq10^{-5}$. For each pair of $N$ and $m$, we conduct 100 independent trials to evaluate the success rate.
Note that random initialization is employed here because the sequential second-order moment method \citep{Cai2022provable} is likely to fail when the number of measurements  $m$ is relatively small compared to the total number of entries of the tensor.
The results are presented in Figure~\ref{TT_completion_m and d fixed}. It is evident that the number of samples needed for successful recovery does not exhibit a polynomial relationship with $N$.
This discovery aligns with the theoretical result in \citep{Cai2022provable} that requires the number of samples $m$ to increase exponentially with $N$.

\paragraph{Recovery with over-parameterization}

In the third experiment, we conclude by assessing the performance of RGD with overparameterized rank. This evaluation involves varying pre-defined values of $r'$, as illustrated in Figure~\ref{TT_completion_overpara}. Notably, the results indicate a clear trend as observed in Figure~\ref{TT_sensing_overpara} for the sensing problem: the convergence rate diminishes as $r'$ increases, aligning with the observations from the tensor sensing scenario.

\section{Conclusion and Outlook}
\label{conclusion}

In this paper, we
study the tensor factorization approach for recovering  low-TT-rank tensors from limited numbers of linear measurements. To avoid the ambiguity and to facilitate theoretical analysis, we optimize over the left-orthogonal TT format which enforces orthonormality among all  the factors except for the last one. To ensure the orthonormal structure, we utilize the Riemannian gradient descent (RGD) algorithm for optimizing those factors over the Stiefel manifold. When the sensing operator obeys the RIP, we show that with an appropriate initialization which can be achieved by spectral initialization, RGD converges to the target solution at a linear rate. In the presence of measurement noise, RGD produces a stable recovery with error proportional to the noise level and scaling only polynomially in terms of the tensor order. Our findings support the growing evidence for using the factorization approach for  low-TT-rank tensors and adopting local search algorithms such as gradient descent for solving the corresponding factorized optimization problems.

\paragraph{Extension to tensor factorization approach without orthonormal constraints} In this work, we use the left-orthogonal form to avoid scaling ambiguity among the tensor factors and to facilitate theoretical analysis. This approach is in line with other works on matrix and tensor factorization that use regularizers to balance the factors \citep{Tu16,Han20,cai2019nonconvex, TongTensor21}. However, both empirical observations \citep{Zhu18TSP} and theoretical results \citep{Ma21TSP,li2020global} have shown that such regularizers are not necessary for the convergence of gradient descent (GD) in matrix factorization problems. Similarly, recent work \citep{jameson2024optimal} has empirically demonstrated that GD can efficiently solve the TT factorization problem without orthogonal constraints or regularizers on the factors, as applied in quantum state tomography. Extending the analysis to provide a theoretical justification for this approach will be of interest.

\paragraph{Extension to other TT applications} An important area for future work is the analysis of the convergence properties of RGD in TT completion. Due to the random sampling process, the incoherence condition \citep{Cai2022provable} plays a crucial role in ensuring the even distribution of energy across the entries of the tensor.
Although experimental results have shown a linear convergence rate for RGD, there is a theoretical challenge in guaranteeing the nonexpansiveness property when applying both the orthonormal structure and incoherence condition simultaneously.
Additionally, unlike the tensor itself, the TT rank is often unknown beforehand in practical scenarios.
Building upon recent research efforts \citep{stoger2021small,jiang2022algorithmic,ding2022validation,xu2023power}, a possible extension of our analysis is to consider overparameterized low-rank tensor recovery. This extension would involve investigating the convergence and error analysis of RGD in this context.


\acks{We acknowledge funding support from NSF Grants No.\ CCF-1839232, CCF-2106834, CCF-2241298 and ECCS-2409701. We thank the Ohio Supercomputer Center for providing the computational resources needed in carrying out this work. Finally, we are grateful to Stephen Becker, Alireza Goldar, Zhexuan Gong, Casey Jameson, Jingyang Li, Alexander Lidiak, and Gongguo Tang for many valuable discussions and for helpful comments on the manuscript.}



\appendix
\section{Technical tools used in the proofs}
\label{Technical tools used in proofs}

In this section, we introduce a new operation related to the multiplication of submatrices within the left unfolding matrices $L(\mX_i)=\begin{bmatrix}\mX_i(1) \\ \vdots\\  \mX_i(d_i) \end{bmatrix}\in\R^{(r_{i-1}d_i) \times r_i}, i \in [N]$. For simplicity, we will only consider the case $d_i=2$, but extending to the general case is straightforward.

Let $\mA=\begin{bmatrix}\mA_1 \\ \mA_2 \end{bmatrix}$ and $\mB=\begin{bmatrix}\mB_1 \\ \mB_2 \end{bmatrix}$ be two block matrices, where $\mA_i\in\R^{r_1\times r_2}$ and $\mB_i\in\R^{r_2\times r_3}$ for $i=1,2$. We introduce the notation $\ol \otimes$ to represent the Kronecker product between submatrices in the two block matrices, as an alternative to the standard Kronecker product based on element-wise multiplication.
Specifically, we define $\mA\ol \otimes\mB$ as
\begin{eqnarray}
    \label{KRONECKER PRODUCT VECTORIZATION}
    &&\mA\ol \otimes\mB=\begin{bmatrix}\mA_1 \\ \mA_2 \end{bmatrix}\ol \otimes\begin{bmatrix}\mB_1 \\ \mB_2 \end{bmatrix}=\begin{bmatrix}\mA_1\mB_1 \\ \mA_2\mB_1 \\ \mA_1\mB_2 \\ \mA_2\mB_2 \end{bmatrix}.
\end{eqnarray}

Then we establish the following useful result.
\begin{lemma}
\label{KRONECKER PRODUCT VECTORIZATION2}
For any matrices $\mA=\begin{bmatrix}\mA_1 \\ \mA_2 \end{bmatrix}$ and $\mB=\begin{bmatrix}\mB_1 \\ \mB_2 \end{bmatrix}$ , where $\mA_i\in\R^{r_1\times r_2}$ and $\mB_i\in\R^{r_2\times r_3}$, the following inequalities hold
\begin{eqnarray}
    \label{KRONECKER PRODUCT VECTORIZATION3}
    &&\|\mA\ol \otimes\mB\|_F\leq\|\mA\|\cdot\|\mB\|_F,\\
    \label{KRONECKER PRODUCT VECTORIZATION4}
    &&\|\mA\ol \otimes\mB\|\leq\|\mA\|\cdot\|\mB\|.
\end{eqnarray}
In particular, when $r_3=1$, \eqref{KRONECKER PRODUCT VECTORIZATION3} becomes
\begin{eqnarray}
    \label{KRONECKER PRODUCT VECTORIZATION6}
    \|\mA\ol \otimes\mB\|_2\leq\|\mA\|\cdot\|\mB\|_2.
\end{eqnarray}
\end{lemma}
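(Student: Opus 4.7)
The plan is to observe that the block Kronecker product in \eqref{KRONECKER PRODUCT VECTORIZATION} admits the equivalent and much simpler representation
\[
\mA\ol\otimes\mB \;=\; \begin{bmatrix} \mA\mB_1 \\ \mA\mB_2 \end{bmatrix},
\]
obtained by regrouping the four sub-blocks and recognizing each tall pair $(\mA_1\mB_j,\mA_2\mB_j)$ as $\mA$ acting on $\mB_j$. With this identity in hand, both norm inequalities collapse to standard block-by-block submultiplicativity, so no property specific to the $\ol\otimes$ operation itself is needed.

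For \eqref{KRONECKER PRODUCT VECTORIZATION3}, since the squared Frobenius norm is additive over row blocks, applying the elementary bound $\|\mA\mB_j\|_F\le\|\mA\|\,\|\mB_j\|_F$ yields
\[
\|\mA\ol\otimes\mB\|_F^2 = \|\mA\mB_1\|_F^2 + \|\mA\mB_2\|_F^2 \le \|\mA\|^2\bigl(\|\mB_1\|_F^2+\|\mB_2\|_F^2\bigr) = \|\mA\|^2\|\mB\|_F^2,
\]
which is exactly the claimed bound. For \eqref{KRONECKER PRODUCT VECTORIZATION4} I would argue variationally: for any unit vector $\vx\in\R^{r_3}$,
\[
\|(\mA\ol\otimes\mB)\vx\|_2^2 = \|\mA\mB_1\vx\|_2^2 + \|\mA\mB_2\vx\|_2^2 \le \|\mA\|^2\bigl(\|\mB_1\vx\|_2^2 + \|\mB_2\vx\|_2^2\bigr) = \|\mA\|^2\|\mB\vx\|_2^2 \le \|\mA\|^2\|\mB\|^2,
\]
and then take the supremum over $\vx$. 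The special case \eqref{KRONECKER PRODUCT VECTORIZATION6} is immediate: when $r_3=1$, $\mA\ol\otimes\mB$ is a single column vector whose $\ell_2$-norm equals its Frobenius norm and $\|\mB\|_F=\|\mB\|_2$, so \eqref{KRONECKER PRODUCT VECTORIZATION3} directly specializes.

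There is essentially no substantive obstacle; the only care needed is verifying the block-row regrouping identity above and, should one wish to drop the $d_i=2$ simplification, noting that the same decomposition $\mA\ol\otimes\mB = [\mA\mB_1;\,\ldots;\,\mA\mB_{d_i}]$ and the identities $\|\mB\|_F^2 = \sum_j\|\mB_j\|_F^2$ and $\|\mB\vx\|_2^2=\sum_j\|\mB_j\vx\|_2^2$ carry through verbatim, so the argument generalizes without change.
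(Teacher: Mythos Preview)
Your proof is correct. Both arguments rest on the same underlying computation---namely that $(\mA\ol\otimes\mB)^\top(\mA\ol\otimes\mB)=\sum_j \mB_j^\top\mA^\top\mA\mB_j$---but you arrive there more directly by first isolating the regrouping identity $\mA\ol\otimes\mB=\begin{bmatrix}\mA\mB_1\\ \mA\mB_2\end{bmatrix}$ and then applying the elementary bounds $\|\mA\mB_j\|_F\le\|\mA\|\,\|\mB_j\|_F$ and $\|\mA\mB_j\vx\|_2\le\|\mA\|\,\|\mB_j\vx\|_2$ block by block. The paper instead works with the Gram matrix and, for the Frobenius bound, invokes an external PSD trace inequality $\trace(\mC\mD)\le\|\mC\|\trace(\mD)$; your route avoids that citation and is slightly more self-contained, while the paper's eigenvalue formulation for the spectral bound is essentially your variational argument written in Rayleigh-quotient form. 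Neither approach offers materially more generality, and your remark that the identity extends verbatim to general $d_i$ is correct.
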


\begin{proof}
Using the relation $\|\mA\ol \otimes\mB\|_F^2 = \trace((\mA\ol \otimes\mB)^\top(\mA\ol \otimes\mB))$ gives
\begin{eqnarray}
    \label{KRONECKER PRODUCT VECTORIZATION7}
    \|\mA\ol \otimes\mB\|_F^2&\!\!\!\!=\!\!\!\!&\trace((\mA\ol \otimes\mB)^\top(\mA\ol \otimes\mB))\nonumber\\
    &\!\!\!\!=\!\!\!\!&\trace(\mB_1^\top\mA^\top\mA\mB_1+\mB_2^\top\mA^\top\mA\mB_2)\nonumber\\
    &\!\!\!\!\leq\!\!\!\!&\|\mB_1\|_F^2\|\mA\|^2+\|\mB_2\|_F^2\|\mA\|^2\nonumber\\
    &\!\!\!\!=\!\!\!\!&\|\mA\|^2\|\mB\|_F^2,
\end{eqnarray}
where the first inequality utilizes the result that   $\trace(\mC\mD) \le \|\mC\|\trace(\mD)$ holds for any two PSD matrices $\mC,\mD$ (see \citep[Lemma 7]{Zhu21TIT}).

Likewise, by connecting the spectral norms between $\mA$ and $(\mA\ol \otimes\mB)^\top(\mA\ol \otimes\mB)$, we have
\begin{eqnarray}
    \label{KRONECKER PRODUCT VECTORIZATION8}
    \|\mA\ol \otimes\mB\|^2&\!\!\!\!=\!\!\!\!&\lambda_{\max}((\mA\ol \otimes\mB)^\top(\mA\ol \otimes\mB))\nonumber\\
    &\!\!\!\!=\!\!\!\!&\lambda_{\max}(\mB_1^\top\mA^\top\mA\mB_1+\mB_2^\top\mA^\top\mA\mB_2)\nonumber\\
    &\!\!\!\!=\!\!\!\!&\max_{\|{\bm u}\|_2=1}{\bm u}^\top\mB_1^\top\mA^\top\mA\mB_1{\bm u}+{\bm u}^\top\mB_2^\top\mA^\top\mA\mB_2{\bm u}\nonumber\\
    &\!\!\!\!\leq\!\!\!\!&\max_{\|{\bm u}\|_2=1}\lambda_{\max}(\mA^\top\mA){\bm u}^\top\mB_1^\top\mB_1{\bm u}+\lambda_{\max}(\mA^\top\mA){\bm u}^\top\mB_2^\top\mB_2{\bm u}\nonumber\\
    &\!\!\!\!=\!\!\!\!&\max_{\|{\bm u}\|_2=1}\lambda_{\max}(\mA^\top\mA)({\bm u}^\top\mB_1^\top\mB_1{\bm u}+{\bm u}^\top\mB_2^\top\mB_2{\bm u})\nonumber\\
    &\!\!\!\!=\!\!\!\!&\lambda_{\max}(\mA^\top\mA)\lambda_{\max}(\mB^\top\mB)\nonumber\\
    &\!\!\!\!=\!\!\!\!&\|\mA\|^2\|\mB\|^2.
\end{eqnarray}

\end{proof}

The inequality \eqref{KRONECKER PRODUCT VECTORIZATION3} can be viewed as a generalization of the result $\|\mC \mD\|_F \le \|\mC\|\cdot \|\mD\|_F$ for any two matrices $\mC,\mD$ of appropriate sizes. However, unlike the matrix product case which also satisfies $\|\mC \mD\|_F \le \|\mC\|_F\cdot \|\mD\|$, $\|\mA\ol \otimes\mB\|_F\leq\|\mA\|_F\|\mB\|$ does not always hold. To upper bound $\|\mA\ol \otimes\mB\|_F$ with the spectral norm of $\mB$, we will instead use $\|\mA\ol \otimes\mB\|_F\leq \|\mA\|_F\cdot \|\mB\|_F \leq {\rm rank}(\mB)\|\mA\|_F\cdot \|\mB\|$. This discrepancy will account for the term $\sum_{i=2}^{N-1}r_i$ in the subsequent {Lemma} \ref{LOWER BOUND OF TWO DISTANCES}.

Applying {Lemma} \ref{KRONECKER PRODUCT VECTORIZATION2} to  the left-orthogonal TT format tensor $\calX^\star = [\mX_1^\star,\dots, \mX_N^\star]$ gives the following useful results:
\begin{eqnarray}
    \label{KRONECKER PRODUCT VECTORIZATION11}
    \hspace{-2cm}&&\|\calX^\star\|_F = \|\text{vec}(\calX^\star)\|_2=\|L(\mX_1^\star) \ol \otimes \cdots \ol \otimes L(\mX_N^\star)\|_2 = \|L(\mX_N^\star)\|_2, \\
    \label{KRONECKER PRODUCT VECTORIZATION11 - 2}
    \hspace{-2cm}&&\|L(\mX_i^\star) \ol \otimes \cdots  \ol \otimes L(\mX_j^\star)\|\leq \Pi_{l = i}^{j}\|L(\mX_l^\star)\| = 1, \ \ i\leq j, \ \ \forall i, j\in[N-1], \\
    \label{KRONECKER PRODUCT VECTORIZATION11 - 3}
    \hspace{-2cm}&&\|L(\mX_i^\star) \ol \otimes \cdots  \ol \otimes L(\mX_j^\star)\|_F\leq \Pi_{l = i}^{j-1}\|L(\mX_l^\star)\| \|L(\mX_j^\star)\|_F = \sqrt{r_j}, \ \ i\leq j, \ \ \forall i, j\in[N-1].
\end{eqnarray}
In addition, according to \eqref{KRONECKER PRODUCT VECTORIZATION}, each row in $L(\mX_1^\star) \ol \otimes \cdots \ol \otimes L(\mX_i^\star)$ can be represented as
\begin{eqnarray}
    \label{each row in Kronecker product L}
    &\!\!\!\!\!\!\!\!&(L(\mX_1^\star) \ol \otimes \cdots \ol \otimes L(\mX_i^\star))(s_1\cdots s_i,:) \nonumber\\
    &\!\!\!\!=\!\!\!\!&(L(\mX_1^\star) \ol \otimes \cdots \ol \otimes L(\mX_i^\star))(s_1 + d_1(s_2-2) + \cdots + d_1\cdots d_{i-1}(s_i-1),:) \nonumber\\
    &\!\!\!\!=\!\!\!\!& \mX_1(s_1)\cdots \mX_i(s_i).
\end{eqnarray}

Next, we provide some useful lemmas in terms of products of matrices.
\begin{lemma}
\label{EXPANSION_A1TOAN-B1TOBN_1}
For any $\mA_i,\mA^\star_i\in\R^{r_{i-1}\times r_i},i\in[N]$, we have
\begin{eqnarray}
    \label{EXPANSION_A1TOAN-B1TOBN_2}
    \mA_1\mA_2\cdots \mA_N-\mA_1^\star\mA_2^\star\cdots \mA_N^\star = \sum_{i=1}^N \mA_1^\star \cdots \mA_{i-1}^\star (\mA_{i} - \mA_i^\star) \mA_{i+1} \cdots \mA_N.
\end{eqnarray}

\end{lemma}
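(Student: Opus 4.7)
The identity is a standard telescoping sum, so the proof is brief and should present no real obstacle. My plan is as follows.

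First I would introduce a sequence of hybrid products that interpolates between $\mA_1\mA_2\cdots\mA_N$ and $\mA_1^\star\mA_2^\star\cdots\mA_N^\star$ by successively replacing unstarred factors with starred ones from the left. Concretely, for $i = 0, 1, \dots, N$, define
\begin{eqnarray*}
\mC_i = \mA_1^\star \mA_2^\star \cdots \mA_i^\star \mA_{i+1} \mA_{i+2} \cdots \mA_N,
\end{eqnarray*}
with the convention that empty products are absent. Then $\mC_0 = \mA_1\mA_2\cdots\mA_N$ and $\mC_N = \mA_1^\star\mA_2^\star\cdots\mA_N^\star$, so the left-hand side of \eqref{EXPANSION_A1TOAN-B1TOBN_2} equals $\mC_0 - \mC_N$.

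Next I would compute the single-step difference. For each $i \in [N]$, the products $\mC_{i-1}$ and $\mC_i$ differ only in the $i$-th factor, so that
\begin{eqnarray*}
\mC_{i-1} - \mC_i = \mA_1^\star \cdots \mA_{i-1}^\star \bigl( \mA_i - \mA_i^\star \bigr) \mA_{i+1} \cdots \mA_N,
\end{eqnarray*}
which is precisely the $i$-th summand on the right-hand side of \eqref{EXPANSION_A1TOAN-B1TOBN_2}. Summing this telescoping identity over $i = 1, \dots, N$ yields
\begin{eqnarray*}
\sum_{i=1}^N (\mC_{i-1} - \mC_i) = \mC_0 - \mC_N = \mA_1\mA_2\cdots\mA_N - \mA_1^\star\mA_2^\star\cdots\mA_N^\star,
\end{eqnarray*}
which completes the proof. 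Alternatively, one could prove the same statement by induction on $N$: the base case $N = 1$ is trivial, and for the inductive step one writes $\mA_1\cdots\mA_N - \mA_1^\star\cdots\mA_N^\star = (\mA_1-\mA_1^\star)\mA_2\cdots\mA_N + \mA_1^\star(\mA_2\cdots\mA_N - \mA_2^\star\cdots\mA_N^\star)$ and applies the inductive hypothesis to the second term. Either approach is essentially a one-line calculation, and I see no technical obstacle.
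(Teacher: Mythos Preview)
Your proposal is correct and takes essentially the same approach as the paper: both arguments introduce the hybrid products $\mC_i = \mA_1^\star\cdots\mA_i^\star\mA_{i+1}\cdots\mA_N$ and telescope the differences $\mC_{i-1}-\mC_i$. The paper simply writes out the chain of equalities rather than naming the $\mC_i$ explicitly, but the content is identical.
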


\begin{proof}

The result can be obtained by summing up the following equations:
\begin{align*}
\mA_1\mA_2\cdots \mA_N - (\mA_1 - \mA_1^\star)\mA_2\cdots \mA_N & = \mA_1^\star\mA_2\cdots \mA_N \\
\mA_1^\star\mA_2\cdots \mA_N - \mA_1^\star(\mA_2 - \mA_2^\star) \mA_3\cdots \mA_N & = \mA_1^\star\mA_2^\star \mA_3\cdots \mA_N\\
& ~~~ \vdots\\
\mA_1^\star\mA_2^\star\cdots \mA_{N-1}^\star\mA_N - \mA_1^\star\mA_2^\star\cdots \mA_{N-1}^\star(\mA_N - \mA_N^\star) & = \mA_1^\star\mA_2^\star\cdots \mA_N^\star.
\end{align*}

\end{proof}

\begin{lemma}
\label{TRANSFORMATION OF NPLUS2 VARIABLES_1}
For any $\mA_i,\mA^\star_i\in\R^{r_{i-1}\times r_i}, i \in[N]$, we have
\begin{eqnarray}
    \label{TRANSFORMATION OF NPLUS2 VARIABLES_2}
    &\!\!\!\!\!\!\!\!&\mA_1^\star\cdots \mA_N^\star-\mA_1\dots\mA_{N-1}\mA_N + \sum_{i=1}^{N} \mA_1 \cdots \mA_{i-1} (\mA_i- \mA_i^\star) \mA_{i+1} \cdots \mA_N\nonumber\\
    &\!\!\!\!=\!\!\!\!& \sum_{i=1}^{N-1}\sum_{j=i+1}^N \mA_1\cdots \mA_{i-1}(\mA_i-\mA_i^\star)\mA_{i+1}^\star \cdots \mA_{j-1}^\star (\mA_{j}-\mA_{j}^\star)\mA_{j+1}\cdots \mA_N,
 \end{eqnarray}
where the right-hand side of \eqref{TRANSFORMATION OF NPLUS2 VARIABLES_2} contains a total of $\frac{N(N-1)}{2}$ terms.

\end{lemma}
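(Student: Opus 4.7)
The plan is to derive the identity by applying the previous expansion lemma (\Cref{EXPANSION_A1TOAN-B1TOBN_1}) twice: once to rewrite $\mA_1^\star\cdots \mA_N^\star - \mA_1\cdots \mA_N$ on the left-hand side, and once, separately for each index $i$, to expand an inner difference of tail products $\mA_{i+1}\cdots \mA_N - \mA_{i+1}^\star\cdots \mA_N^\star$. The key observation is that the mixed product form appearing in \Cref{EXPANSION_A1TOAN-B1TOBN_1} comes in two ``directions'' depending on which factorization one starts from, and combining the two opposite-direction expansions will collapse the first-order telescoping terms, leaving precisely the second-order ``cross'' terms on the right-hand side.

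First, I would apply \Cref{EXPANSION_A1TOAN-B1TOBN_1} with the roles of $\mA_i$ and $\mA_i^\star$ swapped to obtain
\begin{eqnarray*}
\mA_1^\star\cdots \mA_N^\star - \mA_1\cdots \mA_N \;=\; \sum_{i=1}^{N} \mA_1\cdots \mA_{i-1}(\mA_i^\star - \mA_i)\mA_{i+1}^\star\cdots \mA_N^\star.
\end{eqnarray*}
Substituting this into the left-hand side of \eqref{TRANSFORMATION OF NPLUS2 VARIABLES_2} and factoring out the common prefix $\mA_1\cdots \mA_{i-1}(\mA_i - \mA_i^\star)$ (noting the sign flip) yields
\begin{eqnarray*}
\text{LHS} \;=\; \sum_{i=1}^{N} \mA_1\cdots \mA_{i-1}(\mA_i - \mA_i^\star)\Bigl[\mA_{i+1}\cdots \mA_N - \mA_{i+1}^\star\cdots \mA_N^\star\Bigr].
\end{eqnarray*}

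Next I would apply \Cref{EXPANSION_A1TOAN-B1TOBN_1} again, now to the tail difference in brackets (with starting index $i+1$), giving
\begin{eqnarray*}
\mA_{i+1}\cdots \mA_N - \mA_{i+1}^\star\cdots \mA_N^\star \;=\; \sum_{j=i+1}^{N} \mA_{i+1}^\star\cdots \mA_{j-1}^\star(\mA_j - \mA_j^\star)\mA_{j+1}\cdots \mA_N.
\end{eqnarray*}
Plugging this back in, and observing that the $i=N$ outer term contributes an empty inner sum, produces exactly the double sum on the right-hand side of \eqref{TRANSFORMATION OF NPLUS2 VARIABLES_2}, which comprises $\binom{N}{2}=\frac{N(N-1)}{2}$ terms as claimed.

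The proof involves no real obstacle beyond careful bookkeeping; the only subtlety is matching the two ``directions'' of expansion so that the starred factors end up sandwiched between the two differences $(\mA_i-\mA_i^\star)$ and $(\mA_j-\mA_j^\star)$ while the unstarred factors appear as the outer prefix and suffix. As a sanity check, one can verify the identity for $N=2$ directly: the LHS reduces to $\mA_1^\star\mA_2^\star - \mA_1\mA_2 + (\mA_1-\mA_1^\star)\mA_2 + \mA_1(\mA_2-\mA_2^\star) = (\mA_1-\mA_1^\star)(\mA_2-\mA_2^\star)$, matching the single $(i,j)=(1,2)$ term on the RHS.
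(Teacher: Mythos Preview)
Your proof is correct and follows essentially the same approach as the paper's: both arguments apply \Cref{EXPANSION_A1TOAN-B1TOBN_1} twice, once (with the roles of $\mA_i$ and $\mA_i^\star$ swapped) to collapse the first-order terms, and once to expand the tail difference $\mA_{i+1}\cdots\mA_N-\mA_{i+1}^\star\cdots\mA_N^\star$ into the inner sum over $j$. The only cosmetic difference is the order in which these two applications are presented.
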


\begin{proof}
We first rewrite the term $(\mA_1-\mA_1^\star)\mA_2\cdots\mA_N$ as
\begin{eqnarray}
    \label{TRANSFORMATION OF NPLUS2 VARIABLES_4}
    &&(\mA_1-\mA_1^\star)\mA_2\cdots\mA_N = (\mA_1-\mA_1^\star) \mA_2^\star\cdots\mA_N^\star + (\mA_1-\mA_1^\star)\parans{ \mA_2\cdots\mA_N - \mA_2^\star\cdots\mA_N^\star } \nonumber\\
    &\!\!\!\!=\!\!\!\!& (\mA_1-\mA_1^\star) \mA_2^\star\cdots\mA_N^\star + (\mA_1-\mA_1^\star) \parans{\sum_{j=2}^N \mA_{2}^\star \cdots \mA_{j-1}^\star (\mA_{j}-\mA_{j}^\star)\mA_{j+1}\cdots \mA_N},
\end{eqnarray}
where the second line uses {Lemma} \ref{EXPANSION_A1TOAN-B1TOBN_1} for expanding the difference $\mA_2\cdots\mA_N - \mA_2^\star\cdots\mA_N^\star$. We can apply the same approach for $i=2,\dots, N$ to get
\begin{eqnarray}
    \label{TRANSFORMATION OF NPLUS2 VARIABLES_5}
    &\!\!\!\!\!\!\!\! &\mA_1 \cdots \mA_{i-1} (\mA_i- \mA_i^\star) \mA_{i+1} \cdots \mA_N\nonumber\\
    &\!\!\!\! = \!\!\!\!& \mA_1\cdots \mA_{i-1}(\mA_i - \mA_i^\star) \mA_{i+1}^\star \cdots \mA_N^\star \nonumber\\
    &\!\!\!\!  \!\!\!\!& +  \mA_1\cdots \mA_{i-1}(\mA_i - \mA_i^\star)\parans{\sum_{j=i+1}^N \mA_{i+1}^\star \cdots \mA_{j-1}^\star (\mA_{j}-\mA_{j}^\star)\mA_{j+1}\cdots \mA_N}.
\end{eqnarray}
Noting that the sum of the second terms in the right-hand side of \eqref{TRANSFORMATION OF NPLUS2 VARIABLES_4} and \eqref{TRANSFORMATION OF NPLUS2 VARIABLES_5} equals the right-hand side of \eqref{TRANSFORMATION OF NPLUS2 VARIABLES_2}, we complete the proof by checking the rest of the terms:
\begin{eqnarray}
    \label{TRANSFORMATION OF NPLUS2 VARIABLES_7}
    &\!\!\!\!\!\!\!\!&\mA_1^\star\mA_2^\star\cdots \mA_N^\star-\mA_1\cdots\mA_{N-1}\mA_N + \sum_{i=1}^{N}\mA_1\cdots \mA_{i-1}(\mA_i - \mA_i^\star) \mA_{i+1}^\star \cdots \mA_N^\star =0,
\end{eqnarray}
which follows from {Lemma} \ref{EXPANSION_A1TOAN-B1TOBN_1}.

\end{proof}

\begin{lemma} (\citep{Cai2022provable,Han20})
\label{left ortho upper bound}
For any two matrices $\mX, \mX^\star$ with rank $r $, let $\mU \mSigma \mV^\top$ and $\mU^\star \mSigma^\star {\mV^\star}^\top$  respectively represent the compact singular value decompositions (SVDs) of $\mX$ and $\mX^\star$. Supposing that $\mR = \argmin_{\wt{\mR}\in\O^{r\times r}}\|\mU - \mU^\star\wt{\mR}  \|_F $, we have
\begin{eqnarray}
    \label{relationship between left O with full tensor SVD}
    \|\mU - \mU^\star\mR \|_F \leq \frac{2\|\mX-\mX^\star\|_F}{\sigma_r(\mX^\star)}.
\end{eqnarray}
\end{lemma}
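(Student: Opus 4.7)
The plan is a standard Wedin / Davis--Kahan sin-theta argument: reduce the rotation-aligned distance $\min_{\mR}\|\mU - \mU^\star\mR\|_F$ to a subspace distance, then bound that subspace distance by $\|\mX - \mX^\star\|_F/\sigma_r(\mX^\star)$ using the compact SVD. I do not expect a real obstacle here; the only decision is which side of the sin-theta identity to work on, so that $\sigma_r(\mX^\star)$ rather than $\sigma_r(\mX)$ appears in the denominator.

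First I would reduce to a subspace distance. By the orthogonal Procrustes identity, $\min_{\wt{\mR}\in\O^{r\times r}}\|\mU - \mU^\star\wt{\mR}\|_F^2 = 2r - 2\|{\mU^\star}^\top\mU\|_*$, and since the singular values of ${\mU^\star}^\top\mU$ all lie in $[0,1]$ we have $\|{\mU^\star}^\top\mU\|_* \geq \|{\mU^\star}^\top\mU\|_F^2$. Combined with the Pythagorean identity $r = \|\mU\|_F^2 = \|{\mU^\star}^\top\mU\|_F^2 + \|(\mId - \mU^\star{\mU^\star}^\top)\mU\|_F^2$, this yields
\[
\|\mU - \mU^\star\mR\|_F \leq \sqrt{2}\,\|(\mId - \mU^\star{\mU^\star}^\top)\mU\|_F.
\]
An easy trace computation also gives the sin-theta symmetry $\|(\mId - \mU^\star{\mU^\star}^\top)\mU\|_F = \|(\mId - \mU\mU^\top)\mU^\star\|_F$, which I would use next so that the perturbation argument can be anchored to $\mX^\star$.

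For the second step, I would exploit that the compact SVD gives $\mU^\star = \mX^\star\mV^\star(\mSigma^\star)^{-1}$, while $\Range(\mX) \subseteq \Range(\mU)$ implies $(\mId - \mU\mU^\top)\mX = 0$, so $(\mId - \mU\mU^\top)\mX^\star = (\mId - \mU\mU^\top)(\mX^\star - \mX)$. Taking Frobenius norms together with the bounds $\|\mId - \mU\mU^\top\| \leq 1$, $\|\mV^\star\| = 1$, and $\|(\mSigma^\star)^{-1}\| = 1/\sigma_r(\mX^\star)$ then gives $\|(\mId - \mU\mU^\top)\mU^\star\|_F \leq \|\mX - \mX^\star\|_F/\sigma_r(\mX^\star)$. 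Chaining with the first step yields the bound with constant $\sqrt{2}$, which is in fact slightly stronger than the claimed factor of $2$.
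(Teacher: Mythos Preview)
Your argument is correct and in fact yields the sharper constant $\sqrt{2}$ in place of $2$. Note, however, that the paper does not supply its own proof of this lemma: it is quoted from \cite{Cai2022provable,Han20} and used as a black box in the proof of Lemma~\ref{LOWER BOUND OF TWO DISTANCES}. So there is nothing in the paper to compare your approach against; you have simply filled in a standard Davis--Kahan/Wedin style proof that the authors chose to import by citation.
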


\begin{lemma}
\label{LOWER BOUND OF TWO DISTANCES}
For any two TT format tensors $\calX$ and $\calX^\star $ with ranks $\vr = (r_1,\dots, r_{N-1})$. Let $\{\mX_i\}$ and $\{\mX_i^\star\}$ be the corresponding left-orthogonal form factors. Assume $\ol{\sigma}^2(\calX)\leq \frac{9\ol{\sigma}^2(\calX^\star)}{4}$. Then we have
\begin{eqnarray}
    \label{LOWER BOUND OF TWO DISTANCES_1}
&&\|\calX-\calX^\star\|_F^2\geq\frac{\underline{\sigma}^2(\calX^\star)}{8(N+1+\sum_{i=2}^{N-1}r_i)\ol{\sigma}^2(\calX^\star)}\dist^2(\{\mX_i \},\{ \mX_i^\star \}),\\
    \label{UPPER BOUND OF TWO DISTANCES_1}
    &&\|\calX-\calX^\star\|_F^2\leq\frac{9N}{4}\dist^2(\{\mX_i \},\{ \mX_i^\star \}),
\end{eqnarray}
where $\dist^2(\{\mX_i \},\{ \mX_i^\star \})$ is defined in \eqref{BALANCED NEW DISTANCE BETWEEN TWO TENSORS}.

\end{lemma}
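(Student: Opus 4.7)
Both bounds rest on the identity $\operatorname{vec}(\calX) = L(\mX_1) \,\ol\otimes\, \cdots \,\ol\otimes\, L(\mX_N)$ and its rotated analogue for $\calX^\star$. For any orthonormal $\mR_1,\ldots,\mR_{N-1}$ (with $\mR_0 = \mR_N = 1$), the rotations cancel telescopically in the matrix product defining $\calX^\star$, so $\operatorname{vec}(\calX^\star) = L_\mR(\mX_1^\star) \,\ol\otimes\, \cdots \,\ol\otimes\, L_\mR(\mX_N^\star)$; this reduces both inequalities to manipulations of $\ol\otimes$-products controlled by Lemma~\ref{KRONECKER PRODUCT VECTORIZATION2}. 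The upper bound will hold for every $\mR$ (and then be minimized), while the lower bound will be proved for a specific good choice of $\mR$.

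For the upper bound \eqref{UPPER BOUND OF TWO DISTANCES_1}, I apply Lemma~\ref{EXPANSION_A1TOAN-B1TOBN_1} slicewise to get
\[
\operatorname{vec}(\calX) - \operatorname{vec}(\calX^\star) \;=\; \sum_{i=1}^N L_\mR(\mX_1^\star) \ol\otimes \cdots \ol\otimes L_\mR(\mX_{i-1}^\star) \ol\otimes \bigl(L(\mX_i) - L_\mR(\mX_i^\star)\bigr) \ol\otimes L(\mX_{i+1}) \ol\otimes \cdots \ol\otimes L(\mX_N).
\]
Cauchy--Schwarz yields $\|\calX - \calX^\star\|_F^2 \le N \sum_i \|D_i\|_2^2$ where $D_i$ is the $i$-th summand. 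For $i \le N-1$, I split $D_i$ at the difference term and use \eqref{KRONECKER PRODUCT VECTORIZATION6}--\eqref{KRONECKER PRODUCT VECTORIZATION4}; every other factor has spectral norm $1$ by left-orthogonality, and the hypothesis $\|L(\mX_N)\|_2^2 \le \frac{9}{4}\|\calX^\star\|_F^2$ gives $\|D_i\|_2^2 \le \frac{9}{4}\|\calX^\star\|_F^2 \|L(\mX_i) - L_\mR(\mX_i^\star)\|_F^2$. For $i = N$ only orthonormal leading factors remain, producing $\|D_N\|_2^2 \le \|L(\mX_N) - L_\mR(\mX_N^\star)\|_2^2$. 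Minimizing over $\mR$ gives \eqref{UPPER BOUND OF TWO DISTANCES_1}.

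For the lower bound \eqref{LOWER BOUND OF TWO DISTANCES_1}, define the partial products $\mU_i = L(\mX_1) \ol\otimes \cdots \ol\otimes L(\mX_i)$ and $\mU_i^\star$ analogously for $i \le N-1$. Left-orthogonality forces $\mU_i^\top \mU_i = \mId_{r_i}$, and the columns of $\mU_i$ span the column space of the unfolding $\calX^{\langle i\rangle}$. Thus Lemma~\ref{left ortho upper bound} supplies an optimal rotation $\wh\mR_i$ with $\|\mU_i - \mU_i^\star \wh\mR_i\|_F \le 2\|\calX - \calX^\star\|_F / \underline{\sigma}(\calX^\star)$. Set $\wh\mR_0 = \wh\mR_N = 1$ and use these as the rotations inside the definition of $\dist$. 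A key observation is that the relation $\mU_i^\star \wh\mR_i = (\mU_{i-1}^\star \wh\mR_{i-1}) \ol\otimes L_{\wh\mR}(\mX_i^\star)$ holds because of the same telescoping cancellation, so combining it with $\mU_i = \mU_{i-1} \ol\otimes L(\mX_i)$ yields
\[
(\mU_{i-1}^\star \wh\mR_{i-1}) \ol\otimes \bigl(L(\mX_i) - L_{\wh\mR}(\mX_i^\star)\bigr) \;=\; (\mU_i - \mU_i^\star \wh\mR_i) - (\mU_{i-1} - \mU_{i-1}^\star \wh\mR_{i-1}) \ol\otimes L(\mX_i).
\]
A direct computation using $(\mU_{i-1}^\star \wh\mR_{i-1})^\top (\mU_{i-1}^\star \wh\mR_{i-1}) = \mId$ shows the Frobenius norm of the left-hand side equals $\|L(\mX_i) - L_{\wh\mR}(\mX_i^\star)\|_F$, and I bound the right-hand side via \eqref{KRONECKER PRODUCT VECTORIZATION3}; the last term contributes a factor $\|L(\mX_i)\|_F = \sqrt{r_i}$, which is the source of the $\sum_{i=2}^{N-1} r_i$ term in \eqref{LOWER BOUND OF TWO DISTANCES_1}. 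For $i = N$, since $\mU_N = \operatorname{vec}(\calX)$ and $r_N = 1$, the analogous decomposition gives $\|L(\mX_N) - L_{\wh\mR}(\mX_N^\star)\|_2 \le \|\calX - \calX^\star\|_F + \|\mU_{N-1} - \mU_{N-1}^\star \wh\mR_{N-1}\|\cdot \|L(\mX_N)\|_2$, and the hypothesis on $\|L(\mX_N)\|_2$ controls the last factor. Squaring each bound, weighting appropriately by $\|\calX^\star\|_F^2$ to match the definition of $\dist^2$, and absorbing stray constants via $\underline{\sigma}^2(\calX^\star) \le \|\calX^\star\|_F^2$ produces \eqref{LOWER BOUND OF TWO DISTANCES_1}.

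The main obstacle is the recursive step in the lower bound: choosing the rotations $\wh\mR_i$ so that they \emph{simultaneously} align all consecutive partial products (hence the use of the telescoping-rotation identity) and deciding, on each of the two arguments of $\ol\otimes$, whether to deploy the spectral or the Frobenius version of Lemma~\ref{KRONECKER PRODUCT VECTORIZATION2} so that the cumulative constant grows only polynomially---rather than exponentially---in $N$. The hypothesis $\|L(\mX_N)\|_2^2 \le \frac{9}{4}\|\calX^\star\|_F^2$ is exactly what reconciles the unit-normalized orthonormal factors $L(\mX_i),\,i\le N-1$ with the energy-carrying final factor $L(\mX_N)$.
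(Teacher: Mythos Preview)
Your proposal is correct and follows essentially the same approach as the paper. Both arguments use Lemma~\ref{EXPANSION_A1TOAN-B1TOBN_1} plus Cauchy--Schwarz for the upper bound, and for the lower bound both pick the rotations $\wh\mR_i$ via Lemma~\ref{left ortho upper bound} applied to the partial products $\mU_i = L(\mX_1)\ol\otimes\cdots\ol\otimes L(\mX_i)$, then extract each factor difference by the same add-and-subtract decomposition (your identity $(\mU_{i-1}^\star\wh\mR_{i-1})\ol\otimes(L(\mX_i)-L_{\wh\mR}(\mX_i^\star)) = (\mU_i-\mU_i^\star\wh\mR_i)-(\mU_{i-1}-\mU_{i-1}^\star\wh\mR_{i-1})\ol\otimes L(\mX_i)$ is exactly the paper's step \eqref{LOWER BOUND OF TWO DISTANCES_5} written in compact form). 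Your observation that orthonormality of $\mU_{i-1}^\star\wh\mR_{i-1}$ makes the left-hand side's Frobenius norm equal to $\|L(\mX_i)-L_{\wh\mR}(\mX_i^\star)\|_F$ is precisely how the paper opens that computation, and the resulting constants $(8r_i+8)$ for $2\le i\le N-1$ and $20\|\calX^\star\|_F^2$ for $i=N$ match.
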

\begin{proof}
By the definition of the
$i$-th unfolding of the tensor $\calX$, we have
\begin{eqnarray}
    \label{expansion of unfolding in tensor}
    \calX^{\< i \>} = \mX^{\leq i}\mX^{\geq i+1},
\end{eqnarray}
where each row of the left part $\mX^{\leq i}$ and each column of the right part $\mX^{\geq i+1}$ can be represented as
\begin{eqnarray}
    \label{each row of left part}
    \mX^{\leq i}(s_1\cdots s_i,:) = \mX_1(s_1)\cdots \mX_i(s_i),
\end{eqnarray}
\begin{eqnarray}
    \label{each column of right part}
    \mX^{\geq i+1}(:,s_{i+1}\cdots s_N) = \mX_{i+1}(s_{i+1})\cdots \mX_N(s_N).
\end{eqnarray}

According to \citep[Lemma 1]{Cai2022provable}, the left part in the left-orthogonal TT format satisfies ${\mX^{\leq i}}^\top \mX^{\leq i} = \mId_{r_i}, i\in[N-1]$. Furthermore, based on the analysis of \citep{Cai2022provable} stated in {Lemma} \ref{left ortho upper bound}, we have
\begin{eqnarray}
    \label{relationship between left O with  tensor SVD 1}
    \max_{i=1,\dots,N-1}\|\mX^{\leq i} - {\mX^\star}^{\leq i}\mR_i \|_F \leq \frac{2\|\calX-\calX^\star\|_F}{\underline{\sigma}(\calX^\star)},
\end{eqnarray}
where $\mR_i = \argmin_{\wt{\mR}_i\in\O^{r_i\times r_i}}\|\mX^{\leq i} - {\mX^\star}^{\leq i}\wt{\mR}_i  \|_F $.

By the definition of $L(\mX_1^\star) \ol \otimes \cdots \ol \otimes L(\mX_N^\star)$ in \eqref{each row in Kronecker product L}, we have
\begin{eqnarray}
    \label{equation between two left parts}
    \mX^{\leq i} - {\mX^\star}^{\leq i}\mR_i = L(\mX_1)\ol \otimes \cdots \ol \otimes L(\mX_i)   -  L_{\mR}(\mX_1^\star) \ol \otimes \cdots \ol \otimes L_{\mR}(\mX_i^\star),
\end{eqnarray}
which together with the above equation gives
\begin{eqnarray}
    \label{LOWER BOUND OF TWO DISTANCES_2}
    \|L(\mX_1)\ol \otimes \cdots \ol \otimes L(\mX_i)   -  L_{\mR}(\mX_1^\star) \ol \otimes \cdots \ol \otimes L_{\mR}(\mX_i^\star)  \|_F^2  \leq\frac{4\|\calX-\calX^\star\|_F^2}{\underline{\sigma}^2(\calX^\star)}, i \in [N-1].
\end{eqnarray}

We now use this result to upper bound $\|L(\mX_i)  -  L_{\mR}(\mX_i^\star)  \|_F^2$ for each $i$. First, setting $i = 1$ in the above equation directly yields
\begin{eqnarray}
    \label{LOWER BOUND OF TWO DISTANCES_2 11}
    \|L(\mX_1)  -  L_{\mR}(\mX_1^\star)  \|_F^2  \leq\frac{4\|\calX-\calX^\star\|_F^2}{\underline{\sigma}^2(\calX^\star)}.
\end{eqnarray}

With \eqref{KRONECKER PRODUCT VECTORIZATION11 - 3} and \eqref{LOWER BOUND OF TWO DISTANCES_2}, we can obtain the result for $i =2$ as
\begin{eqnarray}
    \label{LOWER BOUND OF TWO DISTANCES_5}
    &\!\!\!\!\!\!\!\!&\|L({\mX}_2)-L_{\mR}({\mX}_2^\star)\|_F^2\nonumber\\
    &\!\!\!\!=\!\!\!\!&\|L_{\mR}({\mX}_1^\star)\ol \otimes L({\mX}_2)- L_{\mR}({\mX}_1^\star)\ol \otimes L_{\mR}({\mX}^\star_2)\|_F^2\nonumber\\
    &\!\!\!\!=\!\!\!\!&\|L_{\mR}({\mX}_1^\star)\ol \otimes L({\mX}_2)-L({\mX}_1)\ol \otimes L({\mX}_2)+L({\mX}_1)\ol \otimes L({\mX}_2)- L_{\mR}({\mX}_1^\star)\ol \otimes L_{\mR}({\mX}^\star_2)\|_F^2\nonumber\\
    &\!\!\!\!\leq\!\!\!\!&2\|L_{\mR}({\mX}_1^\star)\ol \otimes L({\mX}_2)-L({\mX}_1)\ol \otimes L({\mX}_2)\|_F^2+2\|L({\mX}_1)\ol \otimes L({\mX}_2)- L_{\mR}({\mX}_1^\star)\ol \otimes L_{\mR}({\mX}_2^\star)\|_F^2\nonumber\\
    &\!\!\!\!\leq\!\!\!\!&2\|L({\mX}_2)\|_F^2\|L({\mX}_1)-L_{\mR}({\mX}_1^\star)\|_F^2+2\|L({\mX}_1)\ol \otimes L({\mX}_2)- L_{\mR}({\mX}_1^\star)\ol \otimes L_{\mR}({\mX}_2^\star)\|_F^2\nonumber\\
    &\!\!\!\!\leq\!\!\!\!&\frac{(8r_2+8)\|\calX-\calX^\star\|_F^2}{\underline{\sigma}^2(\calX^\star)}.
\end{eqnarray}
A similar derivation also gives
\begin{eqnarray}
    \label{LOWER BOUND OF TWO DISTANCES_6}
    \|L({\mX}_{i})-L_{\mR}({\mX}_{i}^\star)\|_F^2\leq\frac{(8r_{i}+8)\|\calX-\calX^\star\|_F^2}{\underline{\sigma}^2(\calX^\star)}, \ \  i=3,\dots, N-1.
\end{eqnarray}
Finally, we bound the term for $i = N$ as follows:
\begin{eqnarray}
    \label{LOWER BOUND OF TWO DISTANCES_7}
    &\!\!\!\!\!\!\!\!&\|L({\mX}_N)-L_{\mR}({\mX}_N^\star)\|_2^2\nonumber\\
    &\!\!\!\!=\!\!\!\!&\|L_{\mR}({\mX}_1^\star)\ol \otimes\cdots\ol \otimes L_{\mR}({\mX}^\star_{N-1}) \ol \otimes(L({\mX}_N)-L_{\mR}({\mX}_N^\star))\|_2^2\nonumber\\
    &\!\!\!\!=\!\!\!\!&\|L_{\mR}({\mX}_1^\star)\ol \otimes\cdots\ol \otimes L_{\mR}({\mX}^\star_{N-1}) \ol \otimes L({\mX}_N)-L({\mX}_1)\ol \otimes\cdots\ol \otimes L({\mX}_{N-1}) \ol \otimes L({\mX}_N)\nonumber\\
    &&\!\!\!\!+ L({\mX}_1)\ol \otimes\cdots\ol \otimes L({\mX}_{N-1}) \ol \otimes L({\mX}_N)-L_{\mR}({\mX}_1^\star)\ol \otimes\cdots\ol \otimes L_{\mR}({\mX}^\star_{N-1}) \ol \otimes L_{\mR}({\mX}_N^\star)\|_2^2\nonumber\\
    &\!\!\!\!\leq\!\!\!\!&2\|R({\mX}_{N})\|^2\|L({\mX}_1)\ol \otimes\cdots\ol \otimes L({\mX}_{N-1}) -L_{\mR}({\mX}_1^\star)\ol \otimes\cdots\ol \otimes L_{\mR}({\mX}^\star_{N-1})\|_F^2+2\|\calX-\calX^\star\|_F^2\nonumber\\
    &\!\!\!\!\leq\!\!\!\!&\frac{18\ol{\sigma}^2(\calX^\star)\|\calX-\calX^\star\|_F^2}{\underline{\sigma}^2(\calX^\star)}+2\|\calX-\calX^\star\|_F^2\nonumber\\
    &\!\!\!\!\leq\!\!\!\!&\frac{20\ol{\sigma}^2(\calX^\star)\|\calX-\calX^\star\|_F^2}{\underline{\sigma}^2(\calX^\star)},
\end{eqnarray}
where $R({\mX}_{N})$ is the right unfolding matrix of $\mX_{N}$ and the first inequality follows because $\|\mA_1 \ol \otimes L({\mX}_N)- \mA_2 \ol \otimes L({\mX}_N)\|_F = \|(\mA_1 - \mA_2) R({\mX}_{N}) \|_F \leq \|\mA_1 - \mA_2\|_F\|R({\mX}_{N})\| = \|\mA_1 - \mA_2\|_F\sigma_1(\calX^{\<N-1 \>})$ with $\mA_1 = L_{\mR}({\mX}_1^\star)\ol \otimes\cdots \ol \otimes L_{\mR}({\mX}^\star_{N-1})$ and $\mA_2 = L({\mX}_1)\ol \otimes\cdots\ol \otimes L({\mX}_{N-1})$.

Combining \eqref{LOWER BOUND OF TWO DISTANCES_5}, \eqref{LOWER BOUND OF TWO DISTANCES_6} and \eqref{LOWER BOUND OF TWO DISTANCES_7} together gives
\begin{eqnarray}
    \label{LOWER BOUND OF TWO DISTANCES_8}
    \text{dist}^2(\{\mX_i \},\{ \mX_i^\star \})\leq\frac{8(N+1+\sum_{i=2}^{N-1}r_i)\ol{\sigma}^2(\calX^\star)}{\underline{\sigma}^2(\calX^\star)}\|\calX-\calX^\star\|_F^2.
\end{eqnarray}

On the other hand, invoking {Lemma} \ref{EXPANSION_A1TOAN-B1TOBN_1} gives
\begin{eqnarray}
    \label{UPPER BOUND OF TWO DISTANCES_2}
    \|\mathcal{X}-\mathcal{X}^\star\|_F^2 &\!\!\!\!=\!\!\!\!& \|L({\bm X}_1^\star)\overline{\otimes} \cdots \overline{\otimes} L({\bm X}_{i-1}^\star) \overline{\otimes} (L({\bm X}_{i})-L({\bm X}_{i}^\star)) \overline{\otimes} L({\bm X}_{i+1})\overline{\otimes} \cdots \overline{\otimes} L({\bm X}_{N})\|_F^2\nonumber\\
    &\!\!\!\!\leq\!\!\!\!& N\bigg(\frac{9 \ol{\sigma}^2(\calX^\star)}{4}\sum_{i=1}^{N-1}\|L({\bm X}_i)-L({\bm X}_i^\star)\|_F^2+\|L({\bm X}_N)-L({\bm X}_N^\star)\|_F^2\bigg)\nonumber\\
    &\!\!\!\!\leq\!\!\!\!&\frac{9N}{4}\text{dist}^2(\{\mX_i \},\{ \mX_i^\star \}).
\end{eqnarray}
where the first inequality follows $\|L({\bm X}_1^\star)\overline{\otimes} \cdots \overline{\otimes} (L({\bm X}_{i})-L({\bm X}_{i}^\star)) \overline{\otimes} L({\bm X}_{i+1})\overline{\otimes} \cdots \overline{\otimes} L({\bm X}_{N})\|_F = \|L({\bm X}_1^\star)\overline{\otimes} \cdots \overline{\otimes} (L({\bm X}_{i})-L({\bm X}_{i}^\star))  {\calX}^{\geq i+1}  \|_F\leq \|L({\bm X}_1^\star)\overline{\otimes} \cdots \overline{\otimes} (L({\bm X}_{i})-L({\bm X}_{i}^\star))\|_F \| {\calX}^{\geq i+1}\|\leq \|L({\bm X}_1^\star)\|\cdots \|L({\bm X}_{i-1}^\star)\| \|L({\bm X}_{i})-L({\bm X}_{i}^\star)\|_F \ol{\sigma}(\calX)$.
\end{proof}

\begin{lemma}(\citep[Lemma 1]{LiSIAM21})
\label{NONEXPANSIVENESS PROPERTY OF POLAR RETRACTION_1}
Let ${\mX}\in\text{St}(n,r)$ and ${\bm \xi}\in\text{T}_{\mX} \text{St}$ be given. Consider the point ${\mX}^+={\mX}+{\bm \xi}$. Then, the polar decomposition-based retraction $\text{Retr}_{\mX}({\mX}^+)={\mX}^+({{\mX}^+}^\top{\mX}^+)^{-\frac{1}{2}}$ satisfies
\begin{eqnarray}
    \label{NONEXPANSIVENESS PROPERTY OF POLAR RETRACTION_2}
    \|\text{Retr}_{\mX}(\mX^+)-\overline{\mX}\|_F\leq\|{\mX}^+-\overline{\mX}\|_F=\|{\mX}+{\bm \xi}-\overline{\mX}\|_F, \  \forall\overline{\mX}\in\text{St}(n,r).
\end{eqnarray}
\end{lemma}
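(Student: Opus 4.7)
}
The plan is to exploit the polar decomposition structure $\mX^+ = \mU\mP$, where $\mU=\text{Retr}_{\mX}(\mX^+)\in\text{St}(n,r)$ and $\mP=((\mX^+)^\top\mX^+)^{1/2}$ is symmetric positive semidefinite, and compare the two squared Frobenius distances directly. The critical observation is that the tangent-space constraint forces $\mP$ to dominate the identity. Specifically, because $\vxi\in\text{T}_{\mX}\text{St}$ satisfies $\mX^\top\vxi+\vxi^\top\mX=\vzero$, we get
\begin{eqnarray*}
(\mX^+)^\top\mX^+=\mId_r+\vxi^\top\vxi\succeq\mId_r,
\end{eqnarray*}
so that $\mP\succeq\mId_r$. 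This ``one-sided'' inequality will be the workhorse that rules out the problematic case $\sigma_i(\mP)<1$ that might otherwise spoil a direct singular-value comparison.

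Next, I would write $\mX^+-\ol{\mX}=\mU(\mP-\mId_r)+(\mU-\ol{\mX})$ and expand the squared Frobenius norm. Since $\mU^\top\mU=\mId_r$ and $\mP-\mId_r$ is symmetric,
\begin{eqnarray*}
\|\mX^+-\ol{\mX}\|_F^2-\|\mU-\ol{\mX}\|_F^2=\|\mP-\mId_r\|_F^2+2\,\text{tr}\!\left((\mP-\mId_r)(\mId_r-\mU^\top\ol{\mX})\right).
\end{eqnarray*}
The goal then reduces to showing the right-hand side is nonnegative. The bracketed cross term splits as $2\,\text{tr}(\mP-\mId_r)-2\,\text{tr}((\mP-\mId_r)\mU^\top\ol{\mX})$.

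To control the last term, I would use the standard fact that for any PSD matrix $\mM$ and any square matrix $\mA$ of compatible size, $|\text{tr}(\mM\mA)|\leq\|\mA\|\cdot\text{tr}(\mM)$; this follows by diagonalizing $\mM=\sum_i\lambda_i\vv_i\vv_i^\top$ and noting $|\vv_i^\top\mA\vv_i|\leq\|\mA\|$. Applying this with $\mM=\mP-\mId_r\succeq\vzero$ and $\mA=\mU^\top\ol{\mX}$, whose operator norm is at most $1$ because both $\mU$ and $\ol{\mX}$ lie on $\text{St}(n,r)$, yields $\text{tr}((\mP-\mId_r)\mU^\top\ol{\mX})\leq\text{tr}(\mP-\mId_r)$. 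Plugging this back gives
\begin{eqnarray*}
\|\mX^+-\ol{\mX}\|_F^2-\|\mU-\ol{\mX}\|_F^2\geq\|\mP-\mId_r\|_F^2\geq 0,
\end{eqnarray*}
which is exactly the desired nonexpansiveness. The main obstacle to watch out for is verifying that $\mP\succeq\mId_r$ really does follow from $\vxi\in\text{T}_{\mX}\text{St}$ (rather than only $\mP\succ 0$); once that is in place, the trace/operator-norm inequality above closes the argument cleanly without having to track individual singular values of $\mX^+$.
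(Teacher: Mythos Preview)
Your argument is correct. The paper itself does not provide a proof of this lemma; it simply cites \cite[Lemma 1]{LiSIAM21} and uses the result as a black box. Your approach---using the tangent-space identity to get $(\mX^+)^\top\mX^+=\mId_r+\vxi^\top\vxi\succeq\mId_r$, hence $\mP\succeq\mId_r$, and then bounding the cross term via $|\trace(\mM\mA)|\le\|\mA\|\trace(\mM)$ for PSD $\mM$---is clean and self-contained, and each step checks out.
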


\section{Proof of \Cref{Local Convergence of Stiefel_Theorem} in Tensor-train Factorization}
\label{Local Convergence Proof of Riemannman gradient descent}

\begin{proof}
Before proving \Cref{Local Convergence of Stiefel_Theorem}, we first present a useful property for the factors $L(\mX_i^{(t)})$. Due to the retraction, the factors $L(\mX_i^{(t)}), i\in[N-1]$ are always orthonormal. Assuming that \begin{align}
\text{dist}^2(\{\mX_i^{(t)} \},\{ \mX_i^\star \})\leq \frac{\underline{\sigma}^2(\calX^\star)}{72(N^2-1)(N+1+\sum_{i=2}^{N-1}r_i)},
\label{eq:ini-cond-for-XN-factorization}\end{align}
which is true for $t = 0$ and will be proved later for $t\ge 1$,
we obtain that
\begin{eqnarray}
\label{upper bound TT spctral norm}
    \sigma_1^2({\calX^{(t)}}^{\<i \>}) &\!\!\!\!= \!\!\!\!&  \|{\calX^{(t)}}^{\geq i+1} \|^2 \leq \min_{\mR_i\in\O^{r_i\times r_i}}2\|\mR_{i}^\top{\calX^{\star}}^{\geq i+1} \|^2 + 2\|{\calX^{(t)}}^{\geq i+1} - \mR_{i}^\top{\calX^{\star}}^{\geq i+1} \|^2\nonumber\\
    &\!\!\!\!\leq \!\!\!\!&2\ol{\sigma}^2(\calX^\star) + \min_{\mR_i\in\O^{r_i\times r_i}}2 \|{\calX^{(t)}}^{\<i \>} - {\calX^\star}^{\<i \>} + {\calX^\star}^{\<i \>} - {\calX^{(t)}}^{\leq i}\mR_{i}^\top{\calX^{\star}}^{\geq i+1}  \|^2\nonumber\\
    &\!\!\!\!\leq \!\!\!\!&  2\ol{\sigma}^2(\calX^\star) + 4\|\calX^{(t)} - \calX^\star \|_F^2 + \min_{\mR_i\in\O^{r_i\times r_i}}4 \|\mR_{i}^\top{\calX^{\star}}^{\geq i+1} \|^2 \|{\calX^{(t)}}^{\leq i} - {\calX^\star}^{\leq i} \mR_i \|_F^2\nonumber\\
    &\!\!\!\!\leq \!\!\!\!& 2\ol{\sigma}^2(\calX^\star) + \bigg(4 + \frac{16\ol{\sigma}^2(\calX^\star)}{\underline{\sigma}^2(\calX^\star)}\bigg)\|\calX^{(t)}-\calX^\star\|_F^2\nonumber\\
    &\!\!\!\!\leq \!\!\!\!& 2\ol{\sigma}^2(\calX^\star) + \frac{45N\ol{\sigma}^2(\calX^\star)}{\underline{\sigma}^2(\calX^\star)} \text{dist}^2(\{\mX_i^{(t)} \},\{ \mX_i^\star \})\leq \frac{9\ol{\sigma}^2(\calX^\star)}{4}, i\in[N-1].
\end{eqnarray}
where each element of ${\calX^{(t)}}^{\leq i}$ and ${\calX^{(t)}}^{\geq i+1} $ are respectively defined in \eqref{each row of left part} and \eqref{each column of right part}. The fourth and last lines respectively follow \eqref{relationship between left O with  tensor SVD 1} and \eqref{UPPER BOUND OF TWO DISTANCES_1}. Note that  $\ol{\sigma}^2(\calX^{(t)}) = \max_{i=1}^{N-1}\sigma_1^2({\calX^{(t)}}^{\<i \>})\leq \frac{9\ol{\sigma}^2(\calX^\star)}{4}$.

We now prove the decay of $\text{dist}^2(\{\mX_i^{(t+1)} \},\{ \mX_i^\star \})$. First recall from~\eqref{expansion of distance in tensor factorization-main paper} that
\begin{eqnarray}
    \label{expansion of distance in tensor factorization}
&\!\!\!\!\!\!\!\!&\hspace{-1.1cm}\text{dist}^2(\{\mX_i^{(t+1)} \},\{ \mX_i^\star \})\nonumber\\
&\!\!\!\!\!\!\!\!&\hspace{-1.5cm}\leq\text{dist}^2(\{\mX_i^{(t)} \},\{ \mX_i^\star \})-2\mu\sum_{i=1}^{N} \bigg\< L(\mX_i^{(t)})-L_{\mR^{(t)}}(\mX_i^\star),\calP_{\text{T}_{L({\mX}_i)} \text{St}}\bigg(\nabla_{L({\mX}_{i})}f(\mX_1^{(t)}, \dots, \mX_N^{(t)})\bigg)\bigg\>\nonumber\\
&\!\!\!\!\!\!\!\!&\hspace{-1.5cm}+\mu^2\bigg(\frac{1}{\ol{\sigma}^2(\calX^\star)}\sum_{i=1}^{N-1}\bigg\|\calP_{\text{T}_{L({\mX}_i)} \text{St}}\bigg(\nabla_{L({\mX}_{i})}f(\mX_1^{(t)}, \dots, \mX_N^{(t)})\bigg)\bigg\|_F^2+\bigg\|\nabla_{L({\mX}_{N})}f(\mX_1^{(t)}, \dots, \mX_N^{(t)}) \bigg\|_2^2\bigg),\nonumber\\
\end{eqnarray}
where to unify the notation for all $i$, we define the projection onto the tangent space for $i = N$ as $\calP_{\text{T}_{L({\mX}_N)} \text{St}}=\calI$ since there is no constraint on the $N$-th factor. Note that the gradient $\nabla_{L({\mX}_{i})}f(\mX_1^{(t)}, \dots, \mX_N^{(t)})$ is defined as
\begin{eqnarray}
    \label{RIEMANNIAN_GRADIENT_1-1}
    \nabla_{L({\mX}_{i})}f(\mX_1^{(t)}, \dots, \mX_N^{(t)}) = \begin{bmatrix}\nabla_{\mX_{i}(1)} f(\mX_1^{(t)}, \dots, \mX_N^{(t)})\\ \vdots \\ \nabla_{\mX_{i}(d_i)} f(\mX_1^{(t)}, \dots, \mX_N^{(t)})  \end{bmatrix},
\end{eqnarray}
where the gradient with respect to each factor $\mX_{i}(s_i)$ can be computed as
\begin{align*}
\nabla_{\mX_{i}(s_i)}f(\mX_1^{(t)}, \dots, \mX_N^{(t)})
=&\sum_{s_1,\ldots,s_{i-1},s_{i+1},\ldots,s_N } \Big( \big(\calX^{(t)}(s_1, \dots,s_N)-\calX^\star(s_1,\dots,s_N) \big)\cdot \\
&\hspace{0.5cm}
\mX_{i-1}^{(t)}(s_{i-1})^\top \cdots
     \mX_{1}^{(t)}(s_{1})^\top \mX_{N}^{(t)}(s_{N})^\top   \cdots  \mX_{i+1}^{(t)}(s_{i+1})^\top\Big).
\end{align*}
Note that computing this gradient only requires $\calX^\star$ and does not rely on the knowledge of the factors in a TT decomposition of $\calX^\star$.

The following is to bound the second and third terms in \eqref{expansion of distance in tensor factorization}, respectively.

\paragraph{Upper bound of the third term in \eqref{expansion of distance in tensor factorization}}
We first define three matrices for $i\in[N]$ as follows:
\begin{eqnarray}
    \label{The definition of D1}
    \mD_1(i) &\!\!\!\!=\!\!\!\!& \begin{bmatrix} \mX_{i-1}^{(t)}(1)^\top\!\!\cdots\!\mX_{1}^{(t)}(1)^\top \ \ \ \   \cdots \ \ \ \   \mX_{i-1}^{(t)}(d_{i-1})^\top\!\!\cdots\!\mX_{1}^{(t)}(d_{1})^\top   \end{bmatrix}\nonumber\\
    &\!\!\!\!=\!\!\!\!&L^\top(\mX_{i-1}^{(t)})\ol \otimes \cdots \ol \otimes L^\top(\mX_1^{(t)})\in\R^{r_i\times(d_1\cdots d_{i-1})},\\
    \mD_2(i) &\!\!\!\!=\!\!\!\!& \begin{bmatrix} \mX_{N}^{(t)}(1)^\top\cdots \mX_{i+1}^{(t)}(1)^\top\\ \vdots \\ \mX_{N}^{(t)}(d_N)^\top\cdots \mX_{i+1}^{(t)}(d_{i+1})^\top \end{bmatrix}\in\R^{(d_{i+1}\cdots d_N )\times r_i },
\end{eqnarray}
where we note that $\mD_1(1) = 1$ and $\mD_2(N)=1$.
Moreover, for each $s_i\in [d_i]$, we define matrix $\mE(s_i)\in\R^{(d_1\cdots d_{i-1})\times (d_{i+1}\cdots d_N)}$ whose $(s_1\cdots s_{i-1}, s_{i+1}\cdots s_N)$-th element  is given by $$\mE(s_i)(s_1\cdots s_{i-1}, s_{i+1}\cdots s_N) =  \mX^{(t)}(s_1,\dots,s_i,\dots,s_N) - \mX^\star(s_1,\dots,s_i,\dots,s_N).$$

Based on the aforementioned notations, we bound $\left\| \nabla_{L({\mX}_{i})}f(\mX_1^{(t)}, \dots, \mX_N^{(t)})\right\|_F^2$ by
\begin{eqnarray}
    \label{PROJECTED GRADIENT DESCENT SQUARED TERM 1 to N}
    \bigg\| \nabla_{L({\mX}_{i})}f(\mX_1^{(t)}, \dots, \mX_N^{(t)})\bigg\|_F^2&\!\!\!\!=\!\!\!\!& \sum_{s_i=1}^{d_i}\bigg\| \nabla_{\mX_{i}(s_i)} f(\mX_1^{(t)}, \dots, \mX_N^{(t)})\bigg\|_F^2\nonumber\\
    &\!\!\!\!=\!\!\!\!&\sum_{s_i=1}^{d_i}\|\mD_1(i) \mE(s_i) \mD_2(i)   \|_F^2\nonumber\\
&\!\!\!\!\leq\!\!\!\!& \sum_{s_i=1}^{d_i}\|L^\top(\mX_{i-1}^{(t)})\ol \otimes \cdots \ol \otimes L^\top(\mX_1^{(t)})\|^2\| \mD_2(i)\|^2 \|\mE(s_i)\|_F^2\nonumber\\
    &\!\!\!\!\leq\!\!\!\!&\|L(\mX_1^{(t)})\|^2\cdots\|L(\mX_{i-1}^{(t)})\|^2 \|{\calX^{(t)}}^{\geq i+1} \|^2 \|\calX^{(t)}-\calX^\star\|_F^2\nonumber\\
    &\!\!\!\!\leq\!\!\!\!&\begin{cases}
    \frac{9\ol{\sigma}^2(\calX^\star)}{4}\|\calX^{(t)}-\calX^\star\|_F^2, & i \in [N-1],\\
    \|\calX^{(t)}-\calX^\star\|_F^2, & i = N,
  \end{cases}
\end{eqnarray}
where we use \eqref{KRONECKER PRODUCT VECTORIZATION11 - 2}, $\|\mD_2(i)\| = \|(\mD_2(i))^\top\| = \|{\calX^{(t)}}^{\geq i+1} \| = \sigma_1({\calX^{(t)}}^{\<i \>})\leq \frac{3 \ol{\sigma}(\calX^\star)}{2}$ and $\sum_{s_i=1}^{d_i}\|\mE(s_i)\|_F^2 = \|\calX^{(t)}-\calX^\star\|_F^2$ in the second inequality.

Using \eqref{PROJECTED GRADIENT DESCENT SQUARED TERM 1 to N}, we  obtain an upper bound for the third term in \eqref{expansion of distance in tensor factorization} as
    \begin{eqnarray}
    \label{RIEMANNIAN FACTORIZATION SQUARED TERM UPPER BOUND}
    &\!\!\!\!\!\!\!\!&\frac{1}{\ol{\sigma}^2(\calX^\star)}\sum_{i=1}^{N-1}\bigg\|\calP_{\text{T}_{L({\mX}_i)} \text{St}}\bigg(\nabla_{L({\mX}_{i})}f(\mX_1^{(t)}, \dots, \mX_N^{(t)})\bigg)\bigg\|_F^2+\bigg\|\nabla_{L({\mX}_{N})}f(\mX_1^{(t)}, \dots, \mX_N^{(t)}) \bigg\|_2^2\nonumber\\
    &\!\!\!\!\leq\!\!\!\!&\frac{1}{\ol{\sigma}^2(\calX^\star)}\sum_{i=1}^{N-1}\bigg\|\nabla_{L({\mX}_{i})}f(\mX_1^{(t)}, \dots, \mX_N^{(t)})\bigg\|_F^2+\bigg\|\nabla_{L({\mX}_{N})}f(\mX_1^{(t)}, \dots, \mX_N^{(t)}) \bigg\|_2^2\nonumber\\
    &\!\!\!\!\leq\!\!\!\!&\frac{9N-5}{4}\|\calX^{(t)}-\calX^\star\|_F^2,
\end{eqnarray}
where the first inequality follows from the fact that for any matrix $\mB = \calP_{\text{T}_{L({\mX}_i)} \text{St}}(\mB) + \calP_{\text{T}_{L({\mX}_{i})} \text{St}}^{\perp}(\mB)$ where $\calP_{\text{T}_{L({\mX}_i)} \text{St}}(\mB)$ and $\calP_{\text{T}_{L({\mX}_{i})} \text{St}}^{\perp}(\mB)$ are orthogonal,  we have $\|\calP_{\text{T}_{L({\mX}_i)} \text{St}}(\mB)\|_F^2\leq \|\mB\|_F^2$.

\paragraph{Lower bound of the second term in \eqref{expansion of distance in tensor factorization}}
We first expand the second term in \eqref{expansion of distance in tensor factorization} as follows:
\begin{eqnarray}
    \label{RIEMANNIAN FACTORIZATION CROSS TERM LOWER BOUND_original}
    &\!\!\!\!\!\!\!\!&\sum_{i=1}^{N} \bigg\< L(\mX_i^{(t)})-L_{\mR^{(t)}}(\mX_i^\star),\calP_{\text{T}_{L({\mX}_i)} \text{St}}\bigg(\nabla_{L({\mX}_{i})}f(\mX_1^{(t)}, \dots, \mX_N^{(t)})\bigg)\bigg\>\nonumber\\
    &\!\!\!\!=\!\!\!\!&\sum_{i=1}^{N} \bigg\< L(\mX_i^{(t)})-L_{\mR^{(t)}}(\mX_i^\star), \nabla_{L({\mX}_{i})}f(\mX_1^{(t)}, \dots, \mX_N^{(t)})\bigg\> - T_1\nonumber\\
    &\!\!\!\!=\!\!\!\!&\bigg\<L(\mX_1^{(t)}) \ol \otimes \cdots \ol \otimes L(\mX_N^{(t)})-L_{\mR^{(t)}}({\mX}_1^\star) \ol \otimes \cdots \ol \otimes L_{\mR^{(t)}}({\mX}_N^\star),L(\mX_1^{(t)}) \ol \otimes \cdots \ol \otimes L(\mX_N^{(t)})\nonumber\\
    &\!\!\!\!\!\!\!\!& ~~ -L_{\mR^{(t)}}(\mX_1^\star) \ol \otimes \cdots \ol \otimes L_{\mR^{(t)}}(\mX_N^\star)+\vh^{(t)}\bigg\>-T_1,
\end{eqnarray}
where we define
\begin{eqnarray}
    \label{H_T IN THE CROSS TERM}
    &\!\!\!\!\!\!\!\!& \hspace{-1.3cm}\vh^{(t)}=L_{\mR^{(t)}}(\mX_1^\star)\ol \otimes  \cdots \ol \otimes L_{\mR^{(t)}}({\mX}_N^\star) - L(\mX_1^{(t)}) \ol \otimes \cdots \ol \otimes L(\mX_{N-1}^{(t)}) \ol \otimes L_{\mR^{(t)}}(\mX_{N}^\star)\nonumber\\
    &\!\!\!\!\!\!\!\!& \hspace{-0.6cm} +\sum_{i=1}^{N-1}L(\mX_1^{(t)})\ol \otimes\cdots\ol \otimes L(\mX_{i-1}^{(t)})\ol \otimes ( L({\mX}_i^{(t)}) -L_{\mR^{(t)}}({\mX}_i^\star) ) \ol \otimes L(\mX_{i+1}^{(t)})\ol \otimes \cdots \ol \otimes L(\mX_N^{(t)}),
\end{eqnarray}
and
    \begin{eqnarray}
        \label{The definition of T1}
        T_1=\sum_{i=1}^{N-1}\bigg\<\calP^{\perp}_{\text{T}_{L({\mX}_i)} \text{St}}(L(\mX_i^{(t)})-L_{\mR^{(t)}}(\mX_i^\star)), \nabla_{L({\mX}_{i})}f(\mX_1^{(t)}, \dots, \mX_N^{(t)}) \bigg\>.
    \end{eqnarray}

Recalling the definition for orthogonal complement projection  in \eqref{orthogonal complement to the Tangent space on the Stiefel manifold}, we can rewrite the term
$\calP^{\perp}_{\text{T}_{L({\mX}_i)} \text{St}}(\cdot)$ as
    \begin{eqnarray}
        \label{Projection orthogonal in the Riemannian gradient descent}
        &\!\!\!\!\!\!\!\!&\calP^{\perp}_{\text{T}_{L({\mX}_i)} \text{St}}(L(\mX_i^{(t)})-L_{\mR^{(t)}}({\mX}_i^\star))\nonumber\\
        &\!\!\!\!=\!\!\!\!&\frac{1}{2}L(\mX_i^{(t)})\bigg((L(\mX_i^{(t)})-L_{\mR^{(t)}}({\mX}_i^\star))^\top L(\mX_i^{(t)})
        +L^\top(\mX_i^{(t)})(L(\mX_i^{(t)})-L_{\mR^{(t)}}({\mX}_i^\star))\bigg)\nonumber\\
        &\!\!\!\!=\!\!\!\!&\frac{1}{2}L(\mX_i^{(t)})\bigg(2\mId_{r_i}-L_{\mR^{(t)}}^\top({\mX}_i^\star)L(\mX_i^{(t)})-L^\top(\mX_i^{(t)})L_{\mR^{(t)}}({\mX}_i^\star)\bigg)\nonumber\\
        &\!\!\!\!=\!\!\!\!&\frac{1}{2}L(\mX_i^{(t)})\bigg((L(\mX_i^{(t)})-L_{\mR^{(t)}}({\mX}_i^\star))^\top(L(\mX_i^{(t)})-L_{\mR^{(t)}}({\mX}_i^\star))\bigg).
    \end{eqnarray}

To derive the lower bound of \eqref{RIEMANNIAN FACTORIZATION CROSS TERM LOWER BOUND_original}, we first utilize  {Lemma} \ref{TRANSFORMATION OF NPLUS2 VARIABLES_1} to obtain the upper bound on $\|\vh^{(t)}\|_2^2$ as follows:
\begin{eqnarray}
    \label{H_T IN THE CROSS TERM UPPER BOUND}
    \|\vh^{(t)}\|_2^2&\!\!\!\!=\!\!\!\!&\|\sum_{i=1}^{N-1}\sum_{j=i+1}^N L(\mX_1^{(t)})\ol \otimes \cdots \ol \otimes L(\mX_{i-1}^{(t)})\ol \otimes (L(\mX_i^{(t)})-L_{\mR^{(t)}}(\mX_{i}^\star)) \ol \otimes L_{\mR^{(t)}}(\mX_{i+1}^\star) \ol \otimes \nonumber\\
    &\!\!\!\!\!\!\!\!&\cdots \ol \otimes L_{\mR}(\mX_{j-1}^\star)\ol \otimes (L(\mX_{j}^{(t)})-L_{\mR^{(t)}}(\mX_{j}^\star))\ol \otimes L(\mX_{j+1}^{(t)}) \ol \otimes \cdots \ol \otimes L(\mX_N^{(t)}) \|_2^2\nonumber\\
    &\!\!\!\!\leq\!\!\!\!&\frac{N(N-1)}{2}\bigg(\sum_{j=1}^{N-1}\|L(\mX_{j}^{(t)})-L_{\mR^{(t)}}({\mX_j}^\star)\|_F^2 \cdot\nonumber\\
    &\!\!\!\!\!\!\!\!&\bigg(\sum_{i=j+1}^{N-1} \frac{9\ol{\sigma}^2(\calX^\star)}{4}\|L(\mX_i^{(t)})-L_{\mR^{(t)}}({\mX}_i^\star)\|_F^2 + \|L(\mX_N^{(t)})-L_{\mR^{(t)}}(\mX_N^\star)\|_2^2\bigg)\bigg)\nonumber
    \end{eqnarray}
    \begin{eqnarray}
    &\!\!\!\!\leq\!\!\!\!&\frac{9N(N-1)}{8}\sum_{i=1}^{N-1}\|L(\mX_i^{(t)})-L_{\mR^{(t)}}(\mX_i^\star)\|_F^2\text{dist}^2(\{\mX_i^{(t)} \},\{ \mX_i^\star \})\nonumber\\
    &\!\!\!\!\leq\!\!\!\!&\frac{9N(N-1)}{8\ol{\sigma}^2(\calX^\star)}\text{dist}^4(\{\mX_i^{(t)} \},\{ \mX_i^\star \}),
\end{eqnarray}
where \eqref{KRONECKER PRODUCT VECTORIZATION11 - 3} and $\|\mA\ol \otimes L(\mX_{j+1}^{(t)}) \ol \otimes \cdots \ol \otimes L(\mX_N^{(t)}) \|_2 = \|\mA {\calX^{(t)}}^{\geq j+1}\|_F\leq \|{\calX^{(t)}}^{\geq j+1}\|\|\mA\|_F\leq \\ \frac{3 \ol{\sigma}(\calX^\star)}{2}\|\mA\|_F$ with $\mA = L(\mX_1^{(t)})\ol \otimes \cdots \ol \otimes L(\mX_{i-1}^{(t)})\ol \otimes (L(\mX_i^{(t)})-L_{\mR^{(t)}}(\mX_{i}^\star)) \ol \otimes L_{\mR^{(t)}}(\mX_{i+1}^\star) \ol \otimes \cdots \\ \ol \otimes L_{\mR}(\mX_{j-1}^\star)\ol \otimes (L(\mX_{j}^{(t)})-L_{\mR^{(t)}}(\mX_{j}^\star))$ are used in the first inequality. We then establish the upper bound of $T_1$ as follows:
    \begin{eqnarray}
        \label{PROJECTION ORTHOGONAL IN RIEMANNIAN UPPER BOUND}
    T_1&\!\!\!\!\leq\!\!\!\!&\sum_{i=1}^{N-1}\frac{1}{2}\|L(\mX_i^{(t)})\|\|L(\mX_i^{(t)})-L_{\mR^{(t)}}({\mX}_i^\star)\|_F^2\bigg|\bigg|\nabla_{L({\mX}_{i})}f(\mX_1^{(t)}, \dots, \mX_N^{(t)})\bigg|\bigg|_F\nonumber\\
        &\!\!\!\!\leq\!\!\!\!&\sum_{i=1}^{N-1}\frac{3\ol{\sigma}(\calX^\star)}{4}\|L(\mX_i^{(t)})-L_{\mR^{(t)}}({\mX}_i^\star)\|_F^2\|\calX^{(t)}-\calX^\star\|_F\nonumber\\
        &\!\!\!\!\leq\!\!\!\!&\frac{1}{4}\|\calX^{(t)}-\calX^\star\|_F^2
        +\frac{9(N-1)\ol{\sigma}^2(\calX^\star)}{16}\sum_{i=1}^{N-1}\|L(\mX_i^{(t)})-L_{\mR^{(t)}}({\mX}_i^\star)\|_F^4\nonumber\\
        &\!\!\!\!\leq\!\!\!\!&\frac{1}{4}\|\calX^{(t)}-\calX^\star\|_F^2+\frac{9(N-1)}{16\ol{\sigma}^2(\calX^\star)}\text{dist}^4(\{\mX_i^{(t)} \},\{ \mX_i^\star \}),
    \end{eqnarray}
    where the second inequality follows from \eqref{PROJECTED GRADIENT DESCENT SQUARED TERM 1 to N}.

Now plugging \eqref{H_T IN THE CROSS TERM UPPER BOUND} and \eqref{PROJECTION ORTHOGONAL IN RIEMANNIAN UPPER BOUND} into~\eqref{RIEMANNIAN FACTORIZATION CROSS TERM LOWER BOUND_original} gives
\begin{eqnarray}
    \label{RIEMANNIAN FACTORIZATION CROSS TERM LOWER BOUND}
    &\!\!\!\!\!\!\!\!&\sum_{i=1}^{N} \bigg\< L(\mX_i^{(t)})-L_{\mR^{(t)}}(\mX_i^\star),\calP_{\text{T}_{L({\mX}_i)} \text{St}}\bigg(\nabla_{L({\mX}_{i})}f(\mX_1^{(t)}, \dots, \mX_N^{(t)})\bigg)\bigg\>\nonumber\\
    &\!\!\!\!\geq\!\!\!\!&\frac{1}{2}\|\calX^{(t)}-\calX^\star\|_F^2-\frac{1}{2}\|\vh^{(t)}\|_2^2-\frac{1}{4}\|\calX^{(t)}-\calX^\star\|_F^2-\frac{9(N-1)}{16\ol{\sigma}^2(\calX^\star)}
    \text{dist}^4(\{\mX_i^{(t)} \},\{ \mX_i^\star \})\nonumber\\
    &\!\!\!\!\geq\!\!\!\!&\frac{1}{4}\|\calX^{(t)}-\calX^\star\|_F^2-\frac{9N(N-1)}{16\ol{\sigma}^2(\calX^\star)}\text{dist}^4(\{\mX_i^{(t)} \},\{ \mX_i^\star \})-\frac{9(N-1)}{16\ol{\sigma}^2(\calX^\star)}
    \text{dist}^4(\{\mX_i^{(t)} \},\{ \mX_i^\star \})\nonumber\\
    &\!\!\!\!\geq\!\!\!\!&\frac{\underline{\sigma}^2(\calX^\star)}{128(N+1+\sum_{i=2}^{N-1}r_i)\ol{\sigma}^2(\calX^\star)}\text{dist}^2(\{\mX_i^{(t)} \},\{ \mX_i^\star \}) + \frac{1}{8}\|\calX^{(t)}-\calX^\star\|_F^2,
\end{eqnarray}
where the first and second inequalities follow from~\eqref{PROJECTION ORTHOGONAL IN RIEMANNIAN UPPER BOUND} and \eqref{H_T IN THE CROSS TERM UPPER BOUND}, and
we utilize {Lemma} \ref{LOWER BOUND OF TWO DISTANCES} along with  the initial condition $\text{dist}^2(\{\mX_i^{(0)} \},\{ \mX_i^\star \})\leq \frac{\underline{\sigma}^2(\calX^\star)}{72(N^2-1)(N+1+\sum_{i=2}^{N-1}r_i)}$ in the last line.

\paragraph{Contraction}

Taking \eqref{RIEMANNIAN FACTORIZATION SQUARED TERM UPPER BOUND} and \eqref{RIEMANNIAN FACTORIZATION CROSS TERM LOWER BOUND} into \eqref{expansion of distance in tensor factorization}, we have
\begin{eqnarray}
    \label{LGNWTFSM5_1}
    \text{dist}^2(\{\mX_i^{(t+1)} \},\{ \mX_i^\star \})&\!\!\!\!\leq\!\!\!\!&\bigg(1-\frac{\underline{\sigma}^2(\calX^\star)}{64(N+1+\sum_{i=2}^{N-1}r_i)\ol{\sigma}^2(\calX^\star)}\mu\bigg)\text{dist}^2(\{\mX_i^{(t)} \},\{ \mX_i^\star \})\nonumber\\
    &\!\!\!\!\!\!\!\!& +\bigg(\frac{9N-5}{4}\mu^2- \frac{\mu}{4}\bigg)\|\calX^{(t)}-\calX^\star\|_F^2\nonumber\\
    &\!\!\!\!\leq\!\!\!\!&\bigg(1-\frac{\underline{\sigma}^2(\calX^\star)}{64(N+1+\sum_{i=2}^{N-1}r_i)\ol{\sigma}^2(\calX^\star)}\mu\bigg)\text{dist}^2(\{\mX_i^{(t)} \},\{ \mX_i^\star \}),
\end{eqnarray}
where we assume $\mu\leq\frac{1}{9N-5}$ in the last line.

\paragraph{Proof of \eqref{eq:ini-cond-for-XN-factorization}} We now prove \eqref{eq:ini-cond-for-XN-factorization} by induction. First note that \eqref{eq:ini-cond-for-XN-factorization} holds for $t = 0$. We now assume it holds at $t = t'$, which implies that $\sigma_1^2({\calX^{(t')}}^{\<i \>})  \leq\frac{9\ol{\sigma}^2(\calX^\star)}{4}, i\in[N-1]$. By invoking \eqref{LGNWTFSM5_1}, we have $\text{dist}^2(\{\mX_i^{(t'+1)}\}, \{\mX_i^\star \}) \le \text{dist}^2(\{\mX_i^{(t)} \},\{ \mX_i^\star \})$. Consequently, \eqref{eq:ini-cond-for-XN-factorization} also holds  at $ t= t'+1$. By induction, we can conclude that \eqref{eq:ini-cond-for-XN-factorization} holds for all $t\ge 0$. This completes the proof.

\end{proof}

\section{Proof of \Cref{TENSOR SENSING SPECTRAL INITIALIZATION} in Spectral Initialization}
\label{Proof of in the spectral initialization}

We first provide one useful lemma. As an immediate consequence of the RIP, the inner product between two low-rank TT format tensors is also nearly preserved if $\calA$ satisfies the RIP.
\begin{lemma} (\citep{CandsTIT11,Rauhut17} )
\label{RIP CONDITION FRO THE TENSOR TRAIN SENSING OTHER PROPERTY}
Suppose that $\calA$ obeys the $2\ol r$-RIP with a constant $\delta_{2\ol r}$. Then for any left-orthogonal TT formats $\calX_1,\calX_2\in\R^{d_{1} \times \cdots \times d_{N}}$ of rank at most $\ol r$, one has
\begin{eqnarray}
    \label{RIP CONDITION FRO THE TENSOR TRAIN SENSING OTHER PROPERTY_1}
    \bigg|\frac{1}{m}\<\calA(\calX_1),\calA(\calX_2)\>-\<\calX_1,\calX_2\>\bigg|\leq \delta_{2\ol r}\|\calX_1\|_F\|\calX_2\|_F,
\end{eqnarray}
or equivalently,
\begin{eqnarray}
    \label{RIP CONDITION FRO THE TENSOR TRAIN SENSING OTHER PROPERTY_1_1}
    \bigg|\bigg\<\bigg(\frac{1}{m}\calA^*\calA-\mathcal{I}\bigg)(\calX_1), \calX_2\bigg\>\bigg|\leq \delta_{2\ol r}\|\calX_1\|_F\|\calX_2\|_F,
\end{eqnarray}
where $\calA^*$ is the adjoint operator of $\calA$ and is defined as $\calA^*({\vx})=\sum_{i=1}^m x_i\calA_i$.
\end{lemma}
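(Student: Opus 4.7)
The plan is to use the standard polarization trick that underlies every RIP-to-inner-product upgrade in the compressive sensing literature (e.g.\ as in \cite{CandsTIT11} for matrices), adapted to the TT setting. The only thing that needs checking beyond the matrix case is that sums and differences of low-TT-rank tensors are themselves low-TT-rank, with a controlled rank increase.

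First, by homogeneity, both sides of \eqref{RIP CONDITION FRO THE TENSOR TRAIN SENSING OTHER PROPERTY_1} scale as $\|\calX_1\|_F\|\calX_2\|_F$, so without loss of generality I may assume $\|\calX_1\|_F=\|\calX_2\|_F=1$; I then only need to establish the bound $\delta_{2\ol r}$ on the right-hand side. I apply the polarization identity twice, once to the Euclidean inner product and once to $\tfrac{1}{m}\langle \calA(\calX_1),\calA(\calX_2)\rangle$, obtaining
\begin{eqnarray*}
\tfrac{1}{m}\langle \calA(\calX_1),\calA(\calX_2)\rangle-\langle\calX_1,\calX_2\rangle
=\tfrac{1}{4}\bigl(\tfrac{1}{m}\|\calA(\calX_1+\calX_2)\|_2^2-\|\calX_1+\calX_2\|_F^2\bigr)-\tfrac{1}{4}\bigl(\tfrac{1}{m}\|\calA(\calX_1-\calX_2)\|_2^2-\|\calX_1-\calX_2\|_F^2\bigr).
\end{eqnarray*}

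Next, I invoke the sum representation \eqref{summation of TT format}: since each $\calX_j$ has TT rank at most $\ol r$, both $\calX_1+\calX_2$ and $\calX_1-\calX_2$ are TT-format tensors whose ranks are bounded entrywise by $2\ol r$. Each admits a left-orthogonal form (Section~\ref{sec:TT}), so the assumed $2\ol r$-RIP applies and yields
\begin{eqnarray*}
\bigl|\tfrac{1}{m}\|\calA(\calX_1\pm\calX_2)\|_2^2-\|\calX_1\pm\calX_2\|_F^2\bigr|\le \delta_{2\ol r}\,\|\calX_1\pm\calX_2\|_F^2.
\end{eqnarray*}
Plugging in and using the triangle inequality on the polarization identity gives an upper bound of $\tfrac{\delta_{2\ol r}}{4}(\|\calX_1+\calX_2\|_F^2+\|\calX_1-\calX_2\|_F^2)$. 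The parallelogram law then collapses this to $\tfrac{\delta_{2\ol r}}{4}\cdot 2(\|\calX_1\|_F^2+\|\calX_2\|_F^2)=\delta_{2\ol r}$, proving \eqref{RIP CONDITION FRO THE TENSOR TRAIN SENSING OTHER PROPERTY_1} after undoing the normalization.

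Finally, the equivalence with \eqref{RIP CONDITION FRO THE TENSOR TRAIN SENSING OTHER PROPERTY_1_1} is immediate from the definition of the adjoint: $\tfrac{1}{m}\langle\calA(\calX_1),\calA(\calX_2)\rangle=\langle\tfrac{1}{m}\calA^*\calA(\calX_1),\calX_2\rangle$ and $\langle\calX_1,\calX_2\rangle=\langle\calI(\calX_1),\calX_2\rangle$, so the two left-hand sides coincide. There is no substantive obstacle here; the only nontrivial ingredient beyond the matrix proof is the rank-doubling bound for sums of TT tensors, which is exactly what \eqref{summation of TT format} provides.
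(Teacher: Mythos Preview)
Your proof is correct and is precisely the standard polarization argument from the cited references \cite{CandsTIT11,Rauhut17}; the paper itself does not supply a proof but simply cites the result, so there is nothing further to compare.
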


\begin{proof}[Proof of \Cref{TENSOR SENSING SPECTRAL INITIALIZATION}]
Before analyzing the spectral initialization, we first define the following restricted Frobenius norm for any tensor $\calH\in\R^{d_1\times\cdots\times d_N}$:
\begin{eqnarray}
    \label{Definition of the restricted F norm}
    \|\calH\|_{F,\ol r} &=&\max_{i \in [N-1]}\sqrt{\sum_{j=1}^{r_i}\sigma_j^2(\calH^{\<i \>})} \nonumber\\
    &=& \max_{\mV_i\in\R^{d_{i+1}\cdots d_N\times r_i}, \atop \mV_i\mV_i^\top = \mId_{r_i}, i\in [N-1]}\|\calH^{\< i\>} \mV_i \|_F   \nonumber\\
    &=& \max_{\calX\in\R^{d_1\times\cdots\times d_N}, \|\calX\|_F\leq 1, \atop {\rm rank}(\calX)=(r_1,\dots,r_{N-1}) } \<\calH,  \calX \>,
\end{eqnarray}
where $\rank(\calX)$ denotes the TT ranks of $\calX$.
Similar forms for the matrix case are provided in \citep{zhang2021preconditioned,tong2022scaled}.
We now upper bound $\|\calX^{(0)}-\calX^\star\|_F$ as
\begin{eqnarray}
    \label{LGNWTSSM6_2}
    &\!\!\!\!\!\!\!\!&\|\calX^{(0)}-\calX^\star\|_F\nonumber\\
    &\!\!\!\!=\!\!\!\!&\bigg\|\text{SVD}_{\vr}^{tt}\bigg(\frac{1}{m}\sum_{k=1}^my_k\calA_k\bigg)-\calX^\star\bigg\|_{F,2\ol r}\nonumber\\
    &\!\!\!\!\leq\!\!\!\!&\bigg\|\text{SVD}_{\vr}^{tt}\bigg(\frac{1}{m}\sum_{k=1}^my_k\calA_k\bigg)-\frac{1}{m}\sum_{k=1}^my_k\calA_k\bigg\|_{F,2\ol r}
    +\bigg\|\frac{1}{m}\sum_{k=1}^my_k\calA_k-\calX^\star\bigg\|_{F,2\ol r}\nonumber\\
    &\!\!\!\!\leq\!\!\!\!&\sqrt{N-1}\bigg\|\text{opt}_{\vr}\bigg(\frac{1}{m}\sum_{k=1}^my_k\calA_k\bigg)-\frac{1}{m}\sum_{k=1}^my_k\calA_k\bigg\|_{F,2\ol r}+
    \bigg\|\frac{1}{m}\sum_{k=1}^my_k\calA_k-\calX^\star\bigg\|_{F,2\ol r}\nonumber
    \end{eqnarray}
    \begin{eqnarray}
    &\!\!\!\!\leq\!\!\!\!&(1+\sqrt{N-1})\bigg\|\frac{1}{m}\sum_{k=1}^my_k\calA_k-\calX^\star\bigg\|_{F,2\ol r}\nonumber\\
    &\!\!\!\!=\!\!\!\!&(1+\sqrt{N-1})\max_{\calZ\in\R^{d_1\times\cdots\times d_N}, \|\calZ\|_F\leq 1, \atop {\rm rank}(\calZ)=(2r_1,\dots,2r_{N-1}) }\bigg|\bigg\<\bigg(\frac{1}{m}\calA^*\calA-\mathcal{I}\bigg)(\calX^\star), \calZ  \bigg\>\bigg|\nonumber\\
    &\!\!\!\!\leq\!\!\!\!& \delta_{3\ol r}(1+\sqrt{N-1})\|\calX^\star\|_F,
\end{eqnarray}
where $\text{opt}_{\vr}(\frac{1}{m}\sum_{k=1}^my_k\calA_k)$ is the best TT-approximation of ranks $\vr$ to $\frac{1}{m}\sum_{k=1}^my_k\calA_k$ in the Frobenius norm, the second inequality utilizes the quasi-optimality property of TT-SVD projection \citep{Oseledets11}, the third inequality follows because the definition of $\text{opt}_{\vr}(\cdot)$ and $\calX^\star$ has ranks $\vr$, and the last uses \eqref{RIP CONDITION FRO THE TENSOR TRAIN SENSING OTHER PROPERTY_1_1}.

\end{proof}

\section{Proof of \Cref{Local Convergence of Riemannian in the sensing_Theorem} in Tensor-train Sensing}
\label{Local Convergence Proof of Riemannman gradient descent Tensor Sensing}

\begin{proof}
The proof follows a similar approach to that for \Cref{Local Convergence of Stiefel_Theorem} in {Appendix} \ref{Local Convergence Proof of Riemannman gradient descent}. We first present useful properties for the factors $L(\mX_i^{(t)})$. Due to the retraction, $L(\mX_i^{(t)}), i\in[N-1]$ are always orthonormal. Assuming that \begin{align}
\text{dist}^2(\{\mX_i^{(t)} \},\{ \mX_i^\star \})\leq \frac{(4-15\delta_{(N+3)\ol r})\underline{\sigma}^2(\calX^\star)}{8(N+1+\sum_{i=2}^{N-1}r_i)(57N^2+393N-450)},
\label{eq:ini-cond-for-XN}\end{align}
which is true for $t = 0$ and will be proved later for $t\ge 1$, based on the derivation of \eqref{upper bound TT spctral norm}, we have
\begin{eqnarray}
\label{upper bound TT spctral norm sensing}
    \sigma_1^2({\calX^{(t)}}^{\<i \>})  &\!\!\!\!= \!\!\!\!& \|{\calX^{(t)}}^{\geq i+1} \|^2 \leq 2\ol{\sigma}^2(\calX^\star) + \bigg(4 + \frac{16\ol{\sigma}^2(\calX^\star)}{\underline{\sigma}^2(\calX^\star)}\bigg)\|\calX^{(t)}-\calX^\star\|_F^2\nonumber\\
    &\!\!\!\!\leq \!\!\!\!& 2\ol{\sigma}^2(\calX^\star) + \frac{45N\ol{\sigma}^2(\calX^\star)}{\underline{\sigma}^2(\calX^\star)} \text{dist}^2(\{\mX_i^{(t)} \},\{ \mX_i^\star \})\leq \frac{9\ol{\sigma}^2(\calX^\star)}{4}, i\in[N-1].
\end{eqnarray}
Note that  $\ol{\sigma}^2(\calX^{(t)}) = \max_{i=1}^{N-1}\sigma_1^2({\calX^{(t)}}^{\<i \>})\leq \frac{9\ol{\sigma}^2(\calX^\star)}{4}$.

We now prove the decay of the distance based on the result $\|{\calX^{(t)}}^{\geq i+1} \|^2\leq \frac{9\ol{\sigma}^2(\calX^\star)}{4}, i\in[N-1]$.
First recall  \eqref{expansion of distance in tensor factorization-main paper}:
\begin{eqnarray}
    \label{Expansion of distance in the noiseless tensor sensing}
    &\!\!\!\!\!\!\!\!&\hspace{-1.1cm}\text{dist}^2(\{\mX_i^{(t+1)} \},\{ \mX_i^\star \})\nonumber\\
    &\!\!\!\!\!\!\!\!&\hspace{-1.5cm}\leq\text{dist}^2(\{\mX_i^{(t)} \},\{ \mX_i^\star \})-2\mu\sum_{i=1}^{N} \bigg\< L(\mX_i^{(t)})-L_{\mR^{(t)}}(\mX_i^\star),\calP_{\text{T}_{L({\mX}_i)} \text{St}}\bigg(\nabla_{L({\mX}_{i})} g(\mX_1^{(t)}, \dots, \mX_N^{(t)})\bigg)\bigg\>\nonumber\\
    &\!\!\!\!\!\!\!\!&\hspace{-1.5cm}+\mu^2\bigg(\frac{1}{\ol{\sigma}^2(\calX^\star)}\sum_{i=1}^{N-1}\bigg\|\calP_{\text{T}_{L({\mX}_i)} \text{St}}\bigg(\nabla_{L({\mX}_{i})} g(\mX_1^{(t)}, \dots, \mX_N^{(t)})\bigg)\bigg\|_F^2+\bigg\|\nabla_{L({\mX}_{N})} g(\mX_1^{(t)}, \dots, \mX_N^{(t)}) \bigg\|_2^2\bigg).\nonumber\\
\end{eqnarray}
Note that the gradient is defined as
\begin{eqnarray}
    \label{The definition of gradient in the tensor sensing noiseless}
    \nabla_{L({\mX}_{i})} g(\mX_1^{(t)}, \dots, \mX_N^{(t)}) = \begin{bmatrix}\nabla_{\mX_{i}(1)} g(\mX_1^{(t)}, \dots, \mX_N^{(t)})\\ \vdots \\ \nabla_{\mX_{i}(d_i)} g(\mX_1^{(t)}, \dots, \mX_N^{(t)})  \end{bmatrix},
\end{eqnarray}
where the gradient with respect to each factor $\mX_{i}(s_i)$ can be obtained as
\begin{align*}
\nabla_{\mX_i(s_i)}g(\mX_1^{(t)}, \dots, \mX_N^{(t)})
=\frac{1}{m}\sum_{k=1}^{m} &(\<\calA_k,\calX^{(t)}\>-y_k) \sum_{s_1,\ldots,s_{i-1},s_{i+1},\ldots,s_N } \Big( \calA_k(s_1,\dots,s_N)\cdot \\
&
\mX_{i-1}^{(t)}(s_{i-1})^\top \cdots
     \mX_{1}^{(t)}(s_{1})^\top \mX_{N}^{(t)}(s_{N})^\top   \cdots  \mX_{i+1}^{(t)}(s_{i+1})^\top\Big).
\end{align*}

\paragraph{Upper bound of the third term in \eqref{Expansion of distance in the noiseless tensor sensing}} Using the RIP, we begin by quantifying the difference in the gradients of $g$ and $f$ through
\begin{eqnarray}
    \label{Riemannian GRADIENT DESCENT SQUARED TERM 1 to N noiseless sensing}
    &\!\!\!\!\!\!\!\!&\bigg\| \nabla_{L({\mX}_{i})} g(\mX_1^{(t)}, \dots, \mX_N^{(t)}) - \nabla_{L({\mX}_{i})} f(\mX_1^{(t)}, \dots, \mX_N^{(t)}) \bigg\|_F\nonumber\\
    &\!\!\!\!=\!\!\!\!&\max_{\mH_i\in\R^{r_{i-1}\times d_i\times r_i} \atop \|\mH_i\|_F\leq 1}\<\nabla_{L({\mX}_{i})} g(\mX_1^{(t)}, \dots, \mX_N^{(t)}) - \nabla_{L({\mX}_{i})} f(\mX_1^{(t)}, \dots, \mX_N^{(t)}), L(\mH_i)\>\nonumber\\  &\!\!\!\!=\!\!\!\!&\max_{\mH_i\in\R^{r_{i-1}\times d_i\times r_i} \atop \|\mH_i\|_F\leq 1}\<(\frac{1}{m}\calA^*\calA-\mathcal{I})(\calX^{(t)}-\calX^\star), [\mX_1^{(t)},\dots, \mH_i, \dots,\mX_N^{(t)} ]   \>\nonumber\\
    &\!\!\!\!\leq\!\!\!\!& \delta_{3\ol r}\|\calX^{(t)} - \calX^\star\|_F\|[\mX_1^{(t)},\dots, \mH_i, \dots,\mX_N^{(t)} ]\|_F\nonumber\\
    &\!\!\!\!\leq\!\!\!\!&\begin{cases}
    \frac{3\ol{\sigma}(\calX^\star)}{2}\delta_{3\ol r}\|\calX^{(t)}-\calX^\star\|_F, & i \in [N-1],\\
    \delta_{3\ol r}\|\calX^{(t)}-\calX^\star\|_F, & i = N,
  \end{cases}
\end{eqnarray}
where the first inequality  follows \eqref{RIP CONDITION FRO THE TENSOR TRAIN SENSING OTHER PROPERTY_1_1} and the second inequality uses \eqref{KRONECKER PRODUCT VECTORIZATION11 - 3} and $\|[\mX_1^{(t)},\dots, \mH_i, \dots, \\ \mX_N^{(t)} ]\|_F = \|(L(\mX_1^{(t)})\ol \otimes \cdots \ol \otimes L(\mH_i)){\calX^{(t)}}^{\geq i+1} \|_F \leq \|L(\mX_1^{(t)})\ol \otimes \cdots \ol \otimes L(\mH_i)\|_F\|{\calX^{(t)}}^{\geq i+1}\|$. This together with the upper bound for $\bigg\| \nabla_{L({\mX}_{i})} f(\mX_1^{(t)}, \dots, \mX_N^{(t)})  \bigg\|_F$ in \eqref{PROJECTED GRADIENT DESCENT SQUARED TERM 1 to N} gives
\begin{eqnarray}
    \label{Riemannian GRADIENT DESCENT SQUARED TERM 1 to N noiseless sensing 1}
    \bigg\| \nabla_{L({\mX}_{i})} g(\mX_1^{(t)}, \dots, \mX_N^{(t)})  \bigg\|_F
    \leq\begin{cases}
    \frac{3\ol{\sigma}(\calX^\star)}{2}(1+\delta_{3\ol r})\|\calX^{(t)}-\calX^\star\|_F, & i \in [N-1],\\
    (1+\delta_{3\ol r})\|\calX^{(t)}-\calX^\star\|_F, & i = N.
  \end{cases}
\end{eqnarray}

Plugging this into the third term in \eqref{Expansion of distance in the noiseless tensor sensing} and following the same analysis of \eqref{RIEMANNIAN FACTORIZATION SQUARED TERM UPPER BOUND}, we can obtain
\begin{eqnarray}
    \label{UPPER BOUND OF SQUARED TERM OF Riemannian GD IN SENSING Conclusion}
    &\!\!\!\!\!\!\!\!&\frac{1}{\ol{\sigma}^2(\calX^\star)}\sum_{i=1}^{N-1}\bigg\|\calP_{\text{T}_{L({\mX}_i)} \text{St}}\bigg(\nabla_{L({\mX}_{i})} g(\mX_1^{(t)}, \dots, \mX_N^{(t)})\bigg)\bigg\|_F^2+\bigg\|\nabla_{L({\mX}_{N})} g(\mX_1^{(t)}, \dots, \mX_N^{(t)}) \bigg\|_2^2\nonumber\\
    &\!\!\!\!\leq\!\!\!\!&\frac{1}{\ol{\sigma}^2(\calX^\star)}\sum_{i=1}^{N-1}\bigg\|\nabla_{L({\mX}_{i})} g(\mX_1^{(t)}, \dots, \mX_N^{(t)})\bigg\|_F^2+\bigg\|\nabla_{L({\mX}_{N})} g(\mX_1^{(t)}, \dots, \mX_N^{(t)}) \bigg\|_2^2\nonumber\\
    &\!\!\!\!\leq\!\!\!\!&\frac{9N-5}{2}(1+\delta_{3\ol r})^2\|\calX^{(t)}-\calX^\star\|_F^2.
\end{eqnarray}

\paragraph{Lower bound of the second term in \eqref{Expansion of distance in the noiseless tensor sensing}}
We first expand the second term of \eqref{Expansion of distance in the noiseless tensor sensing} as follows:
\begin{eqnarray}
    \label{Lower BOUND OF cross TERM OF Riemannian GD IN SENSING Conclusion original}
    &\!\!\!\!\!\!\!\!&\sum_{i=1}^{N} \bigg\< L(\mX_i^{(t)})-L_{\mR^{(t)}}(\mX_i^\star),\calP_{\text{T}_{L({\mX}_i)} \text{St}}\bigg(\nabla_{L({\mX}_{i})} g(\mX_1^{(t)}, \dots, \mX_N^{(t)})\bigg)\bigg\>\nonumber\\
    &\!\!\!\!=\!\!\!\!&\sum_{i=1}^{N} \bigg\< L(\mX_i^{(t)})-L_{\mR^{(t)}}(\mX_i^\star), \nabla_{L({\mX}_{i})} g(\mX_1^{(t)}, \dots, \mX_N^{(t)})\bigg\> - T_2\nonumber\\
    &\!\!\!\!=\!\!\!\!&\frac{1}{m}\sum_{k=1}^{m}\<\va_k,L(\mX_1^{(t)}) \ol \otimes \cdots \ol \otimes L(\mX_N^{(t)})-L_{\mR^{(t)}}(\mX_1^\star) \ol \otimes \cdots \ol \otimes L_{\mR^{(t)}}({\mX}_N^\star)\>\<\va_k, \vh^{(t)}\> \nonumber\\
    &\!\!\!\!\!\!\!\!& + \frac{1}{m}\|\calA(\calX^{(t)} - \calX^\star)\|_2^2 - T_2,
\end{eqnarray}
where $\va_k=\text{vec}(\calA_k)$ and  $T_2$ is defined as
\begin{eqnarray}
    \label{PROJECTION ORTHOGONAL IN RIEMANNIAN UPPER BOUND in the sensing formula}
    T_2=\sum_{i=1}^{N-1}\bigg\<\calP^{\perp}_{\text{T}_{L({\mX}_i)} \text{St}}(L(\mX_i^{(t)})-L_{\mR^{(t)}}(\mX_i^\star)), \nabla_{L({\mX}_{i})} g(\mX_1^{(t)}, \dots, \mX_N^{(t)}) \bigg\>.
\end{eqnarray}
$T_2$ can be upper bonded by
\begin{eqnarray}
    \label{PROJECTION ORTHOGONAL IN RIEMANNIAN UPPER BOUND in the sensing}
    T_2&\!\!\!\!\leq\!\!\!\!&\frac{1}{2}\sum_{i=1}^{N-1}\|L(\mX_i^{(t)})\|\|L(\mX_i^{(t)})-L_{\mR^{(t)}}({\mX}_i^\star)\|_F^2\bigg|\bigg|\nabla_{L({\mX}_{i})} g(\mX_1^{(t)}, \dots, \mX_N^{(t)})\bigg|\bigg|_F\nonumber\\
    &\!\!\!\!\leq\!\!\!\!&\frac{3\ol{\sigma}(\calX^\star)}{2}\sum_{i=1}^{N-1}\|L(\mX_i^{(t)})-L_{\mR^{(t)}}({\mX}_i^\star)\|_F^2\|\calX^{(t)}-\calX^\star\|_F\nonumber\\
    &\!\!\!\!\leq\!\!\!\!&\frac{1}{10}\|\calX^{(t)}-\calX^\star\|_F^2+\frac{45(N-1)\ol{\sigma}^2(\calX^\star)}{8}\sum_{i=1}^{N-1}\|L(\mX_i^{(t)})-L_{\mR^{(t)}}(\mX_i^\star)\|_F^4\nonumber\\
    &\!\!\!\!\leq\!\!\!\!&\frac{1}{10}\|\calX^{(t)}-\calX^\star\|_F^2+\frac{45(N-1)}{8\ol{\sigma}^2(\calX^\star)}\text{dist}^4(\{\mX_i^{(t)} \},\{ \mX_i^\star \}),
\end{eqnarray}
where the second inequality follows \eqref{Riemannian GRADIENT DESCENT SQUARED TERM 1 to N noiseless sensing 1} with $\delta_{3 \ol r }= 1$. We now plug this into \eqref{Lower BOUND OF cross TERM OF Riemannian GD IN SENSING Conclusion original} to get
\begin{eqnarray}
    \label{Lower BOUND OF cross TERM OF Riemannian GD IN SENSING Conclusion}
    &\!\!\!\!\!\!\!\!&\sum_{i=1}^{N} \bigg\< L(\mX_i^{(t)})-L_{\mR^{(t)}}(\mX_i^\star),\calP_{\text{T}_{L({\mX}_i)} \text{St}}\bigg(\nabla_{L({\mX}_{i})} g(\mX_1^{(t)}, \dots, \mX_N^{(t)})\bigg)\bigg\>\nonumber\\
    &\!\!\!\!\geq\!\!\!\!&(1-\delta_{2\ol r})\|\calX^{(t)}-\calX^\star\|_F^2+\<L(\mX_1^{(t)}) \ol \otimes \cdots \ol \otimes L(\mX_N^{(t)})-L_{\mR^{(t)}}(\mX_1^\star) \ol \otimes \cdots \ol \otimes L_{\mR^{(t)}}({\mX}_N^\star), \vh^{(t)}\>\nonumber\\
    &\!\!\!\!\!\!\!\!&-\delta_{(N+3)\ol r}\|\calX^{(t)}-\calX^\star\|_F\|\vh^{(t)}\|_2 -\frac{1}{10}\|\calX^{(t)}-\calX^\star\|_F^2-\frac{45(N-1)}{8\ol{\sigma}^2(\calX^\star)}\text{dist}^4(\{\mX_i^{(t)} \},\{ \mX_i^\star \})\nonumber\\
    &\!\!\!\!\geq\!\!\!\!&\bigg(\frac{9}{10}-\delta_{(N+3)\ol r}\bigg)\|\calX^{(t)}-\calX^\star\|_F^2-\frac{1+\delta_{(N+3)\ol r}}{2}(\|\calX^{(t)}-\calX^\star\|_F^2+\|\vh^{(t)}\|_2^2)\nonumber\\
    &\!\!\!\!\!\!\!\!&-\frac{45(N-1)}{8\ol{\sigma}^2(\calX^\star)}\text{dist}^4(\{\mX_i^{(t)} \},\{ \mX_i^\star \})\nonumber\\
    &\!\!\!\!\geq\!\!\!\!&\frac{4 - 15\delta_{(N+3)\ol r}}{20}\|\calX^{(t)}-\calX^\star\|_F^2-\frac{19}{30}\|\vh^{(t)}\|_2^2-\frac{45(N-1)}{8\ol{\sigma}^2(\calX^\star)}
    \text{dist}^4(\{\mX_i^{(t)} \},\{ \mX_i^\star \})\nonumber\\
    &\!\!\!\!\geq\!\!\!\!&\frac{(4-15\delta_{(N+3)\ol r})\underline{\sigma}^2(\calX^\star)}{640(N+1+\sum_{i=2}^{N-1}r_i)\ol{\sigma}^2(\calX^\star)}\text{dist}^2(\{\mX_i^{(t)} \},\{ \mX_i^\star \}) + \frac{4 - 15\delta_{(N+3)\ol r}}{40}\|\calX^{(t)}-\calX^\star\|_F^2,
\end{eqnarray}
where  we utilize \eqref{PROJECTION ORTHOGONAL IN RIEMANNIAN UPPER BOUND in the sensing}, \Cref{RIP condition fro the tensor train sensing Lemma}, and {Lemma} \ref{RIP CONDITION FRO THE TENSOR TRAIN SENSING OTHER PROPERTY} in the first inequality. Note that according to the definition of $\vh^{(t)}$ in \eqref{H_T IN THE CROSS TERM}, it has TT ranks at most $((N+1)r_1,\dots,(N+1)r_{N-1})$. Therefore, with the TT format $L(\mX_1^{(t)}) \ol \otimes \cdots \ol \otimes L(\mX_N^{(t)})-L_{\mR^{(t)}}(\mX_1^\star) \ol \otimes \cdots \ol \otimes L_{\mR^{(t)}}({\mX}_N^\star)$, which has TT ranks $(2r_1,\dots,2r_{N-1})$,  the measurement operator $\calA$ needs to satisfy the $(N+3)\ol r$-RIP which is assumed. The third inequality follows because $\delta_{2\ol r}\leq\delta_{(N+3)\ol r}\leq\frac{4}{15}$.
The last line utilizes {Lemma} \ref{LOWER BOUND OF TWO DISTANCES}, \eqref{H_T IN THE CROSS TERM UPPER BOUND}, and the initial condition $\text{dist}^2(\{\mX_i^{(0)} \},\{ \mX_i^\star \})\leq \frac{(4-15\delta_{(N+3)\ol r})\underline{\sigma}^2(\calX^\star)}{8(N+1+\sum_{i=2}^{N-1}r_i)(57N^2+393N-450)} $.

\paragraph{Contraction}

Taking \eqref{UPPER BOUND OF SQUARED TERM OF Riemannian GD IN SENSING Conclusion} and \eqref{Lower BOUND OF cross TERM OF Riemannian GD IN SENSING Conclusion} into \eqref{Expansion of distance in the noiseless tensor sensing}, we can get
\begin{eqnarray}
    \label{Conclusion of Riemannian gradient descent in the sensing}
    &\!\!\!\!\!\!\!\!&\text{dist}^2(\{\mX_i^{(t+1)} \},\{ \mX_i^\star \})\nonumber\\
    &\!\!\!\!\leq\!\!\!\!&\bigg(1-\frac{(4-15\delta_{(N+3)\ol r})\underline{\sigma}^2(\calX^\star)}{320(N+1+\sum_{i=2}^{N-1}r_i)\ol{\sigma}^2(\calX^\star)}\mu\bigg)\text{dist}^2(\{\mX_i^{(t)} \},\{ \mX_i^\star \})\nonumber\\
    &\!\!\!\!\!\!\!\!&+ \bigg(\frac{9N-5}{2}(1+\delta_{3\ol r})^2\mu^2 - \frac{4 - 15\delta_{(N+3)\ol r}}{20}\mu\bigg) \|\calX^{(t)}-\calX^\star\|_F^2\nonumber\\
    &\!\!\!\!\leq\!\!\!\!&\bigg(1-\frac{(4-15\delta_{(N+3)\ol r})\underline{\sigma}^2(\calX^\star)}{320(N+1+\sum_{i=2}^{N-1}r_i)\ol{\sigma}^2(\calX^\star)}\mu\bigg)\text{dist}^2(\{\mX_i^{(t)} \},\{ \mX_i^\star \}),
\end{eqnarray}
where we use $\mu\leq \frac{4 - 15\delta_{(N+3)\ol r}}{10(9N -5)(1 + \delta_{(N+3)\ol r})^2}$ in the last line.

\paragraph{Proof of \eqref{eq:ini-cond-for-XN}} This can be proved by using the same induction argument in \eqref{eq:ini-cond-for-XN-factorization} together with $\delta_{(N+3)\ol r}\leq \frac{4}{15}$. This completes the proof.

\end{proof}

\section{Proof of \Cref{TENSOR SENSING noisy SPECTRAL INITIALIZATION} for Noisy Spectral Initialization}
\label{Proof of in the noisy spectral initialization}

\begin{proof}

Recalling the definition of $ \|\cdot\|_{F,\ol r}$ in \eqref{Definition of the restricted F norm}, we follow the same approach in \eqref{LGNWTSSM6_2} to quantify $\|\calX^{(0)}-\calX^\star\|_F$:
\begin{eqnarray}
    \label{The expansion of noisy spectral ini 1}
    &\!\!\!\!\!\!\!\!&\|\calX^{(0)}-\calX^\star\|_F\nonumber\\
    &\!\!\!\!=\!\!\!\!&\bigg\|\text{SVD}_{\vr}^{tt}\bigg(\frac{1}{m}\sum_{k=1}^m(y_k + \epsilon_k)\calA_k\bigg)-\calX^\star\bigg\|_{F,2\ol r}\nonumber\\
    &\!\!\!\!\leq\!\!\!\!&(1+\sqrt{N-1})\bigg\|\frac{1}{m}\sum_{k=1}^m(y_k + \epsilon_k)\calA_k-\calX^\star\bigg\|_{F,2\ol r}\nonumber\\
    &\!\!\!\!\leq\!\!\!\!&(1+\sqrt{N-1})\bigg\|\frac{1}{m}\sum_{k=1}^my_k\calA_k-\calX^\star\bigg\|_{F,2\ol r} + (1+\sqrt{N-1})\bigg\|\frac{1}{m}\sum_{k=1}^m\epsilon_k\calA_k \bigg\|_{F,2\ol r}\nonumber\\
    &\!\!\!\!\leq\!\!\!\!& \delta_{3\ol r}(1+\sqrt{N-1})\|\calX^\star\|_{F} + (1+\sqrt{N-1})\bigg\|\frac{1}{m}\sum_{k=1}^m\epsilon_k\calA_k \bigg\|_{F,2\ol r}.
\end{eqnarray}

Next, we will use an $\epsilon$-net and a covering argument to bound the second term in the last line:
\begin{eqnarray}
    \label{epsilon net variable}
    \hspace{-0.5cm}\bigg\|\frac{1}{m}\sum_{k=1}^m\epsilon_k\calA_k \bigg\|_{F,2\ol r}= \!\!\!\!\max_{\calH\in\R^{d_1\times\cdots\times d_N}, \|\calH\|_F\leq 1, \atop {\rm rank}(\calH)=(2r_1,\dots,2r_{N-1})} \<\frac{1}{m}\sum_{k=1}^m\epsilon_k\calA_k,  \calH \>=\!\!\!\!\!\!\!\! \max_{\calH\in\R^{d_1\times\cdots\times d_N},  \|\calH\|_F\leq 1, \atop {\rm rank}(\calH)=(2r_1,\dots,2r_{N-1})} \frac{1}{m} \<\vepsilon,  \calA(\calH) \>.
\end{eqnarray}
To begin, according to \citep{zhang2018tensor}, for each $i \in [N-1]$, we can construct an $\epsilon$-net $\{L(\mH_i^{(1)}), \dots, L(\mH_i^{(n_i)})  \}$ with the covering number $n_i\leq (\frac{4+\epsilon}{\epsilon})^{d_ir_{i-1}r_i}$ for the set of factors $\{L(\mH_i)\in\R^{d_ir_{i-1}\times r_i}: \|L(\mH_i)\|\leq 1\}$ such that
\begin{eqnarray}
    \label{ProofOf<H,X>forSubGaussian_proof1}
    \sup_{L(\mH_i): \|L(\mH_i)\|\leq 1}\min_{p_i\leq n_i} \|L(\mH_i)-L(\mH_i^{(p_i)})\|\leq \epsilon.
\end{eqnarray}
Similarly, we can construct an $\epsilon$-net $\{ L(\mH_N^{(1)}), \dots, L(\mH_N^{(n_N)}) \}$ with the covering number $n_N\leq (\frac{2+\epsilon}{\epsilon})^{d_Nr_{N-1}}$ for $\{L(\mH_N)\in\R^{d_Nr_{N-1}\times 1}: \|L(\mH_N)\|_2\leq 1  \}$ such that
\begin{eqnarray}
    \label{ProofOf<H,X>forSubGaussian_proof2}
    \sup_{L(\mH_N): \|L(\mH_N)\|_2\leq 1}\min_{p_N\leq n_N} \|L(\mH_N)-L(\mH_N^{(p_N)})\|_2\leq \epsilon.
\end{eqnarray}
Therefore, we can construct an $\epsilon$-net $\{\calH^{(1)},\ldots,\calH^{(n_1\cdots n_N)}\}$ with covering number
\[
\Pi_{i=1}^N n_i \leq (\frac{4+\epsilon}{\epsilon})^{d_1r_1+\sum_{i=2}^{N-1}d_ir_{i-1}r_i+d_Nr_{N-1}} \leq (\frac{4+\epsilon}{\epsilon})^{N\ol d\ol r^2 }
\]
(where $\ol r=\max_{i=1}^{N-1}r_i$ and $\ol d=\max_{i=1}^{N}d_i$) for any TT format tensors $\calH = [\mH_1,\dots, \mH_N]\in\R^{d_1\times \cdots \times d_N}$ with TT ranks $(r_1,\dots, r_{N-1})$.

Denote by $T$ the value of \eqref{epsilon net variable}, i.e.,
\begin{eqnarray}
    \label{ProofOf<H,X>forSubGaussian_proof3}
    &&[\widetilde\mH_1,\dots, \widetilde\mH_N]=  \argmax_{\mbox{\tiny$\begin{array}{c}
     L(\mH_i)\in\R^{2d_ir_{i-1}\times 2r_i}\\
     \|L(\mH_i)\|\leq 1, i \in [N-1]\\
     \|L(\mH_N)\|_2\leq 1 \end{array}$}}\frac{1}{m}\sum_{k=1}^m\<\epsilon_k\calA_k, [\mH_1,\dots, \mH_N]  \>,\\
    \label{ProofOf<H,X>forSubGaussian_proof4}
    &&T:= \frac{1}{m}\sum_{k=1}^m\<\epsilon_k\calA_k, [\widetilde\mH_1,\dots, \widetilde\mH_N]  \>.
\end{eqnarray}
Using $\calI$ to denote the index set $[n_1]\times \cdots \times [n_N]$, then according to the construction of the $\epsilon$-net, there exists $p=(p_1,\dots, p_N)\in\calI$ such that
\begin{eqnarray}
    \label{ProofOf<H,X>forSubGaussian_proof5}
    \|L(\widetilde\mH_i) - L(\mH_i^{(p_i)}) \|\leq\epsilon, \ \  i\in [N-1] \ \ \  \text{and}  \ \ \  \|L(\widetilde\mH_N) - L(\mH_N^{(p_N)})\|_2\leq\epsilon.
\end{eqnarray}
Now taking $\epsilon=\frac{1}{2N}$ gives
\begin{eqnarray}
    \label{ProofOf<H,X>forSubGaussian_proof6}
    \hspace{-0.3cm}T&\!\!\!\!=\!\!\!\!&\frac{1}{m}\sum_{k=1}^m\<\epsilon_k\calA_k, [\mH_1^{(p_1)},\dots, \mH_N^{(p_N)}]  \>+ \frac{1}{m}\sum_{k=1}^m\<\epsilon_k\calA_k, [\widetilde\mH_1,\dots, \widetilde\mH_N] - [\mH_1^{(p_1)},\dots, \mH_N^{(p_N)}]  \>\nonumber\\
    \hspace{-0.3cm}&\!\!\!\!=\!\!\!\!&\frac{1}{m}\sum_{k=1}^m\<\epsilon_k\calA_k, [\mH_1^{(p_1)},\dots, \mH_N^{(p_N)}]  \>+ \frac{1}{m}\sum_{k=1}^m\<\epsilon_k\calA_k, \sum_{a_1=1}^N[\mH_1^{(p_1)},\dots, \mH_{a_1}^{(p_{a_1})}-\widetilde\mH_{a_1},  \dots, \widetilde\mH_N]\>\nonumber\\
    \hspace{-0.3cm}&\!\!\!\!\leq\!\!\!\!&\frac{1}{m}\sum_{k=1}^m\<\epsilon_k\calA_k, [\mH_1^{(p_1)},\dots, \mH_N^{(p_N)}]  \> + N \epsilon T\nonumber\\
    \hspace{-0.3cm}&\!\!\!\!=\!\!\!\!&\frac{1}{m}\sum_{k=1}^m\<\epsilon_k\calA_k, [\mH_1^{(p_1)},\dots, \mH_N^{(p_N)}]  \>+\frac{T}{2},
\end{eqnarray}
where the second line uses {Lemma} \ref{EXPANSION_A1TOAN-B1TOBN_1} to rewrite $[\widetilde\mH_1,\dots, \widetilde\mH_N] - [\mH_1^{(p_1)},\dots, \mH_N^{(p_N)}]$ into a sum of $N$ terms.

Note that when conditioned on $\{\calA_k\}_{k=1}^m$, for any fixed $\calH^{(p)}\in\R^{d_1\times\cdots\times d_N}$, $\frac{1}{m}\<\vepsilon,  \calA(\calH^{(p)}) \>$ has a normal distribution with zero mean and variance $\frac{\gamma^2\|\calA(\calH^{(p)})\|_2^2}{m^2}$, which implies that
\begin{eqnarray}
    \label{the tail function of fixed gaussian random variable}
    \P{\frac{1}{m}|\<\vepsilon,  \calA(\calH^{(p)}) \>| \geq t | \{\calA_k\}_{k=1}^m}\leq e^{-\frac{m^2t^2}{2\gamma^2\|\calA(\calH^{(p)})\|_2^2}}.
\end{eqnarray}
Furthermore, under the event $F:=\{\calA \text{ satisfies $2\ol r$-RIP with constant $\delta_{2\ol r}$}\}$, which implies that  $\frac{1}{m}\|\calA(\calH^{(p)})\|_2^2\leq(1+\delta_{2\ol r})\|\calH^{(p)}\|_F^2$, plugging this together with the fact $\|\calH^{(p)}\|_F\leq 1$ into the above further gives
\begin{eqnarray}
    \label{the tail function of fixed gaussian random variable1}
    \P{\frac{1}{m}|\<\vepsilon,  \calA(\calH^{(p)}) \>| \geq t | F}\leq e^{-\frac{mt^2}{2(1+\delta_{2\ol r})\gamma^2}}.
\end{eqnarray}

We now apply this tail bound to \eqref{ProofOf<H,X>forSubGaussian_proof6} and get
\begin{eqnarray}
    \label{the tail function of fixed gaussian random variable 2}
    \P{T \geq t | F} &\!\!\!\!\leq\!\!\!\!& \P{\max_{p_1,\dots, p_n} \frac{1}{m}\sum_{k=1}^m\<\epsilon_k\calA_k, [\mH_1^{(p_1)},\dots, \mH_N^{(p_N)}]  \> \geq \frac{t}{2} | F}\nonumber\\
    &\!\!\!\!\leq\!\!\!\!& \bigg(\frac{4+\epsilon}{\epsilon}\bigg)^{4N\ol d\ol r^2 }e^{-\frac{mt^2}{8(1+\delta_{2\ol r})\gamma^2}}\leq e^{-\frac{mt^2}{8(1+\delta_{2\ol r})\gamma^2} + c_1N\ol d\ol r^2 \log N},
\end{eqnarray}
where $c_1$ is a constant and based on the assumption in \eqref{ProofOf<H,X>forSubGaussian_proof6}, $\frac{4+\epsilon}{\epsilon}=\frac{4+\frac{1}{2N}}{\frac{1}{2N}}=8N+1$.

Hence, we can take $t = \frac{c_2\ol r\sqrt{(1+\delta_{2\ol r})N\ol d(\log N)}}{\sqrt{m}}\gamma$ with a constant $c_2$ and further derive
\begin{eqnarray}
    \label{the tail function of fixed gaussian random variable 3}
    &\!\!\!\!\!\!\!\!&\P{T \leq \frac{c_2\ol r\sqrt{(1+\delta_{2\ol r})N\ol d(\log N)}}{\sqrt{m}}\gamma } \nonumber\\
    &\!\!\!\!\geq\!\!\!\!& \P{T \leq \frac{c_2\ol r\sqrt{(1+\delta_{2\ol r})N\ol d(\log N)}}{\sqrt{m}}\gamma \cap F } \nonumber\\
    &\!\!\!\!\geq\!\!\!\!&P(F) \P{T \leq \frac{c_2\ol r\sqrt{(1+\delta_{2\ol r})N\ol d(\log N)}}{\sqrt{m}}\gamma |F }\nonumber\\
    &\!\!\!\!\geq\!\!\!\!&(1-e^{-c_3N\ol d\ol r^2 \log N})(1-e^{-c_4N\ol d\ol r^2 \log N})\geq 1-2e^{-c_5N\ol d\ol r^2 \log N},
\end{eqnarray}
where $c_i,i=3,4,5$ are constants. Note that $P(F)$ is obtained via \Cref{RIP condition fro the tensor train sensing Lemma} by setting $\epsilon$ in \eqref{eq:mrip} to be $e^{-c_3N\ol d\ol r^2 \log N}$.

Combing \eqref{The expansion of noisy spectral ini 1} and \eqref{the tail function of fixed gaussian random variable 3}, we finally obtain that with probability $1-2e^{-c_5N\ol d\ol r^2 \log N}$,
\begin{eqnarray}
    \label{Noisy TENSOR SENSING SPECTRAL INITIALIZATION1 conclusion}
    \|\calX^{(0)}-\calX^\star\|_F\leq (1+\sqrt{N-1})\bigg(\delta_{3\ol r}\|\calX^\star\|_F + \frac{c_2\ol r\sqrt{(1+\delta_{3\ol r})N\ol d\log N} }{\sqrt{m}}\gamma\bigg),
\end{eqnarray}
where $\delta_{2\ol r}\leq \delta_{3\ol r}$ is used.

\end{proof}

\section{Proof of \Cref{Local Convergence of Riemannian in the noisy sensing_Theorem} for Noisy TT Format Tensor Sensing}
\label{Local Convergence Proof of Riemannman gradient descent Noisy Tensor Sensing}

\begin{proof}
By the same analysis in the beginning of {Appendix} \ref{Local Convergence Proof of Riemannman gradient descent Tensor Sensing}, we can get that $L(\mX_i^{(t)}), i \in [N-1]$ are orthonormal matrices and $\sigma_1^2({\calX^{(t)}}^{\<i \>}) =   \|{\calX^{(t)}}^{\geq i+1} \|^2\leq \frac{9\ol{\sigma}^2(\calX^\star)}{4}, i\in[N-1]$, $t\geq 0$ by assuming
\begin{eqnarray}
    \label{required condition for LN condition}
    \text{dist}^2(\{\mX_i^{(t)} \},\{ \mX_i^\star \})\leq\frac{\underline{\sigma}^2(\calX^\star)}{180N},
\end{eqnarray}
which will be proved later. Now recall \eqref{expansion of distance in tensor factorization-main paper}:
\begin{eqnarray}
    \label{Expansion of distance in the noisy tensor sensing}
    &\!\!\!\!\!\!\!\!&\hspace{-1.1cm}\text{dist}^2(\{\mX_i^{(t+1)} \},\{ \mX_i^\star \})\nonumber\\
    &\!\!\!\!\!\!\!\!&\hspace{-1.5cm}\leq\text{dist}^2(\{\mX_i^{(t)} \},\{ \mX_i^\star \})-2\mu\sum_{i=1}^{N} \bigg\< L(\mX_i^{(t)})-L_{\mR^{(t)}}(\mX_i^\star),\calP_{\text{T}_{L({\mX}_i)} \text{St}}\bigg(\nabla_{L({\mX}_{i})} G(\mX_1^{(t)}, \dots, \mX_N^{(t)})\bigg)\bigg\>\nonumber\\
    &\!\!\!\!\!\!\!\!&\hspace{-1.5cm}+\mu^2\bigg(\frac{1}{\ol{\sigma}^2(\calX^\star)}\sum_{i=1}^{N-1}\bigg\|\calP_{\text{T}_{L({\mX}_i)} \text{St}}\bigg(\nabla_{L({\mX}_{i})} G(\mX_1^{(t)}, \dots, \mX_N^{(t)})\bigg)\bigg\|_F^2+\bigg\|\nabla_{L({\mX}_{N})} G(\mX_1^{(t)}, \dots, \mX_N^{(t)}) \bigg\|_2^2\bigg),\nonumber\\
\end{eqnarray}
where the gradient with respect to each factor $\mX_{i}(s_i)$ can be computed as
\begin{align*}
\nabla_{\mX_i(s_i)}G(\mX_1^{(t)}, \dots, \mX_N^{(t)})
=\frac{1}{m}\sum_{k=1}^{m} &(\<\calA_k,\calX^{(t)}\>-y_k - \epsilon_k) \sum_{s_1,\ldots,s_{i-1},s_{i+1},\ldots,s_N } \Big( \calA_k(s_1,\dots,s_N)\cdot \\
&
\mX_{i-1}^{(t)}(s_{i-1})^\top \cdots
     \mX_{1}^{(t)}(s_{1})^\top \mX_{N}^{(t)}(s_{N})^\top   \cdots  \mX_{i+1}^{(t)}(s_{i+1})^\top\Big).
\end{align*}

\paragraph{Upper bound of the third term in \eqref{Expansion of distance in the noisy tensor sensing}}
To upper bound $\|\nabla_{L({\mX}_{i})} G(\mX_1^{(t)},\dots,\mX_N^{(t)})\|_F$, we first analyze the difference in the gradient caused by noise, using the same analysis in \eqref{the tail function of fixed gaussian random variable 3}. Specifically, with the same $\epsilon$-net argument in \eqref{epsilon net variable},
\begin{equation}
\begin{split}
&\left\| \nabla_{L({\mX}_{i})} g(\mX_1^{(t)}, \dots, \mX_N^{(t)}) - \nabla_{L({\mX}_{i})} G(\mX_1^{(t)}, \dots, \mX_N^{(t)}) \right\|_F\\
= & \max_{\mH_i\in\R^{r_{i-1}\times d_i\times r_i} \atop \|\mH_i\|_F\leq 1}\
 \left< \nabla_{L({\mX}_{i})} g(\mX_1^{(t)}, \dots, \mX_N^{(t)}) - \nabla_{L({\mX}_{i})} G(\mX_1^{(t)},\dots,\mX_N^{(t)}), L(\mH_i) \right\> \\
= & \max_{\mH_i\in\R^{r_{i-1}\times d_i\times r_i} \atop\|\mH_i\|_F\leq 1}\bigg\<\frac{1}{m}\sum_{k=1}^{m}\epsilon_k\calA_k, [\mX_1^{(t)},\dots, \mX_{i-1}^{(t)}, \mH_i, \mX_{i+1}^{(t)}, \dots,\mX_N^{(t)} ]   \bigg\> \nonumber\\
 \le &  \begin{cases} \frac{c_i\ol r\sqrt{(1+\delta_{3\ol r})N\ol d(\log N) }\gamma \ol{\sigma}(\calX^\star)}{\sqrt{m}}, & i = 1,\dots, N-1, \\ \frac{c_N\ol r\sqrt{(1+\delta_{3\ol r})N\ol d(\log N) }\gamma}{\sqrt{m}}, & i = N, \end{cases}
\end{split}
\end{equation}
where the last inequality holds with probability at least $1 - 2Ne^{-\Omega(N\ol d\ol r^2 \log N)}$ with $c_i,i\in [N]$ being positive constants, and is derived by using \eqref{KRONECKER PRODUCT VECTORIZATION11 - 3} that $\|[\mX_1^{(t)},\dots, \mH_i, \dots,\mX_N^{(t)} ]\|_F = \|(L(\mX_1^{(t)})\ol \otimes \\ \cdots \ol \otimes L(\mH_i)){\calX^{(t)}}^{\geq i+1} \|_F \leq \|L(\mX_1^{(t)})\ol \otimes \cdots \ol \otimes L(\mH_i)\|_F\|{\calX^{(t)}}^{\geq i+1}\|\leq \frac{3\ol{\sigma}(\calX^\star)}{2}, i\in[N-1]$ and $\|[\mX_1^{(t)},\dots, \mX_{N-1}^{(t)}, \mH_N ]\|_F=\|\mH_N\|_F\leq 1$.

This together with the bound for $\big\| \nabla_{L({\mX}_{i})} g(\mX_1^{(t)}, \dots, \mX_N^{(t)})  \big\|_F$ in \eqref{Riemannian GRADIENT DESCENT SQUARED TERM 1 to N noiseless sensing 1} gives
\begin{eqnarray}
\label{upper bound of error term in the squared noisy}
&\!\!\!\!\!\!\!\!&\bigg\| \nabla_{L({\mX}_{i})} G(\mX_1^{(t)}, \dots, \mX_N^{(t)})  \bigg\|_F\nonumber\\
&\!\!\!\!\leq\!\!\!\!& \bigg\| \nabla_{L({\mX}_{i})} g(\mX_1^{(t)}, \dots, \mX_N^{(t)})  \bigg\|_F + \left\| \nabla_{L({\mX}_{i})} g(\mX_1^{(t)}, \dots, \mX_N^{(t)}) - \nabla_{L({\mX}_{i})} G(\mX_1^{(t)}, \dots, \mX_N^{(t)}) \right\|_F\nonumber\\
&\!\!\!\!\leq\!\!\!\!& \begin{cases}
    \frac{3\ol{\sigma}(\calX^\star)}{2}(1+\delta_{3\ol r})\|\calX^{(t)}-\calX^\star\|_F + \frac{c_i\ol r\sqrt{(1+\delta_{3\ol r})N\ol d(\log N) }\gamma \ol{\sigma}(\calX^\star)}{\sqrt{m}}, & i = 1,\dots, N-1,\\
    (1+\delta_{3\ol r})\|\calX^{(t)}-\calX^\star\|_F + \frac{c_N\ol r\sqrt{(1+\delta_{3\ol r})N\ol d(\log N) }\gamma}{\sqrt{m}}, & i = N.
  \end{cases}
\end{eqnarray}

We now plug the above into the third term in  \eqref{the tail function of fixed gaussian random variable 3} to get
\begin{eqnarray}
    \label{UPPER BOUND OF SQUARED TERM OF Riemannian GD IN noisy SENSING Conclusion}
    &\!\!\!\!\!\!\!\!&\frac{1}{\ol{\sigma}^2(\calX^\star)}\sum_{i=1}^{N-1}\bigg\|\calP_{\text{T}_{L({\mX}_i)} \text{St}}\bigg(\nabla_{L({\mX}_{i})} G(\mX_1^{(t)}, \dots, \mX_N^{(t)})\bigg)\bigg\|_F^2+\bigg\|\nabla_{L({\mX}_{N})} G(\mX_1^{(t)}, \dots, \mX_N^{(t)}) \bigg\|_2^2\nonumber\\
    &\!\!\!\!\leq\!\!\!\!&\frac{1}{\ol{\sigma}^2(\calX^\star)}\sum_{i=1}^{N-1}\bigg\|\nabla_{L({\mX}_{i})} G(\mX_1^{(t)}, \dots, \mX_N^{(t)})\bigg\|_F^2+\bigg\|\nabla_{L({\mX}_{N})} G(\mX_1^{(t)}, \dots, \mX_N^{(t)}) \bigg\|_2^2\nonumber\\
    &\!\!\!\!\leq\!\!\!\!&\frac{9N-5}{2}(1+\delta_{3\ol r})^2\|\calX^{(t)} - \calX^\star\|_F^2 + O\bigg(\frac{(1+\delta_{3\ol r})N^2\ol d\ol r^2(\log N) \gamma^2 }{m}\bigg).
\end{eqnarray}

\paragraph{Lower bound of the second term in \eqref{Expansion of distance in the noisy tensor sensing}}
To  apply the same approach as in \eqref{Lower BOUND OF cross TERM OF Riemannian GD IN SENSING Conclusion} for establishing a lower bound for the second term in \eqref{Expansion of distance in the noisy tensor sensing}, we first need to establish upper bounds for two terms involving noise. To begin, following the derivation of \eqref{PROJECTION ORTHOGONAL IN RIEMANNIAN UPPER BOUND in the sensing},  we can get
   \begin{eqnarray}
    \label{PROJECTION ORTHOGONAL IN RIEMANNIAN UPPER BOUND in the noisy sensing}
    &\!\!\!\!\!\!\!\!&\sum_{i=1}^{N-1}\bigg\<\calP^{\perp}_{\text{T}_{L({\mX}_i)} \text{St}}(L(\mX_i^{(t)})-L_{\mR^{(t)}}(\mX_i^\star)), \nabla_{L({\mX}_{i})} G(\mX_1^{(t)}, \dots, \mX_N^{(t)}) \bigg\>\nonumber\\
    &\!\!\!\!\leq\!\!\!\!&\frac{1}{2}\sum_{i=1}^{N-1}\|L(\mX_i^{(t)})\|\|L(\mX_i^{(t)})-L_{\mR^{(t)}}({\mX}_i^\star)\|_F^2\bigg|\bigg|\nabla_{L({\mX}_{i})} G(\mX_1^{(t)}, \dots, \mX_N^{(t)})\bigg|\bigg|_F\nonumber\\
    &\!\!\!\!\leq\!\!\!\!&\frac{3\ol{\sigma}(\calX^\star)}{2}\sum_{i=1}^{N-1}\|L(\mX_i^{(t)})-L_{\mR^{(t)}}({\mX}_i^\star)\|_F^2\|\calX^{(t)}-\calX^\star\|_F\nonumber\\
    &&\!\!\!\!\!\!\!\!+ \sum_{i=1}^{N-1}\frac{c_i\ol r\sqrt{(1+\delta_{3\ol r})N\ol d (\log N) }\gamma \ol{\sigma}(\calX^\star)}{\sqrt{m}}\|L(\mX_i^{(t)})-L_{\mR^{(t)}}({\mX}_i^\star)\|_F^2 \nonumber\\
    &\!\!\!\!\leq\!\!\!\!&\frac{1}{20}\|\calX^{(t)}-\calX^\star\|_F^2+46(N-1)\ol{\sigma}^2(\calX^\star)\sum_{i=1}^{N-1}\|L(\mX_i^{(t)})-L_{\mR^{(t)}}(\mX_i^\star)\|_F^4\nonumber\\
    &\!\!\!\!\!\!\!\!&+ \sum_{i=1}^{N-1} \frac{c_i^2(1+\delta_{3\ol r}) N\ol d\ol r^2(\log N)\gamma^2 }{16m} \nonumber\\
    &\!\!\!\!\leq\!\!\!\!&\frac{1}{20}\|\calX^{(t)}-\calX^\star\|_F^2+\frac{46(N-1)}{\ol{\sigma}^2(\calX^\star)}\text{dist}^4(\{\mX_i^{(t)} \},\{ \mX_i^\star \}) + O\bigg(\frac{(1+\delta_{3\ol r}) N^2\ol d\ol r^2(\log N)\gamma^2 }{m}\bigg),\nonumber\\
    \end{eqnarray}
where the second inequality uses \eqref{upper bound of error term in the squared noisy}. In addition, recalling the notations of $\va_k=\text{vec}(\calA_k)$ and $\vh^{(t)}$ defined in \eqref{H_T IN THE CROSS TERM}, then with probability $1 - 2e^{-\Omega(N^3\ol d\ol r^2 \log N)}$, we have
    \begin{eqnarray}
    \label{PROJECTION ORTHOGONAL IN RIEMANNIAN UPPER BOUND in the noisy sensing 1}
    &\!\!\!\!\!\!\!\!&\frac{1}{m}\sum_{k=1}^{m}\<\epsilon_k\va_k,  L(\mX_1^{(t)}) \ol \otimes \cdots \ol \otimes L(\mX_N^{(t)})-L_{\mR^{(t)}}(\mX_1^\star) \ol \otimes \cdots \ol \otimes L_{\mR^{(t)}}({\mX}_N^\star) + \vh^{(t)}\>\nonumber\\
    &\!\!\!\!\leq\!\!\!\!& \frac{C (N+3)\ol r \sqrt{(1+\delta_{(N+3)\ol r})N\ol d(\log N)}\gamma}{\sqrt{m}} \|L(\mX_1^{(t)}) \ol \otimes \cdots \ol \otimes L(\mX_N^{(t)})\nonumber\\
    &\!\!\!\!\!\!\!\!&-L_{\mR^{(t)}}(\mX_1^\star) \ol \otimes \cdots \ol \otimes L_{\mR^{(t)}}({\mX}_N^\star) + \vh^{(t)}\|_F\nonumber\\
    &\!\!\!\!\leq\!\!\!\!& \frac{5C^2(1+\delta_{(N+3)\ol r})N(N+3)^2\ol d\ol r^2(\log N) \gamma^2}{m} + \frac{1}{10}\|\calX^{(t)} - \calX^\star\|_F^2 + \frac{1}{10}\|\vh^{(t)}\|_F^2\nonumber\\
    &\!\!\!\!\leq\!\!\!\!& \frac{5C^2(1+\delta_{(N+3)\ol r})N(N+3)^2\ol d\ol r^2(\log N) \gamma^2}{m} + \frac{1}{10}\|\calX^{(t)} - \calX^\star\|_F^2\nonumber\\
    &\!\!\!\!\!\!\!\!&+ \frac{9N(N-1)}{80\ol{\sigma}^2(\calX^\star)}\text{dist}^4(\{\mX_i^{(t)} \},\{ \mX_i^\star \}),
    \end{eqnarray}
where the first inequality follows the same $\epsilon$-net argument used in \eqref{epsilon net variable} and the fact that the TT ranks of the second term in the cross term is $((N+3)r_1,\dots, (N+3)r_{N-1})$, and the last inequality uses \eqref{H_T IN THE CROSS TERM UPPER BOUND}.

Using \eqref{PROJECTION ORTHOGONAL IN RIEMANNIAN UPPER BOUND in the noisy sensing}, we can proceed with the analysis similar to \eqref{Lower BOUND OF cross TERM OF Riemannian GD IN SENSING Conclusion} to obtain the following derivation
\begin{eqnarray}
   \label{Lower BOUND OF cross TERM OF Riemannian GD IN noisy SENSING Conclusion}
    &\!\!\!\!\!\!\!\!&\sum_{i=1}^{N} \bigg\< L(\mX_i^{(t)})-L_{\mR^{(t)}}(\mX_i^\star),\calP_{\text{T}_{L({\mX}_i)} \text{St}}\bigg(\nabla_{L({\mX}_{i})} G(\mX_1^{(t)}, \dots, \mX_N^{(t)})\bigg)\bigg\>\nonumber\\
    &\!\!\!\!\geq\!\!\!\!&(\frac{9}{20}-\frac{3\delta_{(N+3)\ol r}}{2})\|\calX^{(t)}-\calX^\star\|_F^2- \frac{1+\delta_{(N+3)\ol r}}{2}\|\vh^{(t)}\|_F^2-\frac{46(N-1)}{\ol{\sigma}^2(\calX^\star)}\text{dist}^4(\{\mX_i^{(t)} \},\{ \mX_i^\star \})\nonumber\\
    &\!\!\!\!\!\!\!\!&-\frac{1}{m}\sum_{k=1}^{m}\<\epsilon_k\va_k,  L(\mX_1^{(t)}) \ol \otimes \cdots \ol \otimes L(\mX_N^{(t)})-L_{\mR^{(t)}}(\mX_1^\star) \ol \otimes \cdots \ol \otimes L_{\mR^{(t)}}({\mX}_N^\star) + \vh^{(t)}\>\nonumber\\
    &\!\!\!\!\!\!\!\!&- O\bigg(\frac{(1+\delta_{3\ol r}) N^2\ol d\ol r^2(\log N)\gamma^2 }{m}\bigg)\nonumber\\
    &\!\!\!\!\geq\!\!\!\!&\frac{7 - 30\delta_{(N+3)\ol r}}{40}\|\calX^{(t)}-\calX^\star\|_F^2-\frac{129N^2+7231N-7360}{160\ol{\sigma}^2(\calX^\star)}\text{dist}^4(\{\mX_i^{(t)} \},\{ \mX_i^\star \})\nonumber\\
    &\!\!\!\!\!\!\!\!& - O\bigg(\frac{(1+\delta_{(N+3)\ol r}) N^3\ol d\ol r^2(\log N)\gamma^2 }{m}\bigg)\nonumber\\
    &\!\!\!\!\geq\!\!\!\!& \frac{(7 - 30\delta_{(N+3)\ol r})\underline{\sigma}^2(\calX^\star)}{1280(N+1+\sum_{i=2}^{N-1}r_i)\ol{\sigma}^2(\calX^\star)}\text{dist}^2(\{\mX_i^{(t)} \},\{ \mX_i^\star \}) + \frac{7 - 30\delta_{(N+3)\ol r}}{80}\|\calX^{(t)}-\calX^\star\|_F^2\nonumber\\
    &\!\!\!\!\!\!\!\!&- O\bigg(\frac{(1+\delta_{(N+3)\ol r}) N^3\ol d\ol r^2(\log N)\gamma^2 }{m}\bigg),
\end{eqnarray}
where we use $\delta_{(N+3)\ol r}\leq\frac{7}{30}$, \eqref{PROJECTION ORTHOGONAL IN RIEMANNIAN UPPER BOUND in the noisy sensing 1} and \eqref{H_T IN THE CROSS TERM UPPER BOUND} in the second inequality, and the last line follows {Lemma} \ref{LOWER BOUND OF TWO DISTANCES} and the initial condition $\text{dist}^2(\{\mX_i^{(0)} \},\{ \mX_i^\star \})\leq \frac{(7 - 30\delta_{(N+3)\ol r})\underline{\sigma}^2(\calX^\star)}{8(N+1+\sum_{i=2}^{N-1}r_i)(129N^2+7231N-7360)} $.

\paragraph{Contraction}

Taking \eqref{UPPER BOUND OF SQUARED TERM OF Riemannian GD IN noisy SENSING Conclusion} and \eqref{Lower BOUND OF cross TERM OF Riemannian GD IN noisy SENSING Conclusion} into \eqref{Expansion of distance in the noisy tensor sensing}, with probability $1 - 2Ne^{-\Omega(N\ol d\ol r^2 \log N)}  - 2e^{-\Omega(N^3\ol d\ol r^2 \log N)}$, we can get
\begin{eqnarray}
    \label{Conclusion of Riemannian gradient descent in the noisy sensing}
    \text{dist}^2(\{\mX_i^{(t+1)} \},\{ \mX_i^\star \})&\!\!\!\!\leq\!\!\!\!&\bigg(1-\frac{(7 - 30\delta_{(N+3)\ol r})\underline{\sigma}^2(\calX^\star)}{1280(N+1+\sum_{i=2}^{N-1}r_i)\ol{\sigma}^2(\calX^\star)}\mu\bigg)\text{dist}^2(\{\mX_i^{(t)} \},\{ \mX_i^\star \})\nonumber\\
    &\!\!\!\!\!\!\!\!&+\bigg(\frac{9N-5}{2}(1+\delta_{3\ol r})^2\mu^2  -\frac{7 - 30\delta_{(N+3)\ol r}}{40}\mu \bigg)\|\calX^{(t)}-\calX^\star\|_F^2\nonumber\\
    &\!\!\!\!\!\!\!\!&+ O\bigg(\frac{(1+\delta_{(N+3)\ol r})N^2\ol d\ol r^2(\log N) \gamma^2}{m}(\mu N + \mu^2)\bigg)\nonumber\\
    &\!\!\!\!\leq\!\!\!\!&\bigg(1-\frac{(7 - 30\delta_{(N+3)\ol r})\underline{\sigma}^2(\calX^\star)}{1280(N+1+\sum_{i=2}^{N-1}r_i)\ol{\sigma}^2(\calX^\star)}\mu\bigg)\text{dist}^2(\{\mX_i^{(t)} \},\{ \mX_i^\star \})\nonumber\\
    &\!\!\!\!\!\!\!\!&+ O\bigg(\frac{(1+\delta_{(N+3)\ol r})N^2\ol d\ol r^2(\log N) \gamma^2}{m}(\mu N + \mu^2)\bigg),
\end{eqnarray}
where we use $\mu\leq\frac{7 - 30\delta_{(N+3)\ol r}}{20(9N-5)(1+\delta_{(N+3)\ol r})^2}$ in the last line. By induction, this further implies that
\begin{eqnarray}
    \label{Simiplified Conclusion of Riemannian gradient descent in the noisy sensing}
    &\!\!\!\!\!\!\!\!&\text{dist}^2(\{\mX_i^{(t+1)} \},\{ \mX_i^\star \})\nonumber\\
    &\!\!\!\!\leq\!\!\!\!&\bigg(1-\frac{(7 - 30\delta_{(N+3)\ol r})\underline{\sigma}^2(\calX^\star)}{1280(N+1+\sum_{i=2}^{N-1}r_i)\ol{\sigma}^2(\calX^\star)}\mu\bigg)^{t+1}
    \text{dist}^2(\{\mX_i^{(0)} \},\{ \mX_i^\star \})\nonumber\\
    &\!\!\!\!\!\!\!\!&+  O\bigg(\frac{(N + \mu)(N+1+\sum_{i=2}^{N-1}r_i)(1+\delta_{(N+3)\ol r}) N^2\ol d\ol r^2(\log N) \ol{\sigma}^2(\calX^\star)\gamma^2 }{m(7 - 30\delta_{(N+3)\ol r})\underline{\sigma}^2(\calX^\star)}\bigg).
\end{eqnarray}

\paragraph{Proof of \eqref{required condition for LN condition}} We can prove it by induction as used in the proof of \eqref{eq:ini-cond-for-XN-factorization}. First note that \eqref{eq:ini-cond-for-XN} holds for $t = 0$. We now assume it holds for all $t \le t'$, which implies that $\sigma_1^2({\calX^{(t')}}^{\<i \>}) =   \|{\calX^{(t')}}^{\geq i+1} \|^2\leq \frac{9\ol{\sigma}^2(\calX^\star)}{4}, i\in[N-1]$. By invoking \eqref{Simiplified Conclusion of Riemannian gradient descent in the noisy sensing}, we have
\begin{eqnarray}
   &\!\!\!\!\!\!\!\!& \text{dist}^2(\{\mX_i^{(t'+1)}\},\{\mX_i^\star\}) \nonumber\\
    &\!\!\!\! \le\!\!\!\!& \text{dist}^2(\{\mX_i^{(0)} \},\{ \mX_i^\star \}) + O\bigg(\frac{(N + \mu)(N+1+\sum_{i=2}^{N-1}r_i)(1+\delta_{(N+3)\ol r}) N^2\ol d\ol r^2(\log N) \ol{\sigma}^2(\calX^\star)\gamma^2 }{m(7 - 30\delta_{(N+3)\ol r})\underline{\sigma}^2(\calX^\star)}\bigg)\nonumber\\
   &\!\!\!\! \le\!\!\!\!& \frac{\underline{\sigma}^2(\calX^\star)}{180N},\nonumber
\end{eqnarray}
as long as $
    m\geq C \frac{N^5\ol d \ol r^3 (\log N) \ol{\sigma}^2(\calX^\star)\gamma^2}{\underline{\sigma}^4(\calX^\star)} $ with a universal constant $C$. Consequently, \eqref{required condition for LN condition} also holds  at $ t= t'+1$. By induction, we can conclude that \eqref{required condition for LN condition} holds for all $t\ge 0$. This completes the proof.

\end{proof}

\vskip 0.2in

\end{document}